\newtheorem{definition}{\textbf{Definition}}
\newtheorem{lemma}{\textbf{Lemma}}
\newtheorem{theorem}{\textbf{Theorem}}
\newtheorem{remark}{\textbf{Remark}}
\newtheorem{principle}{\textbf{Principle}}
\newcommand{\tW}{\tilde{W}}
\newcommand{\bbP}{\mathbb{P}}
\newcommand{\bbQ}{\mathbb{Q}}
\newcommand{\bbR}{\mathbb{R}}
\newcommand{\bbE}{\mathbb{E}}
\newcommand{\cN}{\mathcal{N}}
\newcommand{\cG}{\mathcal{G}}
\newcommand{\cF}{\mathcal{F}}
\DeclareMathOperator*{\argmax}{arg\,max}
\DeclareMathOperator*{\argmin}{arg\,min}
\title{Deconstructing Generative Adversarial Networks}
\author{Banghua Zhu, Jiantao Jiao, David Tse\thanks{Banghua Zhu and Jiantao Jiao are with the Department of Electrical Engineering and Computer Sciences, University of California, Berkeley. Email: \{banghua, jiantao\}@berkeley.edu. David Tse is with the Department of Electrical Engineering, Stanford University. Email: dntse@stanford.edu.}}
\date{\today}
\begin{document}

\maketitle

\begin{abstract}

Generative Adversarial Networks (GANs) are a thriving unsupervised machine learning technique that has led to significant advances in various fields such as computer vision, natural language processing, among others. However, GANs are known to be difficult to train and usually suffer from mode collapse and the discriminator winning problem. To interpret the empirical observations of GANs and design better ones, we deconstruct the study of GANs into three components and make the following contributions. 
\begin{itemize}
    \item Formulation: we propose a perturbation view of the population target of GANs. Building on this interpretation, we show that GANs can be connected to the robust statistics framework, and propose a novel GAN architecture, termed as \emph{Cascade GANs}, to provably recover meaningful low-dimensional generator approximations when the real distribution is high-dimensional and corrupted by outliers. 
    \item  Generalization: given a population target of GANs, we design a systematic principle, {\it projection under admissible distance}, to design GANs to meet the population requirement using only finite samples. We implement our principle in three cases to achieve polynomial and sometimes near-optimal sample complexities: (1) learning an arbitrary generator under an arbitrary pseudonorm; (2) learning a Gaussian location family under total variation distance, where we utilize our principle to provide a new proof for the near-optimality of the Tukey median viewed as GANs; (3) learning a low-dimensional Gaussian approximation of a high-dimensional arbitrary distribution under Wasserstein distance. We demonstrate a fundamental trade-off in the approximation error and statistical error in GANs, and demonstrate how to apply our principle in practice with only empirical samples to \emph{predict} how many samples would be sufficient for GANs in order not to suffer from the discriminator winning problem.
    \item Optimization: we demonstrate alternating gradient descent is provably not locally asymptotically stable in optimizing the GAN formulation of PCA. We diagnose the problem as the minimax duality gap being non-zero, and propose a new GAN architecture whose duality gap is zero, where the value of the game is equal to the previous minimax value (not the maximin value). We prove the new GAN architecture is globally asymptotically stable in solving PCA under alternating gradient descent.  
\end{itemize}
 
%From the three aspects, we provide sharp and powerful tools to diagnose GANs with issues such as mode collapse and discriminator winning. We provide several indications -- such as statistical insights, novel methodologies, and initial experimental results -- to suggest that our ultimate goals of improving the performances of GANs in a principled, versatile, and easy-to-implement manner in a variety of application domains.% are achievable within the time frame of this project. 

%a principled approach to design and analyze GANs from the generalization points of view, and address the fundamental generalization and optimization issues of GANs. 

%We focus on three aspects: (1) a general formulation for clarifying the target of GAN;(2) a principled method to alleviate mode collapse and achieve statistical optimality in the design of GANs through \emph{weakening} the distance; (3) a versatile approach to improve the architecture of GANs to stabilize training. 
%Our contribution is four-fold: (1) develop a theoretical framework for analyzing the generalization properties of GANs in high-dimensions; (2) suggest principled approaches to design GANs to achieve optimal statistical properties; (3) diagnose GANs when issues such as mode collapse or discriminator winning occur; (4) provide analysis on the optimization property of GANs. 

\textbf{Keywords:} Generative Adversarial Networks (GANs); Wasserstein distance; Optimal transport; Generalization error; Information-theoretic limit; Robust statistics. 
\end{abstract}
\tableofcontents
\newpage

\section{Introduction}

\label{sec:intro}

%In the modern era of big data, although the cost of labeling and analyzing a single data sample has been decreasing rapidly, it is usually outpaced by the unrivaled fast growth of dataset size, which makes it particularly timely to design unsupervised learning algorithms that are able to discover meaningful structures of data without extensive human efforts. 

Unsupervised learning has been studied extensively in the literature. It is intimately related to the problem of distribution (density) estimation in statistics~\cite{silverman2018density}, and dimensionality reduction~\cite{van2009dimensionality}. Recently, Generative Adversarial Networks (GANs) become a thriving unsupervised machine learning technique that has led to significant advances in various fields such as computer vision, natural language processing, among others~\cite{brock2018large, young2018recent}. In computer vision and natural language processing, GANs have resulted in superior empirical performance over standard generative models for images and texts, such as variational autoencoder and deep Boltzmann machine~\cite{goodfellow2014generative}. Various ideas have been proposed to further improve the quality of the learned distribution and the stability of the training~\cite{arjovsky2017wasserstein, durugkar2016generative, huang2017stacked, im2016generative, odena2016conditional, radford2015unsupervised, salimans2016improved, tolstikhin2017adagan, xu2017attngan, bai2018approximability, feizi2017understanding, DBLP:journals/corr/abs-1811-12402, sanjabi2018solving}. Given the success of GANs, there also exist challenges that need timely solutions. In particular, GAN training has been reportedly observed to be challenging, unstable, and the problems of \emph{mode collapse} and \emph{discriminator winning} frequently appear~\cite{che2016mode, salimans2016improved, neyshabur2017stabilizing,arora2017generalization, arora2018gans, dumoulin2016adversarially}. To understand the empirical observations in GAN training and propose theoretically near-optimal GANs algorithms, we \emph{deconstruct} the design of GANs into three fundamental aspects: {\bf formulation}, {\bf generalization} and {\bf optimization}, and develop systematic approaches to analyze them.

\subsection{Main contributions}

Our main contributions in this paper can be summarized as follows.

\begin{itemize}
  \item \textbf{Formulation}: we propose a perturbation view in designing the population target of GANs. Building on this interpretation, we show that GANs can be connected to the robust statistics framework in terms of perturbation beyond the total variation distance. We also propose a novel GAN architecture, termed as \emph{Cascade GANs}, to provably recover meaningful low-dimensional generator approximations when the real distribution is high-dimensional and corrupted by outliers. We also elaborate on the implicit assumptions used in~\cite{arjovsky2017wasserstein} that motivated the Wasserstein GAN. 
  \item \textbf{Generalization}: given a population target of GANs, we design a systematic principle, \emph{projection under admissible distance}, to construct GAN algorithms that achieve polynomial and sometimes near-optimal sample complexities given only finite samples. We study three cases in detail: 
  \begin{enumerate}
      \item learning an arbitrary generator under an arbitrary pseudonorm;
      \item learning Gaussian location family under total variation distance, where we provide a new proof for the near-optimality of Tukey median viewed as GANs;
      \item learning low-dimensional Gaussian approximations of a high-dimensional arbitrary distribution under Wasserstein-2 distance. 
  \end{enumerate}
  We demonstrate a fundamental trade-off in the approximation error and statistical error in GANs, and demonstrate how to apply our principle in practice with only empirical samples to \emph{predict} how many samples would be sufficient for GANs in order not to suffer from the discriminator winning problem.
  \item \textbf{Optimization}: we demonstrate alternating gradient descent is provably not \emph{locally} \emph{asymptotically} stable in optimizing the GAN formulation of PCA. We diagnose the problem as the minimax duality gap being non-zero, and propose a new GAN architecture whose duality gap is zero, and the value of the game is equal to the previous minimax value (not the maximin value). We prove the new GAN architecture is \emph{globally asymptotically} stable in solving PCA under alternating gradient descent.  
\end{itemize}

\medskip		
\noindent {\bf Organization of paper.}
The rest of the paper is organized as follows. In Section \ref{sec:formulation}, we discuss the formulation of GANs from a perturbation view, and propose \emph{Cascade GANs} for capturing the sequential perturbation model.   In Section \ref{sec:generalization}, we study the generalization property of GANs. In Section \ref{sec:computation}, we study the optimization property of GANs. %In Section \ref{sec:related_work}, other related works on formulation and training are discussed. %And in Section \ref{sec:discussion}, the formulation of GANs and the relationship between three aspects and mode collapse are discussed.
All the proofs are deferred to the appendices.

\medskip		
\noindent {\bf Notations.}
Throughout this paper, we use capital letter $X$ for random variable, blackbold letter $\bbP$ for probability distribution. Generally, the distribution for random variable $X$ is denoted as $\bbP_X$, and the corresponding empirical distribution from $\bbP_X$ with $n$ samples is $\hat{\bbP}_X^n$. In all sections except for Section~\ref{sec:computation}, we use bold lower case letters for vectors, bold upper case letters for matrices. In Section~\ref{sec:computation}, we use lower case letters for vectors, upper case letters for matrices. We use $\mathbf{A}^\dagger$ to denote the pseudo inverse of matrix $\mathbf{A}$. Random vector $Z$ follows zero mean normal distribution with identity covariance. We use $r(\mathbf{A})$ to denote the rank of matrix $\mathbf{A}$, $R(\mathbf{A})$ to denote the range of matrix $\mathbf{A}$. We say $\mathbf{A} \geq 0$ if $\mathbf{A}$ is positive semidefinite, $\mathbf{A} > 0$ if $\mathbf{A}$ is positive definite. We use $\mathsf{TV}(\bbP,\bbQ) = \sup_{A} \bbP(A) - \bbQ(A)$ to denote the total variation distance between $\bbP$ and $\bbQ$. 
For non-negative sequences $a_\gamma$, $b_\gamma$, we use the notation $a_\gamma \lesssim_{\alpha} b_\gamma$ to denote that there exists a constant $C$
that only depends on $\alpha$ such that $\sup_\gamma \frac{a_\gamma}{b_\gamma}\leq C$, and $a_\gamma \gtrsim_{\alpha} b_\gamma$  is equivalent to $b_\gamma \lesssim_{\alpha} a_\gamma$.  When the constant $C$ is universal
we do not write subscripts for $\lesssim$ and $\gtrsim$. Notation $a_\gamma\asymp b_\gamma$ is equivalent to $a_\gamma \gtrsim b_\gamma$ and $a_\gamma \lesssim b_\gamma$. Notation $a_\gamma \gg b_\gamma$ means that $\liminf_{\gamma}\frac{a_\gamma}{b_\gamma}=\infty$, and $a_\gamma \ll b_\gamma$ is equivalent to $b_\gamma \gg a_\gamma$. 
%XXX [need edits, in the whole paper we did not follow completely this, also there are new stuff not introduced here]

\section{Formulation of GANs}\label{sec:formulation}

GANs aim at selecting a generator $g(\cdot)$ from some generator family $\cG$. The generator function takes Gaussian random noise $Z \sim \mathcal{N}(\mathbf{0},\mathbf{I}_r)$ as input and outputs a random variable which may lie in the space of images. Given \emph{infinite} samples from the real distribution $\bbP_X$, the goal of GANs can be understood intuitively as finding a generator $g$ whose distribution $\bbP_{g(Z)}$ is the \emph{closest} to the real distribution $\bbP_X$. At the minimum we would need a distance\footnote{This distance needs to be broadly understood as a function that may not satisfy the axioms of a distance function. } $L(\bbP_{g({Z})}, \bbP_X)$ for every $g\in \mathcal{G}$. Here $L(\cdot,\cdot): \mathcal{M}(\bbR^d)\times \mathcal{M}(\bbR^d) \mapsto \bbR_{\geq 0}$ is a function that characterizes the distance between distributions. Naturally with \emph{infinite} samples and \emph{unbounded} computation we would ideally like to solve for
\begin{equation}
\label{equ:formulation}
   {\hat g} = \argmin_{g\in\cG}L(\bbP_{g({Z})}, \bbP_X). 
\end{equation}

Note that the minimum in~(\ref{equ:formulation}) may be not achievable in general. However, we assume throughout this paper that it is attained, since the other case can be dealt with by a standard limiting argument. The formulation problem looks at the design of $L$ and $\mathcal{G}$. As a concrete example, the vanilla GAN~\cite{goodfellow2014generative} defines the distance function $L$ as the Jensen-Shannon divergence, and uses neural network as generator family. 

\subsection{A perturbation view}

Intuitively, at minimum the design of $L$ and $\mathcal{G}$ needs to satisfy that \textbf{
\begin{align}
    \inf_{g\in \mathcal{G}} L(\bbP_{g({Z})}, \bbP_X)\text{ is ``small''.}
\end{align}}
Indeed, were it not true, then one should certainly design another generator family $\mathcal{G}$ that can better approximate the real distribution $\bbP_X$. In other words, we implicitly assume \textbf{
\begin{align*}
    \text{The real distribution }\bbP_X\text{ is a ``slightly'' perturbed generator }g\in \mathcal{G} \text{ under } L. 
\end{align*} }
The perturbation view allows us to unify the field of robust statistics and GANs. In robust statistics, it is usually assumed that the observed distribution (in GANs, the true distribution $\bbP_X$) is a \emph{slightly perturbed} version of a generator member under \emph{total variation} distance. Hence, we can connect GANs to the robust statistics framework\footnote{For GAN, the perturbation metric and the metric to recover original distribution are the same, but for general robust statistics problem they might be different. The latter problem is beyond the scope of the paper.}. This viewpoint proves to be beneficial: indeed, we would later utilize a general analysis technique we developed for GANs to provide a new proof for the optimal statistical properties of the Tukey median in robust estimation. Our perturbation view of GANs is inspired by~\cite{gao2018robust} which constructed GAN architectures that are provably achieving the information theoretic limits in robust estimation of mean and covariances. However, it also immediately raises a conflict: it was argued in~\cite{arjovsky2017wasserstein} that the Jensen--Shannon divergence (similarly, the total variation) distance is \emph{not} supposed to be used as the distance function for GANs. How can we unify these two viewpoints? 

\subsection{Cascade GANs}

\begin{comment}

\begin{figure}[!htb]
    \centering
    \begin{subfigure}[t]{0.8\textwidth}
    \includegraphics[width=\linewidth]{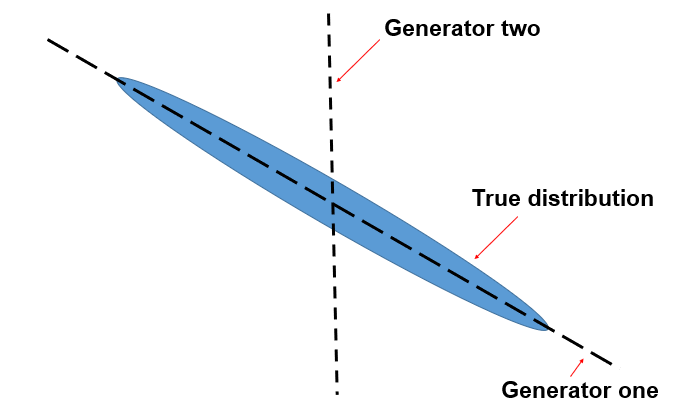}
     \end{subfigure}
    \caption{ok}
    \label{fig:2dgenerator}
\end{figure}

\end{comment}

\begin{figure}[!htb]
    \centering
    \begin{subfigure}[t]{0.45\textwidth}
    \includegraphics[width=\linewidth]{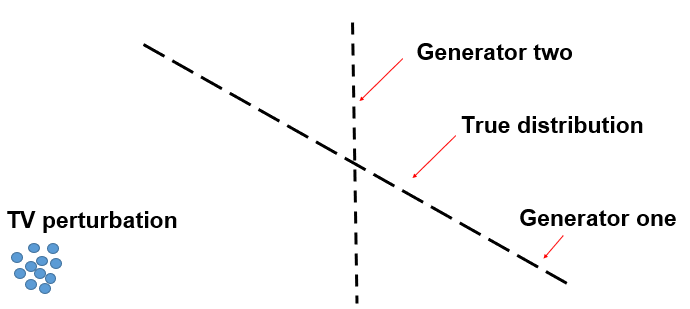}
     \end{subfigure}
         ~\begin{subfigure}[t]{0.45\textwidth}
    \includegraphics[width=\linewidth]{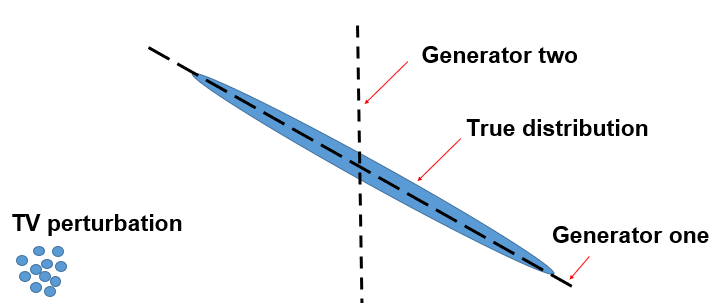}
     \end{subfigure}
    \caption{Illustration of the perturbation in the robust statistics setting, and the setting of applications of GANs. Left: the generator family is one-dimensional distributions. The real distribution is a mixture of 99\% of generator one and 1\% of the outliers marked as ``$\mathsf{TV}$ perturbation'' in the figure. Right: the generator family is one-dimensional distributions. The real distribution is a mixture of 99\% of the two-dimensional Gaussian whose support covers generator one, and 1\% of the outliers marked as ``$\mathsf{TV}$ perturbation''. Both the total variation distance and the Wasserstein distance between generator one (which intuitively is the correct generator to select in infinite samples) and the real distribution are gigantic.}
    \label{fig:twogenerator}
\end{figure}

A careful inspection of the arguments in~\cite{arjovsky2017wasserstein} shows that the total variation is inappropriate when the generator family consists of distributions with intrinsically low dimensions. Indeed, for any two low-dimensional distributions whose supports do not overlap, the total variation distance between them is the maximum value one. The Wasserstein distance is proposed in~\cite{arjovsky2017wasserstein} to be a distance that is continuous with respect to small local variations and is not sensitive to support mismatch. 

However, the usual robust statistics setting is closer to the left of Figure~\ref{fig:twogenerator}, where the observed distribution shares $99\%$ of the support of generator one, with $1\%$ outliers far away. Then, our goal is to successfully recover generator one given the corruption. 

A natural combination of the robust statistics formulation and the motivation of the Wasserstein GANs would give rise to the right of Figure~\ref{fig:twogenerator}, where neither total variation nor Wasserstein distance provides a meaningful description of the perturbation. Indeed, the generator family is the one-dimensional distributions, and the observed distribution $\bbP_X$ has $99\%$ of the points in the \emph{two}-dimensional distribution covering generator one, plus $1\%$ outliers. Ideally, the \emph{best} generator that fits this distribution should be generator one. However, both Wasserstein distance and total variation distance between generator one and the observed distribution are gigantic. This is precisely due to the fact that this perturbation is in fact a \emph{sequential} perturbation: we first perturb generator under \emph{Wasserstein distance} to obtain the two-dimensional distribution around generator one, and then we perturb it under \emph{total variation} distance to add the $1\%$ outlier. 

We naturally have the following \emph{Cascade GAN} architecture to capture this perturbation model:
\begin{align}\label{equ:cascaded}
    \min_{g \in \cG}    \min_{g' \in \cG'} \left (\mathsf{TV}(\bbP_X , \bbP_{g(Z)}) + \lambda W_2(\bbP_{g(Z)},\bbP_{g'(Z)}) \right ).
\end{align}
In the context of Figure~\ref{fig:twogenerator}, the final generator family $\mathcal{G}$ is one-dimensional distributions, and the intermediate generator family $\mathcal{G}'$ is full-dimensional distributions. By solving equation (\ref{equ:cascaded}), we are assuming that a low dimensional Gaussian distribution $\bbP_{g(Z)}$ is perturbed under $W_2$ distance to form a high dimensional Gaussian distribution $\bbP_{g'(Z)}$, and then perturbed under $\mathsf{TV}$ distance to get the observed distribution $\bbP_X$.

Generalizing the Cascade GAN architecture, we have multiple rounds of perturbation, and $\mathsf{TV}$ and $W_2$ can be replaced by any distance functions. A similar formulation was raised in~\cite{farnia2018convex}, where it was shown that sometimes Cascade GANs can be viewed as constraining the discriminator for vanilla GANs. The chained $\mathsf{TV}$ and $W_2$ perturbation problem may also be cast as a single perturbation under a new distance function $L(\bbP_X, \bbP_Y) =  \inf_\pi \bbE [\min(\|X-Y\|^2, {\bf 1}_{X\neq Y})]$. However, this would require a new design of GAN, while the Cascade GAN architecture is readily implementable given two GANs designed specifically for $\mathsf{TV}$ and $W_2$. We mention that AmbientGAN~\cite{bora2018ambientgan} is also proposed to learn under the chained perturbation model, but it requires that the second perturbation is simulatable.

%Our perturbation view is inspired by~\cite{gao2018robust}, who used GANs to derive schemes that are provably robustly information theoretically optimal under the Huber's contamination model~\cite{huber1964robust}.  

%The $d_{TV}$-$W_2$ combination is also related to the setting of robust PCA~\cite{candes2011robust} and compressive outlier sensing~\cite{carrillo2016robust}, where the data matrix is decomposed into a low rank matrix plus sparse noise perturbation. 

\subsection{Continuity with non-zero gradient}

This property requires that the $L$ we construct is neither too \emph{strong} nor too \emph{weak} for the family $\mathcal{G}$. Similar observations were made in~\cite{arjovsky2017wasserstein} that motivated the $W_1$ distance. For the sake of completeness, we illustrate this point via a different example here. Consider the left figure in Fig.~\ref{fig:onegenerator}. Here the generators are one-dimensional Gaussian distributions in high-dimensions, and we assume that generator one is the real distribution. To ensure that our GAN is able to find the correct generator one, it is natural to assume that the distance function $L(\bbP_{g(Z)}, \bbP_{X})$ is providing \emph{gradient} for us to locate the optimal generator. The middle figure of Figure~\ref{fig:onegenerator} shows the distance as a function of $\theta$ (the angle between these two one-dimensional Gaussians) under the Jensen--Shannon divergence, while the right figure in Figure~\ref{fig:onegenerator} shows that under the Wasserstein-2 distance. It is clear that the Jensen--Shannon divergence reaches zero if and only if $\theta = 0$, and then saturates for other values of $\theta$, while under $W_2$ it is a continuous function of $\theta$ with non-vanishing gradient almost everywhere. 

Intuitively speaking, the Jensen--Shannon divergence is a \emph{too strong} distance such that it always tells that the distributions are maximally different even if they are slightly apart. It is the saturation that causes the discontinuity and the gradient to vanish. We remark that constructing a too weak distance could also cause the gradient to vanish while being a continuous distance. For example, if we construct a distance like $L(\bbP,\bbQ) = \| \mathbb{E}_\bbP[X] - \mathbb{E}_\bbQ[X]\|$, then $L(\bbP_{g_1(Z)}, \bbP_{g_2(Z)})\equiv 0$ for the generator family of centered one-dimensional Gaussian distributions.

\begin{figure}[!htb]
    \centering
    \begin{subfigure}[t]{0.3\textwidth}
    \includegraphics[width=\linewidth]{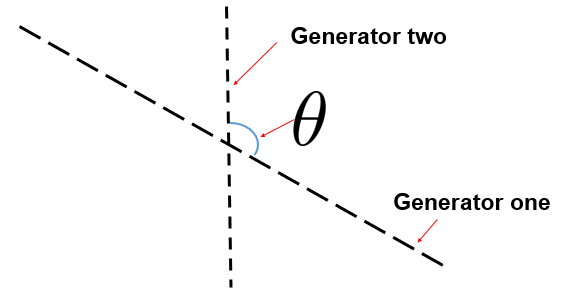}
     \end{subfigure}
     ~\begin{subfigure}[t]{0.3\textwidth}
    \includegraphics[width=\linewidth]{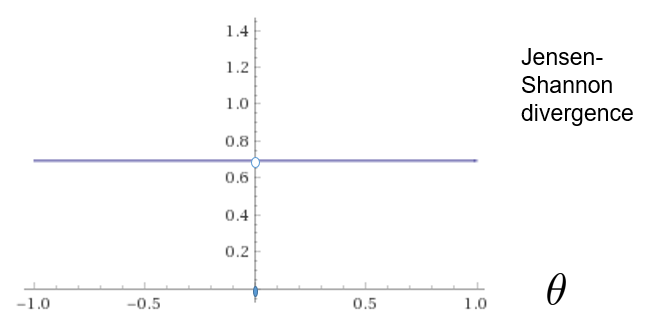}
     \end{subfigure}
       ~\begin{subfigure}[t]{0.3\textwidth}
    \includegraphics[width=\linewidth]{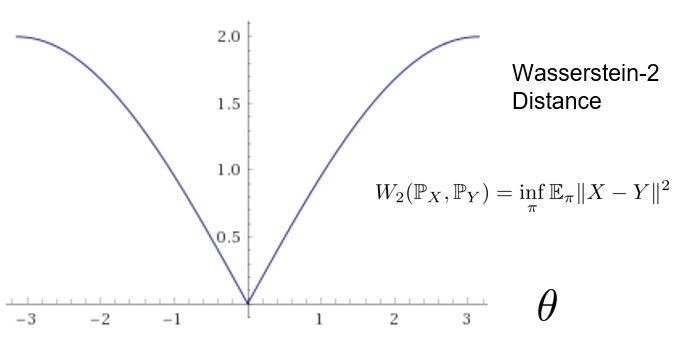}
     \end{subfigure}
    \caption{Plot of different discrepancy measures $L$ as a function of $\theta$. Left: the generator family is one-dimensional Gaussians, and we denote the angle between any generator with generator one as $\theta$. Middle: the plot of the Jensen--Shannon divergence as a function of $\theta$; Right: the plot of the $W_2$ distance as a function of $\theta$. It is clear that the Jensen--Shannon divergence reaches zero if and only if $\theta = 0$, and then saturates for other values of $\theta$, while under $W_2$ it is a continuous function of $\theta$ with non-vanishing gradient almost everywhere. }
    \label{fig:onegenerator}
\end{figure}

\section{Generalization of GANs}
\label{sec:generalization}

Given the population target of the minimization problem
\begin{align}\label{eqn.popuminimization}
    \min_{g\in \mathcal{G}} L(\bbP_{g(Z)}, \bbP_X),
\end{align} 
in this section we study the problem of \emph{approximately} solving this minimization problem given only finitely many samples from $\bbP_X$. Concretely, given $L$ and $\mathcal{G}$, we observe $n$ i.i.d. samples $X_1,X_2,\ldots,X_n \stackrel{\text{i.i.d.}}{\sim} P_X$, and produce a generator $\tilde{g}(\cdot) \in \cG$ such that with probability at least $1-\delta$, the following equation 
\begin{equation}
\label{eqn:generalization}
\mathop L(\bbP_{\tilde g(Z)}, \bbP_X) \leq C\cdot \inf_{g(\cdot)\in \mathcal{G}} L(\bbP_{g(Z)},\bbP_X) + \epsilon_n(\mathcal{G}, L, \bbP_X, \delta),
\end{equation}
holds. From now on, we define
\begin{align}\label{eqn.optdef}
    OPT = \inf_{g(\cdot)\in \mathcal{G}} L(\bbP_{g(Z)},\bbP_X)
\end{align}
as the oracle error. 

For fixed $L$ and $\mathcal{G}$, the oracle error is fixed. We focus our study on finding the best $\tilde{g}$ that minimizes both the factor $C$ on the oracle error, and also the stochastic error $\epsilon_n$ which should vanish as $n\to \infty$. Intuitively, the constant $C$ should be $1$, since when $n = \infty$ we can perform the minimization in~(\ref{eqn.popuminimization}) exactly to achieve it. Interestingly, we show that forcing $C = 1$, which is called \emph{sharp oracle inequality} in the literature~\cite{dalalyan2007aggregation}, may be fundamentally in trade-off with the stochastic error. This phenomenon is made precise in Theorem~\ref{thm:tightness}. 

We emphasize that in formulation~(\ref{eqn:generalization}), we have not specified the \emph{algorithm family} that generates  $\tilde{g}$. In this paper we mainly study $\tilde{g}$ that is produced by a GAN architecture, which proves to achieve near-optimal sample complexity in specific problems.

\subsection{The naive algorithm}

Intuitively, an approximate method to solve~(\ref{eqn.popuminimization}) is to replace the real distribution $\bbP_X$ with the empirical distribution $\hat{\bbP}_X^n$ formed with $n$ i.i.d. samples from $\bbP_X$ and perform
\begin{align}
   \min_{g\in \mathcal{G}} L(\bbP_{g(Z)}, \hat{\bbP}_X^n). 
\end{align}

It was observed in~\cite{arora2017generalization} if $L$ is the $W_1$ distance, then it requires number of samples $n$ exponential in the dimension $d$ of $X$ to ensure that $W_1(\hat{\bbP}_X^n, \bbP_X)$ vanishes. Hence, it intuitively suggests that $\hat{\bbP}_X^n$ is a \emph{bad} proxy for $\bbP_X$ for $n$ not big enough. Fig.~\ref{fig:illu_gan} provides an intuitive explanation for the poor performance of the naive projection algorithm. The poor performance occurs when the true distribution (on the boundary of the larger circle) is far from the empirical distribution (at the center), while one is \emph{able} to find another generator output distribution (on the boundary of the inner smaller circle) much closer to the center. In this case, the distance between the output generator distribution and the real distribution would be at least the difference between the radius of two circles. Completing the arguments, we show in Theorem~\ref{thm:naive_convergence_w2} that if $\cG$ is the Gaussian family and $L$ is $W_2$, indeed one \emph{can} find some distribution inside generator family that is closer to the empirical distribution but very far away from the real distribution.

\begin{figure}[hbtp!]
    \centering
    \includegraphics[width=0.4\linewidth]{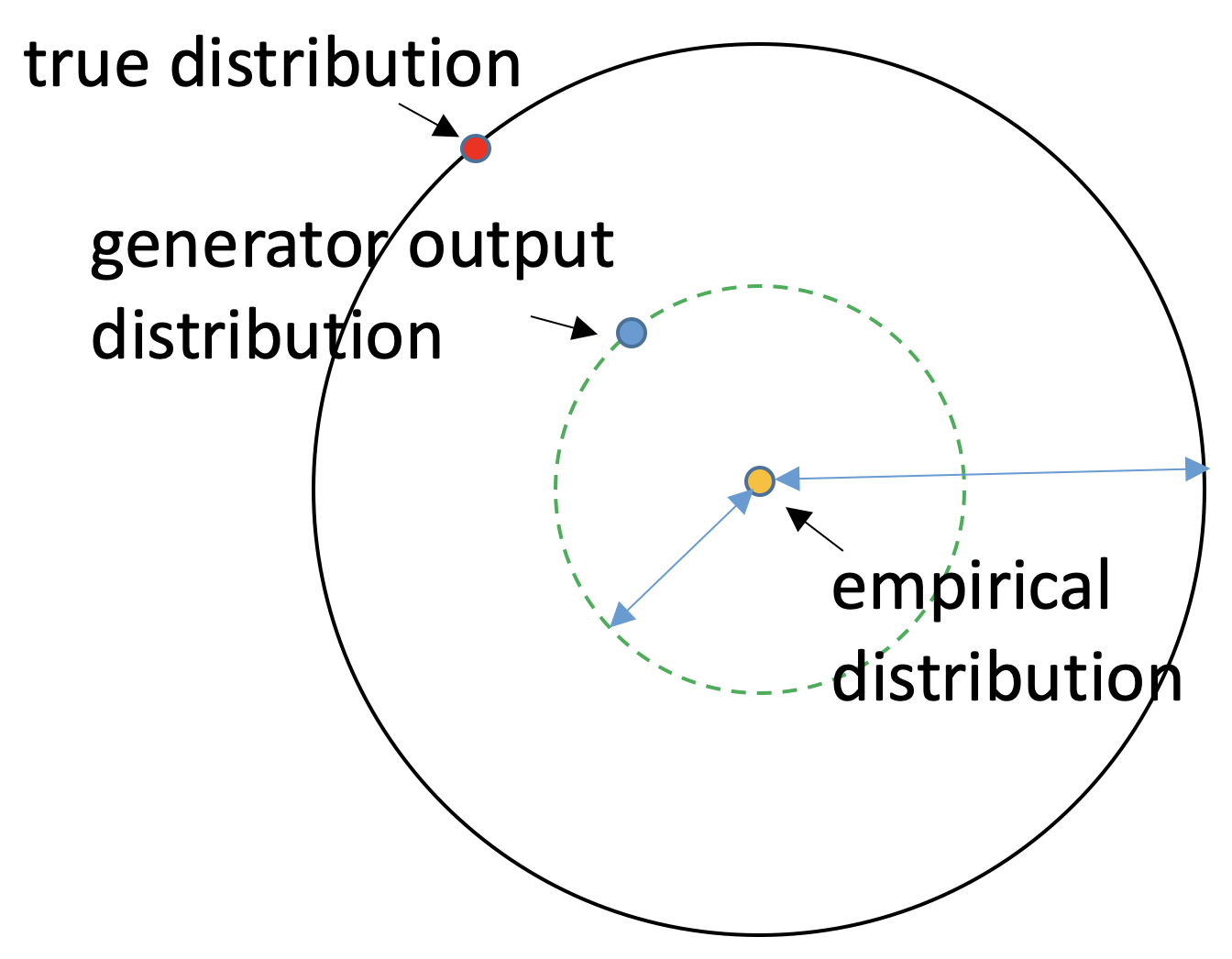}
    \caption{Illustration of poor performance of projecting the empirical distribution to the general family under distance $W_2$. The true distribution lies on the large circle, while one of the generator distributions could be much closer to empirical distribution. This leads to the large discrepancy between true distribution and the generator output distribution found by the naive algorithm.}
    \label{fig:illu_gan}
\end{figure}
\subsection{Projection under admissible distance}

\begin{comment}
Precisely, we propose to resolve this issue by designing a weakened distance to reduce the radius of the outer circle. Informally speaking, we propose to follow:
\begin{principle}\label{prin.stat}
Use the weakest\footnote{We say distance $L_1$ is weaker than distance $L_2$ if for any distribution $\bbP,\bbQ$, we have
\begin{align}
L_1(\bbP,\bbQ) \leq L_2(\bbP,\bbQ). 
\end{align}} distance that can only distinguish between different generators.
\end{principle}

By only being able to distinguish between different generators, we require the new distance $\tilde{L}$ to be very close to the original distance $L$ only when the two distributions are all inside the generator family. And we choose the \emph{weakest} distance that satisfies this condition as our weakened distance. Principle~\ref{prin.stat} may sound counter-intuitive. Indeed, we are always measuring the performances of learning through a fixed distance $L$, and we \emph{do not} assume that the true distribution $\bbP$ is a member of the generator family $\cG$. Nonetheless, projecting under the weakened distance can be shown to be optimal and achieve a significant sample complexity reduction compared to projecting under the original distance (from exponential to polynomial). 
\end{comment}

We propose a general method, termed as \emph{projection under admissible distance} to reduce the sample complexity, and for a wide range of problems to achieve polynomial, even near optimal sample complexities. Our key observation is that the population algorithm in~(\ref{eqn.popuminimization}) is \emph{not} the unique approach to achieve \emph{approximate} optimality in searching the best generator $g$, even in population. We have the following definition.  
\begin{definition}[Admissible distance]\label{def.admissible}
Suppose $L$ is a pseudometric\footnote{The definition of pseudometric is included in Appendix (\ref{appendix.def}).}. We say that the function $L'$ is \emph{admissible} with parameter $(c_1,c_2,L'')$ with respect to ($L,\mathcal{G}$) if it satisfies the following two properties:
\begin{enumerate}
    \item \textbf{Approximate resolution within generators:} for any $g_1,g_2 \in \mathcal{G}$, 
    \begin{align}
        c_1 (L'(\bbP_{g_1(Z)}, \bbP_{g_2(Z)})- L'(\bbP_{g_2(Z)}, \bbP_{g_2(Z)}))  \geq  L(\bbP_{g_1(Z)}, \bbP_{g_2(Z)}) -c_2,
    \end{align}
    \item \textbf{Robust to perturbation:} for any $g\in \mathcal{G}, \bbP_1,\bbP_2$, there exists a function $L''$ such that
    \begin{align}
        |L'(\bbP_{g(Z)}, \bbP_1) - L'(\bbP_{g(Z)}, \bbP_2)| \leq L''(\bbP_1, \bbP_2) \leq L(\bbP_1, \bbP_2),
    \end{align}
 %   and $L''$ is weaker than $L$:
 %   \begin{align}
 %       L''(\bbP_1,\bbP_2) \leq L(\bbP_1,\bbP_2). 
 %   \end{align}
\end{enumerate}
\end{definition}
 
In other words, the \emph{approximate resolution} requirement ensures that the new function $L'$ can distinguish between generators at least as well as the desired distance $L$, and the robustness property shows that observing a slightly different real distribution may not change the value of $L'$ by too much.  Of course, $L' = L$ is admissible with parameter $(1,0,L)$, but the key point of Definition~\ref{def.admissible} is that there exist a large variety of functions $L'$ that are admissible. Note that we do not require $L'$ to be a pseudometric.  %The requirement that $L''$ is weaker than $L$ guarantees that the convergence of empirical distribution to the real distribution under $L''$ happens as fast as possible. 

The next theorem shows that projecting under an admissible function achieves near optimality. 

\begin{theorem}\label{thm.populationinsights}
Suppose $L$ is a pseudometric, and $L'$ is admissible with parameter $(c_1,c_2,L'')$ with respect to $(L,\mathcal{G})$. Define
\begin{align} \label{eqn.lprojectionnewthm1}
    g' = \argmin_{g\in \mathcal{G}} L'(\bbP_{g(Z)}, \bbP_X).
\end{align}
Then, 
\begin{align}
    L(\bbP_{g'(Z)}, \bbP_X) \leq  (1+ 2c_1)OPT + c_2. 
\end{align}
Furthermore, suppose we observe $n$ i.i.d. samples from $\bbP_X$ and denote the empirical distribution formed by the samples as $\hat{\bbP}_X^n$. Define
\begin{align}
    g'' = \argmin_{g\in \mathcal{G}} L'(\bbP_{g(Z)}, \hat{\bbP}_X^n).
\end{align}
Then, 
\begin{align}
    L(\bbP_{g''(Z)}, \bbP_X) \leq  (1+2c_1)OPT + 2c_1 L''(\bbP_X,  \hat{\bbP}_X^n) + c_2 . 
\end{align}
\end{theorem}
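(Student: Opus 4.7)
The plan is a clean triangle-inequality + reduction argument that splits into an ``oracle part'' and a ``sampling part''. Fix any population-optimal generator $g^\ast \in \arg\min_{g\in\mathcal{G}} L(\bbP_{g(Z)},\bbP_X)$, so $L(\bbP_{g^\ast(Z)},\bbP_X)=OPT$. Since $L$ is a pseudometric, the triangle inequality gives
\begin{align*}
L(\bbP_{g'(Z)},\bbP_X)\leq L(\bbP_{g'(Z)},\bbP_{g^\ast(Z)})+OPT,
\end{align*}
and analogously for $g''$. Hence in both parts it suffices to control the within-generator distance $L(\bbP_{\hat g(Z)},\bbP_{g^\ast(Z)})$ where $\hat g\in\{g',g''\}$.

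Next I would invoke the \emph{approximate resolution} property to convert this $L$-distance into a difference of $L'$ values:
\begin{align*}
L(\bbP_{\hat g(Z)},\bbP_{g^\ast(Z)})\leq c_1\bigl(L'(\bbP_{\hat g(Z)},\bbP_{g^\ast(Z)})-L'(\bbP_{g^\ast(Z)},\bbP_{g^\ast(Z)})\bigr)+c_2.
\end{align*}
The idea from here is to swap the second argument of $L'$ from $\bbP_{g^\ast(Z)}$ to the distribution actually used in the projection (namely $\bbP_X$ for $g'$ and $\hat{\bbP}_X^n$ for $g''$), then use that $\hat g$ is the $L'$-minimizer against that distribution to kill the remaining difference. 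The \emph{robust to perturbation} property is the tool for this swap: applying it once to the pair $(\bbP_{g^\ast(Z)},\bbP_X)$ yields perturbation bounded by $L''(\bbP_{g^\ast(Z)},\bbP_X)\leq L(\bbP_{g^\ast(Z)},\bbP_X)=OPT$, once on each of the two $L'$ terms. This produces
\begin{align*}
L'(\bbP_{g'(Z)},\bbP_{g^\ast(Z)})-L'(\bbP_{g^\ast(Z)},\bbP_{g^\ast(Z)})\leq L'(\bbP_{g'(Z)},\bbP_X)-L'(\bbP_{g^\ast(Z)},\bbP_X)+2\,OPT,
\end{align*}
and the first difference is $\leq 0$ by the defining optimality of $g'$. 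Collecting everything gives $L(\bbP_{g'(Z)},\bbP_X)\leq(1+2c_1)OPT+c_2$, the first claim.

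For the second claim the same scheme runs, but the argument-swap from $\bbP_{g^\ast(Z)}$ to $\hat{\bbP}_X^n$ is done in two hops: $\bbP_{g^\ast(Z)}\to\bbP_X\to\hat{\bbP}_X^n$. Each hop invokes robustness once and contributes a term; the first hop contributes $L''(\bbP_{g^\ast(Z)},\bbP_X)\leq OPT$ and the second contributes $L''(\bbP_X,\hat{\bbP}_X^n)$. Applied on both $L'$ terms, this produces
\begin{align*}
L'(\bbP_{g''(Z)},\bbP_{g^\ast(Z)})-L'(\bbP_{g^\ast(Z)},\bbP_{g^\ast(Z)})\leq L'(\bbP_{g''(Z)},\hat{\bbP}_X^n)-L'(\bbP_{g^\ast(Z)},\hat{\bbP}_X^n)+2\,OPT+2L''(\bbP_X,\hat{\bbP}_X^n),
\end{align*}
after which optimality of $g''$ against $\hat{\bbP}_X^n$ kills the empirical difference, and the outer triangle inequality restores the $+OPT$ to obtain $L(\bbP_{g''(Z)},\bbP_X)\leq(1+2c_1)OPT+2c_1 L''(\bbP_X,\hat{\bbP}_X^n)+c_2$.

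There is no real obstacle here beyond careful bookkeeping; the only subtlety is recognizing that the non-symmetric offset $L'(\bbP_{g^\ast(Z)},\bbP_{g^\ast(Z)})$ appearing in the approximate-resolution axiom is \emph{necessary} because $L'$ need not be a pseudometric (and need not vanish on the diagonal), and that two invocations of the robustness axiom are required in the sample version so that the $L''(\bbP_X,\hat{\bbP}_X^n)$ term (rather than something like $L''(\bbP_{g^\ast(Z)},\hat{\bbP}_X^n)$) appears on the right-hand side; this is what makes the bound uniform over the unknown $g^\ast$ and thus genuinely useful for generalization.
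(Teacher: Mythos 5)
Your proposal is correct and matches the paper's argument essentially step for step: outer triangle inequality to reduce to $L(\bbP_{\hat g(Z)},\bbP_{g^\ast(Z)})$, approximate resolution to pass to an $L'$-difference, robustness applications to swap the second argument from $\bbP_{g^\ast(Z)}$ toward the projection target (via $\bbP_X$, and then $\hat{\bbP}_X^n$ in the empirical case), and $L'$-optimality of $\hat g$ to kill the residual comparison, recovering $2\,OPT$ and $2L''(\bbP_X,\hat{\bbP}_X^n)$ as the overhead. The only cosmetic difference is that you apply robustness symmetrically to both $L'$ terms before invoking optimality, whereas the paper threads the same four robustness applications through a single inequality chain, so the two proofs are an accounting reshuffle of the same argument.
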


\begin{remark}
Theorem~\ref{thm.populationinsights} shows that projecting the empirical distribution $\hat{\bbP}_X^n$ to $\mathcal{G}$ under $L'$ still achieves $O(OPT)$ approximation error, and the finite sample effect is reflected by the convergence of $L''(\bbP_X,  \hat{\bbP}_X^n)$ but not $L(\bbP_X,  \hat{\bbP}_X^n)$, which could be significantly faster. In concrete examples, we aim at minimizing $c_1$, and balancing $c_2$ with $c_1 L''(\bbP_X,  \hat{\bbP}_X^n)$.
\end{remark}

\begin{remark}
Theorem~\ref{thm.populationinsights} characterizes the performance of projecting under any admissible $L'$. Training under an alternative discrepancy measure is already observed to be beneficial in practice. It is claimed in~\cite{rainforth2018tighter} that training under a tighter evidence lower bounds can be detrimental to the process of learning an inference network by reducing the signal-to-noise ratio of the gradient estimator for variational autoencoder. In~\cite{farnia2018convex} and~\cite{liu2017approximation}, using a restricted discriminator family is shown to be related to a moment-matching effect. The recent work of~\cite{bai2018approximability} is most related to us, where a notion of ``conjoining discriminator'' is developed, while we emphasize that the results in~\cite{bai2018approximability} only applies to cases where $OPT = 0$, and the conjoining discriminator family is derived for a few specific generators with $L$ being the $W_1$ distance.
\end{remark}

The proof is deferred to Appendix (\ref{proof.populationinsights}).

\subsection{Arbitrary $\mathcal{G}$ and pseudonorm $L$} 

Suppose $L(\bbP_1,\bbP_2)$ can be written as $\| \bbP_1 - \bbP_2\|_L$, where $\| \cdot \|_L$ is a pseudonorm\footnote{The definition of pseudonorm is included in Appendix (\ref{appendix.def}).} on a vector space $\mathcal{M}$ which is a subset of signed measures on $\mathbb{R}^d$. We have the following representation of $\| \cdot \|_L$. 
\begin{lemma}\label{lemma.pseudonormrepres}
Assume appropriate topology on $\mathcal{M}$ such that any continuous linear functional of $\mu\in \mathcal{M}$ can be written as $\int fd\mu$ for some measurable $f$, and for any $\mu \in \mathcal{M}$, the linear functional $M: \alpha \mu \mapsto \alpha \|\mu\|_L$ is continuous. For any measurable function $f$ on $\mathbb{R}^d$, define
\begin{align}
    \| f\|_{L^*} = \sup_{\| \mu\|_L = 1, \mu \in \mathcal{M}} \int f d\mu.
\end{align}
Then, we can represent $\| \cdot \|_L$ as an integral probability metric:
\begin{align}
    \| \mu \|_L = \sup_{f: \|f\|_{L^*} = 1} \int f d\mu.
\end{align}
\end{lemma}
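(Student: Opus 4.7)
The plan is to establish the dual representation as a Hahn--Banach style duality on the pseudonormed space $(\mathcal{M},\|\cdot\|_L)$, by proving two matching inequalities.

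First I would verify the easy direction $\|\mu\|_L \geq \sup_{\|f\|_{L^*}=1}\int f\,d\mu$. Directly from the definition of $\|\cdot\|_{L^*}$, for any $f$ with $\|f\|_{L^*}=1$ and any $\nu\in\mathcal{M}$ with $\|\nu\|_L>0$, rescaling to $\nu/\|\nu\|_L$ yields $\int f\,d\nu \leq \|\nu\|_L$; symmetry of the unit sphere $\{\nu:\|\nu\|_L=1\}$ under $\nu\mapsto -\nu$ gives $\|-f\|_{L^*}=\|f\|_{L^*}$, so the analogous lower bound $\int f\,d\nu\geq -\|\nu\|_L$ also holds. For the degenerate case $\|\nu\|_L=0$, I would pick any $\nu_1$ with $\|\nu_1\|_L=1$, invoke the triangle inequality to bound $\int f\,d(t\nu+\nu_1)\leq\|t\nu+\nu_1\|_L\leq 1$ for every $t\in\mathbb{R}$, and send $t\to\pm\infty$ to conclude $\int f\,d\nu=0$. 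Taking the supremum over admissible $f$ then yields the first inequality.

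For the reverse direction, I would fix $\mu_0$ with $\|\mu_0\|_L>0$ and construct an extremal $f_0$ via Hahn--Banach; the case $\|\mu_0\|_L=0$ follows from the previous step since the supremum was already shown to be zero. I would define the linear functional $M_0$ on the one-dimensional subspace $\mathbb{R}\mu_0$ by $M_0(\alpha\mu_0)=\alpha\|\mu_0\|_L$. This $M_0$ is continuous by the second hypothesis and satisfies $|M_0(\nu)|=\|\nu\|_L$ on its domain. The seminorm version of the Hahn--Banach theorem then extends $M_0$ to a linear functional $\widetilde M$ on all of $\mathcal{M}$ still dominated by $\|\cdot\|_L$, i.e.\ $|\widetilde M(\nu)|\leq\|\nu\|_L$ for every $\nu\in\mathcal{M}$. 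This domination forces continuity of $\widetilde M$ in the assumed topology on $\mathcal{M}$, so by the representation hypothesis there exists a measurable $f_0$ with $\widetilde M(\nu)=\int f_0\,d\nu$. Reading the domination back as a supremum gives $\|f_0\|_{L^*}\leq 1$, and testing against $\nu_0=\mu_0/\|\mu_0\|_L$ shows $\int f_0\,d\nu_0=1$, so $\|f_0\|_{L^*}=1$. Therefore $\sup_{\|f\|_{L^*}=1}\int f\,d\mu_0 \geq \int f_0\,d\mu_0 = \widetilde M(\mu_0) = \|\mu_0\|_L$, matching the first step's inequality.

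The hardest part I anticipate lies in reconciling the deliberately informal ``appropriate topology'' hypothesis with the two places I invoke it: (i) the Hahn--Banach extension $\widetilde M$ must lie in the class of functionals for which the integral representation is assumed available, and (ii) the domination of a linear functional by $\|\cdot\|_L$ must actually suffice for continuity in that topology. Both points are implicit in the lemma's hypotheses, but isolating the minimal compatibility conditions that make the argument rigorous is the real technical content of the proof.
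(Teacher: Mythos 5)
Your proof is correct and follows essentially the same Hahn--Banach route as the paper: define the functional $\alpha\mu_0 \mapsto \alpha\|\mu_0\|_L$ on the one-dimensional span of $\mu_0$, extend it to all of $\mathcal{M}$ while keeping the domination $|\widetilde{M}| \leq \|\cdot\|_L$, and invoke the integral-representation hypothesis to produce the extremal $f_0$. The additional care you take with the degenerate case $\|\mu\|_L = 0$ and your explicit flagging of the topological compatibility needed for $\widetilde{M}$ to be continuous are refinements, not a different argument.
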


Lemma~\ref{lemma.pseudonormrepres} is a consequence of the Hahn---Banach Theorem~\cite[Theorem 4.2]{kreyszig1978introductory}. The proof is deferred to Appendix~\ref{proof.pseudonormrepres}. Note that we already know any integral probability metric is a pseudonorm.

In fact, the pseudonorm includes a wide range of discrepancy measures used in various GANs. As a concrete example, MMD-GAN~\cite{li2017mmd} minimizes the kernel maximum mean discrepancy distance, which is a pseudonorm and thus can be further studied under our framework.
Denoting

\begin{align}
\cF = \{f: \|f\|_{L^*} = 1\},     
\end{align}
we can write
\begin{align}
    L(\bbP_1, \bbP_2) = \sup_{f\in \cF} \int f d\bbP_1 - \int f d\bbP_2. 
\end{align}

We now design an admissible distance for $L$. Construct the $\epsilon$-covering of $\mathcal{G}: \mathcal{G}_\epsilon = \{g_1, g_2, \cdots, g_{N(\epsilon)}\}$ such that $\forall g \in \mathcal{G}, \exists g_i \in \mathcal{G}_\epsilon,  L(\bbP_{g_i(Z)},\bbP_{g(Z)})\leq\epsilon$. We define 
\begin{equation}
\label{equ:discriminator_family}
\mathcal{F}_\epsilon = \{f_{ij}: f_{ij} = \argmax_{f\in\mathcal{F}}\int f d(\bbP_{g_i(Z)}-\bbP_{g_j(Z)}), 1\leq i,j\leq N(\epsilon)  \}.
\end{equation}

In other words, interpreting $\cF$ as the family of discriminators, in $\mathcal{F}_{\epsilon}$ we only include those discriminators that are \textbf{optimally discriminating} between members in the $\epsilon$-covering of $\cG$. In some sense, $\mathcal{F}_\epsilon$ is the \emph{minimal} set of useful discriminators for the generator family $\cG$. Naturally, it leads to the definition of $L'$ as
\begin{align}\label{eqn.weakendesignfornorms}
L'(\bbP_X,\bbP_Y) = \sup_{f \in \mathcal{F}_\epsilon} \int f d(\bbP_X - \bbP_Y). 
\end{align}

We have the following result. 
\begin{theorem}
\label{thm:new_convergence}
Assuming the true distribution is $\bbP_X$, let the GAN estimator be
\begin{equation}\label{eqn.tildewfopt}
\tilde g = \argmin_{g(\cdot)\in \mathcal{G}} L'(\bbP_{g(Z)}, {\hat \bbP_X^n}),
\end{equation}
where $L'$ is defined in~(\ref{eqn.weakendesignfornorms}). Then
\begin{align}
L(\bbP_{\tilde g(Z)}, \bbP_X) & \leq 3\inf_{g(\cdot)\in \mathcal{G}} L(\bbP_{g(Z)},\bbP_X) + 2 L'(\bbP_X, \hat{\bbP}_X^n) + 4\epsilon.%C \sqrt{\log(\frac{2}{\delta})} \cdot (\frac{d \log n}{n})^{\frac{1}{2}}
\end{align}
Furthermore, suppose the $\epsilon$-covering number $N(\epsilon)$ of $\cG$ under $L$ satisfies $N(\epsilon) \leq C \left ( \frac{1}{\epsilon} \right )^d$, and there exists a constant $B$ such that for any $f\in \cF_\epsilon$, 
\begin{align}
\bbP_X \left (\bbE [f(X)] - \frac{1}{n} \sum_{i=1}^n f(X_i) > t \right ) \leq 2 \exp \left (-\frac{nt^2}{2B^2}\right ).
\end{align} 

Then, taking $\epsilon = \left (\frac{d\log n}{n} \right )^\frac{1}{2}$, with probability at least $1-\delta$, there exists some constant $D$ depending on $B,C$ such that
\begin{align}
L(\bbP_{\tilde g(Z)}, \bbP_X) \leq 3\inf_{g(\cdot)\in \mathcal{G}} L(\bbP_{g(Z)},\bbP_X) + D(B,C)\cdot \left (\frac{1}{n}\max \left (d \log( n), \log \frac{1}{\delta} \right ) \right )^\frac{1}{2}.%C \sqrt{\log(\frac{2}{\delta})} \cdot (\frac{d \log n}{n})^{\frac{1}{2}}
\end{align}
\end{theorem}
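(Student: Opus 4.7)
The plan is to apply Theorem~\ref{thm.populationinsights} to the specific $L'$ defined in~(\ref{eqn.weakendesignfornorms}) by verifying it is admissible with respect to $(L, \mathcal{G})$ with parameters $c_1 = 1$, $c_2 = 4\epsilon$, and $L'' = L'$. Substituting these constants into the second conclusion of that theorem yields the first inequality $L(\bbP_{\tilde g(Z)}, \bbP_X) \leq 3\cdot OPT + 2 L'(\bbP_X, \hat{\bbP}_X^n) + 4\epsilon$ immediately. The remainder of the theorem is then a concentration argument controlling the empirical-process supremum $L'(\bbP_X, \hat{\bbP}_X^n)$ over the \emph{finite} family $\cF_\epsilon$.

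To verify the \emph{approximate resolution} property, I would fix $g_1, g_2 \in \mathcal{G}$ and pick cover representatives $g_i, g_j \in \mathcal{G}_\epsilon$ with $L(\bbP_{g_1(Z)}, \bbP_{g_i(Z)}) \leq \epsilon$ and $L(\bbP_{g_2(Z)}, \bbP_{g_j(Z)}) \leq \epsilon$. By construction the discriminator $f_{ij} \in \cF_\epsilon$ attains $\int f_{ij}\,d(\bbP_{g_i(Z)} - \bbP_{g_j(Z)}) = L(\bbP_{g_i(Z)}, \bbP_{g_j(Z)})$, and Lemma~\ref{lemma.pseudonormrepres} guarantees $\bigl|\int f\,d(\bbP - \bbQ)\bigr| \leq \|f\|_{L^*}\, L(\bbP,\bbQ) = L(\bbP,\bbQ)$ for every $f \in \cF$. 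Splitting across the cover and using the triangle inequality for $L$ gives
\begin{align}
L'(\bbP_{g_1(Z)}, \bbP_{g_2(Z)}) \;\geq\; \int f_{ij}\,d(\bbP_{g_1(Z)} - \bbP_{g_2(Z)}) \;\geq\; L(\bbP_{g_i(Z)}, \bbP_{g_j(Z)}) - 2\epsilon \;\geq\; L(\bbP_{g_1(Z)}, \bbP_{g_2(Z)}) - 4\epsilon,
\end{align}
while $L'(\bbP_{g_2(Z)}, \bbP_{g_2(Z)}) = 0$ because every integrand vanishes on the zero measure. For the \emph{robustness} property with $L'' = L'$, I would note that because $\mathcal{M}$ is a vector space, $\cF$ is symmetric under negation, so $\cF_\epsilon$ is symmetric and $L'$ is a bona-fide pseudometric; then $|L'(\bbP_{g(Z)}, \bbP_1) - L'(\bbP_{g(Z)}, \bbP_2)| \leq L'(\bbP_1, \bbP_2) \leq L(\bbP_1, \bbP_2)$ by the triangle inequality and the containment $\cF_\epsilon \subseteq \cF$.

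For the explicit rate, since $|\cF_\epsilon| \leq N(\epsilon)^2 \leq C^2 \epsilon^{-2d}$, the sub-Gaussian tail hypothesis combined with a union bound (using symmetry of $\cF_\epsilon$ to convert a one-sided tail into a two-sided one) yields
\begin{align}
\bbP\!\left( L'(\bbP_X, \hat{\bbP}_X^n) > t \right) \;\leq\; 4 C^2 \epsilon^{-2d} \exp\!\left(-\frac{n t^2}{2 B^2}\right),
\end{align}
and inverting gives $L'(\bbP_X, \hat{\bbP}_X^n) \lesssim_{B,C} \sqrt{(d \log(1/\epsilon) + \log(1/\delta))/n}$ with probability at least $1 - \delta$. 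Choosing $\epsilon = \sqrt{d \log n / n}$ balances the $4\epsilon$ and $2 L'$ contributions and produces the stated bound. The main obstacle I expect is extracting $c_1 = 1$ (not $2$ or larger) in the approximate resolution step, since any multiplicative loss there propagates directly onto the leading $OPT$ term. The key trick is to pick $f_{ij}$ as the \emph{optimal} separator of the covering centers $g_i, g_j$, not an approximate optimal separator of $g_1, g_2$ themselves, which confines the entire covering error to the additive $4\epsilon$ term. A secondary, easily-missed point is the careful treatment of diagonal indices $i = j$ (so that $f_{ii}$ can be taken to be the zero function), ensuring $L'$ is nonnegative and vanishes on identical arguments.
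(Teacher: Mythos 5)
Your proof follows essentially the same route as the paper's: verify that $L'$ is admissible with respect to $(L,\cG)$ with parameters $(c_1,c_2,L'') = (1, 4\epsilon, L')$, invoke Theorem~\ref{thm.populationinsights} for the first inequality, and then bound $L'(\bbP_X,\hat\bbP_X^n)$ by a union bound over the finite class $\cF_\epsilon$ using the sub-Gaussian tail and the covering-number hypothesis. Your verification of the approximate-resolution step is phrased slightly differently (you work directly through the integral representation of $L'$ and the duality bound $|\int f\,d\mu|\le\|f\|_{L^*}\|\mu\|_L$, whereas the paper chains triangle inequalities for $L$ and $L'$ using $L'\le L$ and the exactness of $L'$ on $\cG_\epsilon$), but both arguments land on the same constants and the same appeal to Theorem~\ref{thm.populationinsights}.
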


\begin{remark}
Theorem~\ref{thm:new_convergence} relies on two assumptions: the covering number of $\cG$ and the concentration properties of $f(X)$. The behavior of the assumed covering number is common in practice for parametric families~\cite{devroye2012combinatorial}. On the concentration properties, for any bounded function $f$, it follows from the Hoeffding inequality~\cite{hoeffding1963probability} that the sub-Gaussian behavior is true. In another case, if $f(\cdot)$ is a Lipschitz function and the distribution of $X$ satisfies the logarithmic Sobolev inequality \cite{ledoux1999concentration}, the result also holds. 
\end{remark}

The proof is deferred to Appendix (\ref{proof.new_convergence}). Theorem~\ref{thm:new_convergence} provide insights into practice. It implies 
\begin{enumerate}
\item The number of parameters of the discriminator family should be \emph{at most} that of the generator family since it suffices to only being able to distinguish between generator family distributions;
\item As the number of samples $n$ increases, we need to gradually increase the parameters (discrimination power) of the discriminators, as shown by the formula $\epsilon = \left ( \frac{d \log n}{n} \right )^{1/2}$ in Theorem~\ref{thm:new_convergence}; 
\item Whenever \emph{discriminator wins} phenomenon occurs, which means that in solving the projection estimator the discriminator can \emph{always} distinguish between the real and forged data, \emph{reduce} the number of parameters of the discriminator and it will \emph{not hurt} the learning performance as long as $L'$ is admissible;
\item As long as $L(\cdot,\cdot)$ is strong enough such that proximity under $L$ ensures mode collapse does not happen, projecting under $L'$ prohibits mode collapse. 
\end{enumerate}

\subsection{Robust estimation: Gaussian location model and total variation distance}

Consider the generator family being Gaussian location model and discrepancy measure being total variation distance, i.e., $\cG = \{\mathcal{N}(\bm{\mu}, \mathbf{I}):\bm{\mu}\in \mathbb{R}^d \}, L = \mathsf{TV}$. We have allowed ourselves to slightly abuse notation here to denote $\cG$ by the family of probability measures rather than the family of functions. We denote by $\bbP_g$ some member in the generator family $\cG$. In the robust statistics literature, it is usually assumed that there exists some distribution $g^* \in \cG$, and we observe $\bbP_X$ such that 
\begin{align}
    \mathsf{TV}(\bbP_X, \bbP_{g^*})
\end{align}
is small. Since $g^*$ is unknown, we can equivalently state this assumption as 
\begin{align}
    OPT = \inf_{g\in\cG}\mathsf{TV}(\bbP_X,\bbP_g)
\end{align}
is small. This model is understood in the robust statistics literature as the \emph{oblivious adversary contamination} model which is stronger than the Huber additive contamination model since it allows "deletion". 

In robust statistics, the ultimate goal is that given empirical observations from $\bbP_X$, we would like to obtain an estimate $\hat{g}$ such that $\mathsf{TV}(\bbP_{\hat{g}}, \bbP_{g^*})$ is small. Our observation is that it suffices to guarantee that $\mathsf{TV}(\bbP_{\hat{g}}, \bbP_X)$ is small, which is the goal of GANs. Indeed, it follows from the triangle inequality that
\begin{align}
    \mathsf{TV}(\bbP_{\hat{g}}, \bbP_{g^*}) & \leq \mathsf{TV}(\bbP_{\hat{g}}, \bbP_X) + \mathsf{TV}(\bbP_X, \bbP_{g^*}) \\
    & \leq OPT + \mathsf{TV}(\bbP_{\hat{g}}, \bbP_X). 
\end{align}

Utilizing the general framework of admissible distances, we present proofs for two GAN architectures that are both admissible distances of ($\mathsf{TV}$, $\cG$), and both achieve optimal statistical error up to a universal constant. 

\subsubsection{Admissible distance with parameter $(1,0,\mathsf{TV}')$}

%Denote $\{g_{i}:i\in[k]\}$ as the set of candidate distribution to search. We want to tell which candidate is good enough to make the total variation distance in equation (\ref{equ:tvdistance}) small.
We first construct an admissible distance for $(\mathsf{TV}, \cG)$ with parameter $(1,0,\mathsf{TV}')$. Define
\begin{align}\label{equ:def_yatracos}
   \mathsf{TV}'(\bbP_g, \bbP_X)  = \sup_{\|v\| = 1, b\in \mathbb{R}^d} \bbP_X( v^T(X-b)\geq 0) - \bbP_g(v^T (X-b) \geq 0). 
\end{align}

\begin{comment}
We construct the weakened distance by constraining the set of $A$ in the definition of total variation distance. 
Denote $A_{ij}$ as the set $\{\mathbf{x}\in \mathbb{R}^d: \bbP_{g_{i}}(\mathbf{x}) > \bbP_{g_{j}}(\mathbf{x}), g_i, g_j \in \cG \}$, $\mathcal{A} = \{A_{ij}\}$. When $g_i, g_j$ takes all possible generators in $\cG = \{\mathcal{N}(\mu,\mathbf{I}):\mu\in\mathbb{R}^d\}$, $\mathcal{A}$ is the set of all halfspaces $\{ \{\mathbf{x}\in\mathbb{R}^d:\left <\mathbf{v},\mathbf{x}\right>+\mathbf{b}>0\}:\|\mathbf{v}\|_2 =1,\mathbf{b}\in\mathbb{R}^d\}$.
The weakened distance is thus defined as The optimality of projection under $\mathsf{TV}'$ is guaranteed in the following theorem:
\end{comment}

\begin{theorem}
\label{thm:yatracos_weakened}
Let the GAN estimator be
\begin{align}
\label{equ:yatracos_weakened}
    \tilde g = \argmin_{g\in\cG}  \mathsf{TV}'(\bbP_g, \hat \bbP_X^n),
\end{align}
where $\mathsf{TV}'$ is defined in (\ref{equ:def_yatracos}). Then, $\mathsf{TV}'$ is an admissible distance of $(\mathsf{TV},\mathcal{G})$ with parameter $(1,0,\mathsf{TV}')$, and with probability at least $1-\delta$, there exists some universal constant $C$, such that 
\begin{align}
    \mathsf{TV}(\bbP_{\tilde g}, \bbP_{X})\leq 3\cdot OPT +  C\cdot \sqrt{\frac{d+1}{n}}+ 2\sqrt{\frac{\log(1/\delta)}{2n}}.
\end{align}
\end{theorem}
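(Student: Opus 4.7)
The plan is to verify that $\mathsf{TV}'$ is admissible for $(\mathsf{TV},\cG)$ with parameter $(1,0,\mathsf{TV}')$ and then invoke Theorem~\ref{thm.populationinsights}, after which only a VC-style uniform deviation bound for halfspaces over the true versus empirical distribution remains.

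First I would verify the two conditions in Definition~\ref{def.admissible}. For the \textbf{approximate resolution within generators} condition, note $\mathsf{TV}'(\bbP_{g_2},\bbP_{g_2})=0$, so it suffices to show $\mathsf{TV}'(\bbP_{g_1},\bbP_{g_2})\geq \mathsf{TV}(\bbP_{g_1},\bbP_{g_2})$ for any $g_1,g_2\in\cG$. Since each $\bbP_{g_i}=\mathcal{N}(\bm{\mu}_i,\mathbf{I})$, the Neyman--Pearson likelihood-ratio set attaining $\mathsf{TV}$ is $\{x:\log(d\bbP_{g_1}/d\bbP_{g_2})(x)\geq 0\}$, which for Gaussians with common covariance is \emph{exactly} a halfspace of the form $\{x:v^{\top}(x-b)\geq 0\}$. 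Hence the supremum over halfspaces in~\eqref{equ:def_yatracos} in fact equals $\mathsf{TV}$ on $\cG\times\cG$. For the \textbf{robustness to perturbation} condition, applying the triangle inequality inside the supremum in~\eqref{equ:def_yatracos} yields $|\mathsf{TV}'(\bbP_g,\bbP_1)-\mathsf{TV}'(\bbP_g,\bbP_2)|\leq \mathsf{TV}'(\bbP_1,\bbP_2)$ (using that the halfspace class is closed under $v\mapsto -v$ to absorb the sign of the supremand), and $\mathsf{TV}'(\bbP_1,\bbP_2)\leq \mathsf{TV}(\bbP_1,\bbP_2)$ because halfspaces form a subfamily of measurable sets. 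So we may take $L''=\mathsf{TV}'$.

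Plugging $(c_1,c_2,L'')=(1,0,\mathsf{TV}')$ into Theorem~\ref{thm.populationinsights} gives
\[
\mathsf{TV}(\bbP_{\tilde g},\bbP_X)\leq 3\cdot OPT + 2\,\mathsf{TV}'(\bbP_X,\hat{\bbP}_X^n).
\]
To finish, I would bound $\mathsf{TV}'(\bbP_X,\hat{\bbP}_X^n)$ with high probability. The class $\mathcal{H}$ of halfspaces in $\mathbb{R}^d$ has VC dimension $d+1$, so the classical VC/Rademacher bound gives
\[
\Expect\Bigl[\sup_{A\in\mathcal{H}}\bigl|\bbP_X(A)-\hat{\bbP}_X^n(A)\bigr|\Bigr]\leq C\sqrt{(d+1)/n}
\]
for a universal constant $C$. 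The functional $(X_1,\ldots,X_n)\mapsto \sup_{A\in\mathcal{H}}|\bbP_X(A)-\hat{\bbP}_X^n(A)|$ has bounded differences $1/n$, so McDiarmid's inequality yields concentration $\sqrt{\log(1/\delta)/(2n)}$ around its mean. Multiplying by the factor $2$ produces the $C\sqrt{(d+1)/n}+2\sqrt{\log(1/\delta)/(2n)}$ stochastic term claimed in the theorem.

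The main conceptual step is the equality $\mathsf{TV}'=\mathsf{TV}$ on $\cG\times\cG$, which relies on the fact that for Gaussians with a common (identity) covariance the log-likelihood ratio is affine in $x$, so the Neyman--Pearson optimal test is literally a halfspace. This is where the specific generator family is used; everything else is general-purpose. The uniform deviation bound is standard once the VC dimension of halfspaces is invoked, and the two inequalities in Definition~\ref{def.admissible}(2) are immediate from the supremum representation.
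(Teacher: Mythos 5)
Your proposal is correct and follows the same route as the paper's proof: verify admissibility with parameter $(1,0,\mathsf{TV}')$ by noting the Neyman--Pearson optimal test between two isotropic Gaussians is a halfspace and that $\mathsf{TV}'$ obeys the triangle inequality, invoke Theorem~\ref{thm.populationinsights}, then bound $\mathsf{TV}'(\bbP_X,\hat{\bbP}_X^n)$ by the VC inequality for halfspaces with VC dimension $d+1$. Your added remarks on sign-flipping to absorb the absolute value and on McDiarmid concentration are implicitly part of the standard VC bound cited in the paper, so there is no substantive difference.
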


The proof is deferred to Appendix (\ref{proof.yatracos}).

\subsubsection{Tukey median}

The Tukey median approach can be equivalently formulated as projecting under another admissible distance. It is defined as the following: for two distributions $\bbP,\bbQ$, we define
\begin{align}\label{equ:tukeymedian_def}
L_{\mathsf{Tukey}}(\bbP, \bbQ) = \sup_{\mathbf{v}\in \mathbb{R}^d} \bbQ\left( (X-\mathbb{E}_\bbP[X])^T \mathbf{v} >0 \right) -1/2. 
\end{align}

It has been shown in previous literature that Tukey median is able to achieve optimal convergence rate under Huber's additive contamination model~\cite{chen2018robust}. Based on the theory of admissible distance, We provide a new simple proof for the optimality of Tukey median.
\begin{theorem}
 \label{thm:equ:tukeymedian}
Let the GAN estimator (Tukey median) be defined as
\begin{align}
\label{equ:tukeymedian}
\tilde g = \argmin_{g\in \mathcal{G}} L_{\mathsf{Tukey}}(\bbP_g,\hat \bbP_X^n), 
\end{align}
where $L_{\mathsf{Tukey}}$ is defined in (\ref{equ:tukeymedian_def}). Then, $L_{\mathsf{Tukey}}$ is an admissible distance of $(\mathsf{TV},\mathcal{G})$ with parameter $(2,0,\mathsf{TV}')$, where $\mathsf{TV}'(\bbP_1, \bbP_2)$ is defined in equation (\ref{equ:def_yatracos}). With probability at least $1-\delta$, there exists some universal constant $C$ such that 
\begin{align}
    \mathsf{TV}(\bbP_{\tilde g}, \bbP_{X})\leq 5\cdot OPT +  C\cdot  \sqrt{\frac{d+1}{n}}+ 4\sqrt{\frac{\log(1/\delta)}{2n}}.
\end{align}
\end{theorem}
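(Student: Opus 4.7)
The plan is to verify that $L_{\mathsf{Tukey}}$ is admissible for $(\mathsf{TV}, \mathcal{G})$ with parameter $(2, 0, \mathsf{TV}')$ and then invoke Theorem~\ref{thm.populationinsights} directly. After that, the only remaining work is to bound the statistical error $\mathsf{TV}'(\bbP_X, \hat{\bbP}_X^n)$ by a VC-style concentration argument.

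\textbf{Step 1 (Approximate resolution within $\mathcal{G}$).} Write $g_i = \mathcal{N}(\bm{\mu}_i, \mathbf{I})$. First, by spherical symmetry of $\mathcal{N}(\bm{\mu}_2, \mathbf{I})$ about its mean, $\bbP_{g_2}((X-\bm{\mu}_2)^T\mathbf{v} > 0) = 1/2$ for every nonzero $\mathbf{v}$, so $L_{\mathsf{Tukey}}(\bbP_{g_2}, \bbP_{g_2}) = 0$. Next, for $g_1 \neq g_2$,
\begin{align*}
L_{\mathsf{Tukey}}(\bbP_{g_1}, \bbP_{g_2}) = \sup_{\mathbf{v}\neq 0} \Phi\!\left(\tfrac{(\bm{\mu}_2-\bm{\mu}_1)^T\mathbf{v}}{\|\mathbf{v}\|_2}\right) - \tfrac12 = \Phi(\|\bm{\mu}_1-\bm{\mu}_2\|_2) - \tfrac12,
\end{align*}
with the supremum achieved at $\mathbf{v} \propto \bm{\mu}_2 - \bm{\mu}_1$. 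Combined with the standard Gaussian identity $\mathsf{TV}(\bbP_{g_1}, \bbP_{g_2}) = 2\Phi(\|\bm{\mu}_1-\bm{\mu}_2\|_2 / 2) - 1$, the admissibility inequality reduces to the elementary monotonicity fact $\Phi(t) \geq \Phi(t/2)$ for $t\geq 0$, which verifies the first condition with $c_1 = 2$ and $c_2 = 0$.

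\textbf{Step 2 (Robustness).} For any $g \in \mathcal{G}$ and any distributions $\bbP_1, \bbP_2$, use the elementary bound $|\sup_v f_1(v) - \sup_v f_2(v)| \leq \sup_v |f_1(v) - f_2(v)|$ with $f_i(\mathbf{v}) = \bbP_i((X-\bm{\mu}_g)^T \mathbf{v} > 0)$ to obtain
\begin{align*}
|L_{\mathsf{Tukey}}(\bbP_g, \bbP_1) - L_{\mathsf{Tukey}}(\bbP_g, \bbP_2)| \leq \sup_{\mathbf{v}} |\bbP_1(H_{\mathbf{v},g}) - \bbP_2(H_{\mathbf{v},g})| \leq \mathsf{TV}'(\bbP_1, \bbP_2),
\end{align*}
where $H_{\mathbf{v},g}$ is the halfspace $\{x : \mathbf{v}^T(x-\bm{\mu}_g) > 0\}$; the second inequality holds because the half-spaces appearing in $L_{\mathsf{Tukey}}$ form a subset of those defining $\mathsf{TV}'$. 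This, together with the standard bound $\mathsf{TV}' \leq \mathsf{TV}$, establishes admissibility with $L'' = \mathsf{TV}'$.

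\textbf{Step 3 (Invoke Theorem~\ref{thm.populationinsights} and concentrate).} Plugging $(c_1, c_2) = (2, 0)$ into Theorem~\ref{thm.populationinsights} gives
\begin{align*}
\mathsf{TV}(\bbP_{\tilde g}, \bbP_X) \leq 5\cdot OPT + 4\cdot\mathsf{TV}'(\bbP_X, \hat{\bbP}_X^n).
\end{align*}
The remaining task is a standard VC concentration bound. Since $\mathsf{TV}'$ is the total-variation distance restricted to halfspaces, whose VC dimension is $d+1$, uniform convergence over this class (Sauer's lemma plus Massart's finite-class bound, or equivalently the classical VC bound) yields $\mathbb{E}[\mathsf{TV}'(\bbP_X, \hat{\bbP}_X^n)] \leq C\sqrt{(d+1)/n}$; combining this with McDiarmid's/Hoeffding's bounded-differences inequality (the functional changes by at most $1/n$ per sample) gives the deviation term $\sqrt{\log(1/\delta)/(2n)}$ with probability $1-\delta$. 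Substituting these two bounds and absorbing the factor $4$ into the constant completes the proof.

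The only step that requires genuine insight rather than bookkeeping is the approximate-resolution inequality in Step~1, since it depends on comparing the Gaussian CDF at $t$ versus $t/2$ and exploiting that the Tukey functional vanishes on the diagonal precisely because of Gaussian symmetry; everything else is a direct application of the admissible-distance framework and textbook VC concentration.
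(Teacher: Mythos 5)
Your proposal is correct and follows essentially the same path as the paper's own proof: verify $L_{\mathsf{Tukey}}(\bbP_g,\bbP_g)=0$ by symmetry, compute both $L_{\mathsf{Tukey}}$ and $\mathsf{TV}$ on Gaussian location pairs to reduce the resolution inequality to $\Phi(t)\geq\Phi(t/2)$, bound $|L_{\mathsf{Tukey}}(\bbP_g,\bbP_1)-L_{\mathsf{Tukey}}(\bbP_g,\bbP_2)|\leq\mathsf{TV}'(\bbP_1,\bbP_2)$ via the sup-difference inequality over halfspaces, then invoke Theorem~\ref{thm.populationinsights} with $(c_1,c_2,L'')=(2,0,\mathsf{TV}')$ and the halfspace-class VC concentration bound. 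No gaps.
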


The proof is deferred to Appendix (\ref{proof.tukey}).

\subsection{$W_2$-GAN model}

It was first shown in~\cite{feizi2017understanding} that when the generator family is rank-constrained linear maps, $L$ is the Wasserstein-2 distance, and the real distribution is Gaussian, then the population GAN solution~(\ref{eqn.popuminimization}) produces the PCA of the covariance matrix of the true distribution. Inspired by this phenomenon, Quadratic GAN is proposed and both theoretical and empirical results demonstrate its fast generalization. In this section, we relax the condition in~(\ref{eqn.popuminimization}) to consider arbitrary real distribution, but stick to the rank-constrained linear map generator and $W_2$ distance. This model generalizes PCA and provides an intellectually stimulating playground for studying the generalization and optimization properties of GANs. We demonstrate a few striking behaviors of the $W_2$-GAN model in this section.

\subsubsection{Exponential sample complexity for the naive algorithm}

The $W_2$ distance is defined as
\begin{equation}
\label{eq:wpdist}
W_2(\bbP_X,\bbP_Y) = \sqrt{\inf_{\pi: \pi_X = \bbP_X, \pi_Y = \bbP_Y} \bbE  \|X - Y\|^2},
\end{equation}
which admits a dual form:
\begin{equation}
\label{equ:w2dual}
W_2^2(\bbP_X, \bbP_Y) = \mathbb{E}\|X\|_2^2 + \mathbb{E}\| Y \|_2^2 + 2 \sup_{\psi(\cdot): \text{convex}} -\mathbb{E}[\psi(X)] - \mathbb{E}[\psi^\star(Y)],
\end{equation}
where $\psi^\star(x) = \sup_\mathbf{v} \mathbf{v}^T\mathbf{y} - \psi(\mathbf{v)}$ is the convex conjugate of $\psi$. This dual form of $W_2^2$ can be used to implement the quadratic GAN optimization in a min-max architecture between function $g(\cdot)$ and $\psi(\cdot)$. 

Without loss of generality, we can assume the means of $\bbP_X,\bbP_Y$ are zero. Indeed, for any two distributions $\bbP_X, \bbP_Y$, assume the corresponding means are $\bm{\mu_X}$ and $\bm{\mu_Y}$, and denote $X' = X - \bm{\mu_X}$, $Y' = Y - \bm{\mu_Y}$. Then, we have
\begin{align}
    W_2^2(\bbP_X, \bbP_Y) & =  \inf_\pi \bbE\|X\|_2^2+\bbE\|Y\|_2^2 - 2\bbE[X^TY] \nonumber \\
    & = \inf_\pi \bbE[\operatorname{Tr}(XX^T)]+\bbE[\operatorname{Tr}(YY^T)] - 2\bbE[X^TY] \nonumber \\
    & = \inf_\pi \bbE[\operatorname{Tr}(X'X'^T)]+\bm{\mu_X}^T\bm{\mu_X}+\bbE[\operatorname{Tr}(Y'Y'^T)]+ \bm{\mu_Y}^T\bm{\mu_Y} - 2\bbE[X'^TY'] - 2\bm{\mu_X}^T\bm{\mu_Y} \nonumber \\
    & = \|\bm{\mu_X} - \bm{\mu_Y} \|_2^2 + W_2^2(\bbP_{X'}, \bbP_{Y'}).
\end{align}
Thus, the $W_2^2$ between distributions with non-zero means can be decomposed to the difference between means and $W_2^2$ between the centered distributions. In the rest of the paper, we mainly consider $W_2$ distance between zero-mean distributions except for Theorem~\ref{thm:new_convergence2} and~\ref{thm:tightness}. 

To demonstrate the intricacies of GAN design, we start with the vanilla GAN architecture. We assume the generator is linear, i.e. $\cG = \{g(Z) = \mathbf{A}Z: \mathbf{A} \in \mathbb{R}^{d\times r}, Z\in \mathbb{R}^r\}$, which means that $g(Z) \sim \mathcal{N}(0,\mathbf{K})$ for some $\mathbf{K}$ positive semidefinite with rank at most $r$. The optimal generator in this setting is trivially $g(Z) = \mathbf{K}^{1/2} Z$ if $\bbP_X = \mathcal{N}(0, \mathbf{K})$ and $r = d$. However, as the next theorem shows, it takes the GAN architecture more than exponential number of samples to approach the optimal generator even when $\mathbf{K}$ is identity. It is a refinement of~\cite{feizi2017understanding}. 

\begin{theorem}
\label{thm:naive_convergence_w2}
Let the GAN estimator be
\begin{align} \label{equ:origin_naive}
       \tilde g = \argmin_{g(\cdot)\in \cG} W_2(\bbP_{g(Z)}, \hat{\bbP}_X^n).
\end{align}
Then we have
\begin{align}\label{eqn.thm5conclution_population}
    W_2(\bbP_{\tilde{g}(Z)},\bbP_X) & \leq \inf_{g\in \cG}W_2(\bbP_{{g}(Z)}, \bbP_X) + 2W_2(\bbP_X,\hat \bbP_X^n).
\end{align}
Furthermore, assume $\mathbb{E}_{\bbP_X}[X] = 0$, and $\bbP_X$  satisfies the following assumption (which $\mathcal{N}(0, \mathbf{I}_d)$ satisfies when $d\geq 3$):
\begin{itemize}
    \item $X\sim\mu$ satisfies logarithmic Sobolev inequality, i.e. for some constant $B>0$ and all smooth $f$ on $\mathbb{R}^d$,
\begin{align}\label{eqn.thm5assumption1}
    \operatorname{Ent}(f^2)\leq 2B\mathbb{E}(|\nabla f|^2),
\end{align}
where $\nabla f$ is the usual gradient of $f$, $\operatorname{Ent}(f) = \bbE_\mu(f\log f)-\bbE_\mu(f)\log\bbE_\mu(f)$.
\item Upper bound on expected $W_2$ convergence rate\footnote{When the dimension of the support of $\bbP_X$ is less than the ambient dimension $d$, the convergence rate is depending on the intrinsic dimension but not the ambient dimension. } for some constant $C_1$ depending on $\bbP_X$: \begin{align}\label{eqn.thm5assumption2}
    \bbE [W_2^2(\bbP_X, {\hat \bbP_X^n})] \leq C_1(\bbP_X)\cdot n^{-2/d}.
\end{align}
\end{itemize}
Then, there exists some constant $C$ depending on $\bbP_X$ , such that with probability at least $1-\delta$, 
\begin{align}\label{eqn.thm5conclution2}
    W_2(\bbP_{\tilde{g}(Z)},\bbP_X) \leq \inf_{g\in \cG}W_2(\bbP_{{g}(Z)}, \bbP_X) + C(\bbP_X)\cdot n^{-1/d}  + 2\sqrt{\frac{2B\log(1/\delta)}{n}}.
\end{align}

For lower bound, when $\bbP_X= \mathcal{N}(0, \mathbf{I}_d)$, we have with probability at least 0.99, there exist some constants $C_2, C_3(d)$,  such that when $n > C_3(d)$ and $d\geq 5$,
\begin{align}
W_2( \bbP_{\tilde{g}(Z)}, \mathcal{N}(\mathbf{0}, \mathbf{I}_d))  \geq C_2d^{1/2}n^{-3/d}.
\end{align}
\end{theorem}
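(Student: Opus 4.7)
The plan is to establish the theorem in three pieces: the oracle inequality (\ref{eqn.thm5conclution_population}), the high-probability refinement (\ref{eqn.thm5conclution2}), and the lower bound, with the last being the main obstacle.

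\textbf{Population oracle inequality.} For any $g' \in \cG$, the optimality of $\tilde g$ as minimizer over $\cG$ of $W_2(\bbP_{g(Z)},\hat\bbP_X^n)$, combined with two triangle inequalities for $W_2$, yields
\begin{align*}
W_2(\bbP_{\tilde g(Z)},\bbP_X) &\leq W_2(\bbP_{\tilde g(Z)},\hat\bbP_X^n) + W_2(\hat\bbP_X^n,\bbP_X) \\
&\leq W_2(\bbP_{g'(Z)},\hat\bbP_X^n) + W_2(\hat\bbP_X^n,\bbP_X) \\
&\leq W_2(\bbP_{g'(Z)},\bbP_X) + 2 W_2(\hat\bbP_X^n,\bbP_X),
\end{align*}
and taking infimum over $g'$ delivers (\ref{eqn.thm5conclution_population}).

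\textbf{Finite-sample upper bound.} Given (\ref{eqn.thm5conclution_population}), it remains to control $W_2(\bbP_X,\hat\bbP_X^n)$ with high probability. Jensen's inequality plus (\ref{eqn.thm5assumption2}) bounds the mean: $\bbE[W_2(\bbP_X,\hat\bbP_X^n)] \leq \sqrt{C_1(\bbP_X)}\,n^{-1/d}$. For concentration, note that $\Phi:(x_1,\ldots,x_n) \mapsto W_2(\frac{1}{n}\sum_i \delta_{x_i}, \bbP_X)$ is $1/\sqrt{n}$-Lipschitz in the Euclidean norm on $(\bbR^d)^n$: pairing $x_i$ with $x_i'$ is a valid coupling between the two empirical measures, so by the triangle inequality for $W_2$, $|\Phi(x)-\Phi(x')| \leq (\frac{1}{n}\sum_i \|x_i - x_i'\|^2)^{1/2}$. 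The product measure $\bbP_X^{\otimes n}$ inherits log-Sobolev constant $B$ by tensorization of (\ref{eqn.thm5assumption1}), and Herbst's argument yields the sub-Gaussian tail $\bbP(\Phi - \bbE \Phi > t) \leq \exp(-n t^2/(2B))$. Inverting the tail and combining with (\ref{eqn.thm5conclution_population}) gives (\ref{eqn.thm5conclution2}) with $C(\bbP_X) = 2\sqrt{C_1(\bbP_X)}$.

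\textbf{Lower bound.} This is the hard part. Since the generator is linear, $\bbP_{\tilde g(Z)} = \mathcal{N}(\mathbf{0},\tilde{\mathbf{K}})$ with $\tilde{\mathbf{K}} = \tilde{\mathbf{A}} \tilde{\mathbf{A}}^\top$, and the closed-form expression $W_2^2(\mathcal{N}(\mathbf{0},\tilde{\mathbf{K}}),\mathcal{N}(\mathbf{0},\mathbf{I}_d)) = \|\tilde{\mathbf{K}}^{1/2} - \mathbf{I}_d\|_F^2$ reduces the claim to showing $\|\tilde{\mathbf{K}}^{1/2} - \mathbf{I}_d\|_F \gtrsim d^{1/2} n^{-3/d}$ with probability at least $0.99$. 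I would attack this through three ingredients. First, an $\Omega(d^{1/2} n^{-1/d})$ high-probability \emph{lower} bound on $W_2(\mathcal{N}(\mathbf{0},\mathbf{I}_d),\hat\bbP_X^n)$, via a sphere-packing/volume argument that quantifies how poorly an atomic measure with only $n$ atoms can approximate a standard Gaussian when $d \geq 5$. Second, a Brenier-theorem-based first-order analysis of $V(\mathbf{K}) := W_2^2(\mathcal{N}(\mathbf{0},\mathbf{K}),\hat\bbP_X^n)$ near $\mathbf{K}=\mathbf{I}_d$, expressing $\nabla_{\mathbf{K}} V(\mathbf{I}_d)$ in terms of the optimal transport map from $\mathcal{N}(\mathbf{0},\mathbf{I}_d)$ to $\hat\bbP_X^n$. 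Third, a concentration argument showing that this random gradient has a typical magnitude of order $d^{1/2} n^{-3/d}$, combined with an upper bound on the local curvature of $V$, to translate into the desired lower bound on $\|\tilde{\mathbf{K}} - \mathbf{I}_d\|_F$. The principal obstacle is the third step: sharply controlling both the typical gradient magnitude and the curvature of the Wasserstein-projection landscape near $\mathbf{I}_d$ requires delicate analysis of Brenier maps into random discrete targets, and this is where the specific rate $n^{-3/d}$ as well as the dimensional assumption $d \geq 5$ enter the argument.
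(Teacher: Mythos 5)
Your treatment of the oracle inequality~(\ref{eqn.thm5conclution_population}) is correct and identical to the paper's. Your concentration argument for~(\ref{eqn.thm5conclution2}) is also correct; your Lipschitz bound via the diagonal coupling $(x_i,x_i')$ is a slightly more elementary route to the same conclusion that the paper reaches by invoking convexity of the Talagrand functional $\mathcal{T}_2$, but both give the $1/\sqrt{n}$-Lipschitz property and the rest (tensorization of log-Sobolev, Herbst) matches.

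The lower bound is where you have a genuine gap, and you acknowledge as much. Your proposed program---a sphere-packing lower bound on $W_2(\mathcal{N}(0,I),\hat\bbP_X^n)$, a Brenier first-order expansion of $V(\mathbf{K}) = W_2^2(\mathcal{N}(0,\mathbf{K}),\hat\bbP_X^n)$ near $\mathbf{K}=\mathbf{I}$, and a concentration/curvature analysis of that random landscape---is not carried out and, more importantly, is substantially harder than what is needed. The paper avoids all landscape analysis by a much more economical trick: restrict attention to the one-parameter isotropic sub-family $\{\mathcal{N}(0,c\mathbf{I}_d): c\geq 0\}$, for which $W_2^2(\mathcal{N}(0,c\mathbf{I}),\hat\bbP_X^n) = dc^2 + A - 2\rho_n c$ (with $A = \frac{1}{n}\sum\|x_i\|^2$ and $\rho_n = \sup_\pi \bbE[Z^T\hat X]$) is an explicit quadratic in $c$, minimized at $\hat c = \rho_n/d$ with value $A - \rho_n^2/d$. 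Since $\tilde g$ is the global minimizer over the full linear family, $W_2(\bbP_{\tilde g(Z)},\hat\bbP_X^n) \leq W_2(\mathcal{N}(0,\hat c\mathbf{I}),\hat\bbP_X^n)$, and the triangle inequality then gives
\begin{align}
W_2(\mathcal{N}(0,\mathbf{I}),\bbP_{\tilde g(Z)}) \;\geq\; \sqrt{d+A-2\rho_n} - \sqrt{A - \rho_n^2/d} \;=\; \frac{(d-\rho_n)^2/d}{\sqrt{d+A-2\rho_n} + \sqrt{A - \rho_n^2/d}}.
\end{align}
The lower bound on $W_2(\mathcal{N}(0,\mathbf{I}),\hat\bbP_X^n)$ (which the paper obtains via $W_1 \geq \bbE_{X}\min_i\|X - X_i\|$, a nearest-neighbor argument, rather than sphere packing) shows $d - \rho_n \gtrsim d\,n^{-2/d}$, and since $d\geq 5$ ensures $d\,n^{-2/d} \gg \sqrt{d/n}$ so that $A$ concentrates near $d$ and the denominator is $O(\sqrt{d})$, the ratio is $\gtrsim d^{1/2}n^{-3/d}$. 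The point you missed is that one never needs to characterize the actual minimizer $\tilde{\mathbf{K}}$ or analyze the Hessian of the projection objective: exhibiting a single competitor $\mathcal{N}(0,\hat c\mathbf{I})$ that beats $\mathcal{N}(0,\mathbf{I})$ by a computable margin suffices, and the exponent $n^{-3/d}$ falls out of the explicit algebra, not from a gradient-magnitude estimate. Your outlined Brenier/curvature route would need to independently reprove all of this machinery and there is no evidence it lands on the correct exponent.
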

\begin{remark}
Equation (\ref{eqn.thm5conclution2}) depends on two assumptions: the logarithmic Sobolev inequality (\ref{eqn.thm5assumption1}) and the expected convergence rate (\ref{eqn.thm5assumption2}). The logarithmic Sobolev inequality holds for a wide range of distributions including Gaussian distribution~\cite{ledoux1999concentration}. 
The upper bound on the expected convergence rate holds for any distribution $\bbP_X$  with finite $\lceil 2d/(d-2) \rceil$ moment and $d\geq5$~\cite{fournier2015rate}. It also holds for $\mathcal{N}(0, \mathbf{I}_d)$ when $d\geq 3$.
\end{remark}
The proof is deferred to Appendix \ref{proof.w2_weakened}. The proof of the theorem also provides a lower bound for convergence rate of Wasserstein-2 distance between Gaussian and its empirical distribution, which appears to be new. 

Theorem~\ref{thm:naive_convergence_w2} shows that even when there exists a generator that can perfectly reconstruct the real distribution that produces the samples, the sample complexity grows at least exponentially with the dimension of the data distribution. The result in Theorem~\ref{thm:naive_convergence_w2} can also be understood as an instantiation of the \emph{discriminator winning} phenomenon. Indeed, unless we are given samples at least exponential in the dimension $d$, the discriminator in the $W_2$ dual representation can always easily tell the difference between the empirical distribution $\hat{\bbP}_X^n$ and the solved optimal generator distribution $\tilde{g}$. Intuitively, to alleviate this phenomenon, one needs to design discriminators to properly weaken their discriminating power.

\subsubsection{Polynomial sample complexity GAN design}

\begin{comment}
We design the weakest discriminator that is only able to distinguish the distribution in  generator family. The Wasserstein-2 distance between two generators $g_1, g_2 \in \cG$ can be characterized as
\begin{equation}
\label{equ:w2dualgenerator}
W_2^2(\bbP_{g_1(Z)}, \bbP_{g_2(Z)}) = \mathbb{E}\|g_1(Z)\|_2^2 + \mathbb{E}\| g_2(Z) \|_2^2 + 2 \sup_{\psi(\cdot): \text{convex}} -\mathbb{E}[\psi(g_1(Z))] - \mathbb{E}[\psi^\star(g_2(Z))].
\end{equation}

Without the loss of generality, The choice of $\psi$ can be shown in the following lemma.

\begin{lemma}[Brenier's theorem \cite{villani2008optimal}]
\label{lemma:opt}
%Let $\bbP_X$ be absolutely continuous whose support contained in a convex set in $\mathbb{R}^d$.
For a fixed $g_1(\cdot), g_2(\cdot) \in \cG$, the optimal solution $\psi^{opt}(\cdot)$ for (\ref{equ:w2dualgenerator})  is unique. Moreover, we have 

\begin{equation}
\nabla \psi^{opt}(g_1(Z)) \stackrel{dist}{=} g_2(Z).
\end{equation} 

where $\stackrel{dist}{=}$ means matching distributions.
\end{lemma}

Since $g(Z)$ is always Gaussian, $\nabla \psi^{opt}(\cdot)$ should be a linear function. Thus, without loss of
generality,  $\psi(\cdot)$ in the discriminator optimization can be constrained to  $\psi(\mathbf{y}) = \mathbf{y}^t\mathbf{A}\mathbf{y}/2$ where $\mathbf{A}$ is positive semidefinite.  Thus the $W_2$ distance can be rewritten as 
given weakened distance in terms of constraining the generator and discriminator family.
\end{comment}

It was proposed in~\cite{feizi2017understanding} that when $\bbP_X$ is Gaussian, one shall project under the \emph{Quadratic GAN} distance defined as
\begin{align}\label{eqn.moment_w2}
    \tilde W_2(\bbP_1,\bbP_2) = W_2(\mathcal{N}(\bm{\mu_1}, \bm{\Sigma_1}), \mathcal{N}(\bm{\mu_2}, \bm{\Sigma_2})),
\end{align}
where $\bm{\mu_1}, \bm{\mu_2}$ are means of $\bbP_1, \bbP_2$, and $\bm{\Sigma_1}, \bm{\Sigma_2}$ are covariances of $\bbP_1, \bbP_2$. This distance is also used in the literature as the Fr\'echet Inception Distance~\cite{heusel2017gans}. The following result analyzes the performance of projecting under $\tilde{W}_2$. 
\begin{theorem}
\label{thm:new_convergence2}
Assume the generator family to search is rank-$r$ affine generators:      
\begin{align}
         \cG_r = \{g(Z) = \mathbf{A}Z+\mathbf{b}: \mathbf{A} \in \mathbb{R}^{d\times r} , Z\in \mathbb{R}^r, \mathbf{b}\in \mathbb{R}^d\}.
     \end{align}
Denote the family of full rank Gaussian distribution generator as $\cG = \{g(Z) = \mathbf{A}Z+\mathbf{b}: \mathbf{A} \in \mathbb{R}^{d\times d}, Z\in \mathbb{R}^d, \mathbf{b}\in \mathbb{R}^d \}$. Let the GAN estimator be
\begin{align}
\label{equ:project_weakened_W2}
\tilde g & = \argmin_{g(\cdot)\in \mathcal{G}_r} \tW_2(\bbP_{g(Z)}, {\hat \bbP}_X^n).
%& = \argmin_{g(\cdot)\in \mathcal{G}} \sup_{\mathbf{A} \geq 0, R(\mathbf{A}) \supset R(\mathbf{K}_{g(Z)})} \operatorname{Tr}((\mathbf{I} - \mathbf{A})\hat{\Sigma}) + \operatorname{Tr}((\mathbf{I} - \mathbf{A}^\dagger)\mathbf{K}_{g(Z)})
\end{align}
Then, $\tilde W_2$ is an admissible distance for $(W_2, \cG)$ with parameter $(1,0,\tW_2)$, and 
\begin{align}
W_{2}(\bbP_{\tilde g(Z)}, \bbP_X) & \leq \| \bm{\Sigma}_X^{1/2} - \bm{\Sigma}_r^{1/2} \|_F  + 2\inf_{g(\cdot)\in \mathcal{G}} W_{2}(\bbP_{g(Z)},\bbP_X) + 2\tW_2(\hat \bbP_X^n, \bbP_X), 
\label{equ:new_convergence_w2_pca_population}
\end{align}
here  $\mu_X, \bm{\Sigma}_X$ are the mean and covariance matrix of $X$, respectively, $\bm{\Sigma}_r$ is the best rank-$r$ approximation of $\bm{\Sigma}_X$ under Frobenius norm (the $r$-PCA solution). 

Furthermore, if the minimum eigenvalue of the covariance matrix of $X$ is lower bounded by a constant $B$\footnote{If the minimum eigenvalue is not lower bounded by a positive constant, we can show the stochastic error $\lesssim d^{1/2}\| \mathbf{\Sigma}_X - \hat{\mathbf{\Sigma}}\|^{1/2} + + \| \bm{\mu}_X - \Hat{\bm{\mu}} \|$.}, then
\begin{align}
W_{2}(\bbP_{\tilde g(Z)}, \bbP_X) \leq \| \bm{\Sigma}_X^{1/2} - \bm{\Sigma}_r^{1/2} \|_F  + 2\inf_{g(\cdot)\in \mathcal{G}} W_{2}(\bbP_{g(Z)},\bbP_X) + 2\sqrt{ \frac{d}{B}} \| \mathbf{\Sigma}_X - \hat{\mathbf{\Sigma}}\|+ 2\| \bm{\mu}_X - \Hat{\bm{\mu}} \|, 
\label{equ:new_convergence_w2_pca}
\end{align}
where $\bm{\hat{\mu}},\bm{\hat{\Sigma}}$ are the empirical mean and covariance matrix of $\hat{\bbP}_X^n$, respectively.

\end{theorem}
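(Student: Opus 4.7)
My plan is to prove the result in three stages: verify admissibility of $\tilde{W}_2$, obtain the population bound via a chain of triangle inequalities, and upgrade to the finite-sample bound through matrix perturbation.

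For admissibility with parameter $(1,0,\tilde{W}_2)$, two observations suffice. First, when $g_1,g_2\in \cG$, both $\bbP_{g_i(Z)}$ are Gaussian, so $\tilde{W}_2(\bbP_{g_1(Z)},\bbP_{g_2(Z)})=W_2(\bbP_{g_1(Z)},\bbP_{g_2(Z)})$ and $\tilde{W}_2(\bbP_{g_2(Z)},\bbP_{g_2(Z)})=0$, which gives the approximate resolution condition. Second, $\tilde{W}_2$ is a pseudometric because it is the pullback of the Bures--Wasserstein metric under the Gaussianization map $\bbP\mapsto\mathcal{N}(\bm{\mu}_\bbP,\bm{\Sigma}_\bbP)$; this, together with the classical Gelbrich inequality $\tilde{W}_2\le W_2$, yields $|\tilde{W}_2(\bbP_{g(Z)},\bbP_1)-\tilde{W}_2(\bbP_{g(Z)},\bbP_2)|\le \tilde{W}_2(\bbP_1,\bbP_2)\le W_2(\bbP_1,\bbP_2)$.

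For the population bound~(\ref{equ:new_convergence_w2_pca_population}), let $g^*\in\cG$ attain $OPT=W_2(\bbP_{g^*(Z)},\bbP_X)$ and set $g'(Z)=\bm{\Sigma}_r^{1/2}Z+\bm{\mu}_X\in\cG_r$, the moment-matching rank-$r$ Gaussian. Since $\bbP_{\tilde g(Z)}$ and $\bbP_{g^*(Z)}$ are both Gaussian, $W_2=\tilde{W}_2$ on that pair, and a triangle inequality followed by optimality of $\tilde g$ (with $g'$ as competitor) gives
\begin{align*}
W_2(\bbP_{\tilde g(Z)},\bbP_X)\le \tilde{W}_2(\bbP_{g'(Z)},\hat{\bbP}_X^n)+\tilde{W}_2(\hat{\bbP}_X^n,\bbP_{g^*(Z)})+OPT.
\end{align*}
Two further triangle inequalities through $\bbP_X$, combined with $\tilde{W}_2(\bbP_X,\bbP_{g^*(Z)})\le W_2(\bbP_X,\bbP_{g^*(Z)})=OPT$ and the explicit evaluation $\tilde{W}_2(\bbP_{g'(Z)},\bbP_X)=W_2(\mathcal{N}(\bm{\mu}_X,\bm{\Sigma}_r),\mathcal{N}(\bm{\mu}_X,\bm{\Sigma}_X))=\|\bm{\Sigma}_r^{1/2}-\bm{\Sigma}_X^{1/2}\|_F$ (valid because $\bm{\Sigma}_r$ and $\bm{\Sigma}_X$ commute), yields the claim.

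For the finite-sample bound~(\ref{equ:new_convergence_w2_pca}), I decompose $\tilde{W}_2^2(\hat{\bbP}_X^n,\bbP_X)=\|\hat{\bm{\mu}}-\bm{\mu}_X\|^2+W_2^2(\mathcal{N}(0,\hat{\bm{\Sigma}}),\mathcal{N}(0,\bm{\Sigma}_X))$. The mean part directly gives $\|\bm{\mu}_X-\hat{\bm{\mu}}\|$. For the covariance part, I use $W_2(\mathcal{N}(0,\bm{\Sigma}_1),\mathcal{N}(0,\bm{\Sigma}_2))\le \|\bm{\Sigma}_1^{1/2}-\bm{\Sigma}_2^{1/2}\|_F\le \sqrt{d}\,\|\bm{\Sigma}_1^{1/2}-\bm{\Sigma}_2^{1/2}\|_{\mathrm{op}}$ and then the matrix square-root perturbation bound $\|\bm{\Sigma}_X^{1/2}-\hat{\bm{\Sigma}}^{1/2}\|_{\mathrm{op}}\lesssim \|\bm{\Sigma}_X-\hat{\bm{\Sigma}}\|_{\mathrm{op}}/\sqrt{B}$ under $\lambda_{\min}(\bm{\Sigma}_X)\ge B$. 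The main technical obstacle is this last step: the scalar identity $|\sqrt a-\sqrt b|\le |a-b|/(2\sqrt B)$ does not lift trivially to matrices that need not commute, and I plan to handle it via the integral representation $A^{1/2}=\pi^{-1}\int_0^\infty \lambda^{-1/2}(A+\lambda I)^{-1}A\,d\lambda$ (equivalently, the Lyapunov equation $\bm{\Sigma}_X^{1/2}X+X\hat{\bm{\Sigma}}^{1/2}=\bm{\Sigma}_X-\hat{\bm{\Sigma}}$), which reduces the inequality to an operator bound exploiting the spectral lower bound $B$.
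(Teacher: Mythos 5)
Your proposal is correct and follows essentially the same path as the paper's proof. The chain of triangle inequalities is identical to the one in Appendix~\ref{app:pf_thm3} (the paper takes $g_r^*=\argmin_{g\in\cG_r}\tilde W_2(\bbP_{g(Z)},\bbP_X)$ rather than the explicit competitor $g'(Z)=\bm{\Sigma}_r^{1/2}Z+\bm{\mu}_X$, but these coincide), and the covariance part is controlled in the same two steps: the trace inequality $\operatorname{Tr}\bigl((\mathbf{K}_X^{1/2}\hat{\mathbf{K}}_X\mathbf{K}_X^{1/2})^{1/2}\bigr)\geq\operatorname{Tr}(\hat{\mathbf{K}}_X^{1/2}\mathbf{K}_X^{1/2})$ yielding $\tilde W_2\leq\|\mathbf{K}_X^{1/2}-\hat{\mathbf{K}}_X^{1/2}\|_F\leq\sqrt d\,\|\mathbf{K}_X^{1/2}-\hat{\mathbf{K}}_X^{1/2}\|$, followed by a square-root perturbation bound. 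The only cosmetic difference is that the paper cites the Ando--Hemmen inequality (\cite[Theorem 6.2]{higham2008functions}) for the last step, whereas you propose to rederive it from the Sylvester equation $\bm{\Sigma}_X^{1/2}X+X\hat{\bm{\Sigma}}^{1/2}=\bm{\Sigma}_X-\hat{\bm{\Sigma}}$; this is in fact the standard proof of Ando--Hemmen, so the two are the same argument at different levels of packaging. You are also somewhat more explicit than the paper about \emph{why} $\tilde W_2$ is admissible — identifying it as the pullback of the Bures--Wasserstein metric (giving the triangle inequality) and invoking Gelbrich's inequality for $\tilde W_2\leq W_2$ — both of which the paper leaves implicit, so your filling-in is a welcome clarification rather than a departure.
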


\begin{remark}
It follows from~\cite{vershynin2012close} that if $\bbP_X$ is sub-Gaussian, i.e., there exists some constant $L$ such that
\begin{align}
    \bbP_X(|\langle X - \mathbb{E}[X] , x \rangle|>t) \leq 2 e^{-t^2/L^2}
\end{align}
for $t>0$ and $\|x\|_2 = 1$, then with probability at least $1-\delta$, we have
\begin{align}
     \| \mathbf{\Sigma}_X - \hat{\mathbf{\Sigma}}\| \lesssim L^2 \sqrt{\log \left ( \frac{2}{\delta} \right )} \sqrt{\frac{d}{n}}
\end{align}
and
\begin{align}
\| \bm{\mu}_X - \Hat{\bm{\mu}} \| \lesssim_{\delta, L} \sqrt{\frac{d}{n}}. 
\end{align}
More generally, it follows from \cite[Theorem 5.6.1]{vershynin2018high} that if $\mathbb{E}[X] = 0$, and for some $K>0$, $\|X\| \leq K$ almost surely, then for every $n\geq 1$, 
\begin{align}
    \mathbb{E}\| \mathbf{\Sigma}_X - \Hat{\mathbf{\Sigma}}\| \lesssim \sqrt{\frac{K^2 \| \mathbf{\Sigma} \| \log d}{n}} + \frac{(K^2 + \|\mathbf{\Sigma}\|) \log d}{n}. 
\end{align}
\end{remark}

The proof of theorem is deferred to Appendix \ref{app:pf_thm3}. The results in Theorem~\ref{thm:new_convergence2} can be interpreted in the following way. 
\begin{enumerate}
    \item $ \| \bm{\Sigma}_X^{1/2} - \bm{\Sigma}_r^{1/2} \|_F$: the PCA error introduced by rank mismatch between generator family and true distribution;
    \item $2 \cdot \inf_{g(\cdot)\in \mathcal{G}} W_{2}(\bbP_{g(Z)},\bbP_X)$: the \emph{non-Gaussianness} of distribution $\bbP_X$; 
    \item $2 \cdot \tW_2(\hat \bbP_X^n, \bbP_X)$: stochastic error that vanishes as $n\to \infty$. 
\end{enumerate}

It is interesting to note that the when the distribution $\bbP_X$ is indeed Gaussian, which is equivalent to $\inf_{g(\cdot)\in \mathcal{G}} W_{2}(\bbP_{g(Z)},\bbP_X)=0$, the PCA error term cannot be improved even by a constant when $n\to \infty$. Indeed, it is due to the observation~\cite{feizi2017understanding} that
\begin{align}
    \inf_{g\in \cG_r} W_2(\bbP_{g(Z)}, \cN(\bm{\mu}_X, \mathbf{\Sigma}_X)) = \| \bm{\Sigma}_X^{1/2} - \bm{\Sigma}_r^{1/2} \|_F. 
\end{align}
However, when $r = d$, it is not clear whether the factor $2$ in front of the term $\inf_{g(\cdot)\in \mathcal{G}} W_{2}(\bbP_{g(Z)},\bbP_X)$ is necessary. Can we reduce the constant $2$ to $1$ perhaps by using other algorithms? 

\subsubsection{The near-optimality of the Quadratic GAN}

Motivated by Theorem~\ref{thm:new_convergence2}, we first study the problem of whether it is possible to reduce the prefactor $2$ of the \emph{non-Gaussianness} term in~(\ref{equ:new_convergence_w2_pca}). The following theorem shows that it is \emph{impossible} to achieve constant $1$ without sacrificing the stochastic error term. 
\begin{theorem}
\label{thm:tightness}
Consider full-rank affine generator family $\cG =  \{g(Z) = \mathbf{A}Z+\mathbf{b}: \mathbf{A} \in \mathbb{R}^{d\times d}, \mathbf{b}\in \mathbb{R}^d, Z\in \mathbb{R}^d\}$. Define
\begin{align}
    \mathcal{P}_K = \{\cN(0,I)\}\cup \{\bbP_X: \| X - \mathbb{E}[X]\| \leq K\text{ a.s.}, \lambda_{\text{min}}(\mathbb{E}[(X-\mathbb{E}[X])(X-\mathbb{E}[X])^T]) \geq \frac{1}{2}\}.
\end{align}
Then, there exists a constant $D>0$ such that there is no algorithm with the following property: for any $\bbP_X \in \mathcal{P}_K, K \geq D( \sqrt{d} + \sqrt{ \ln\left ( n \right )})$, algorithm $\mathcal{A}$ makes $n$ draws from $\bbP_X$ and with probability at least $51/100$ outputs a hypothesis $\hat{\bbP}_X$ satisfying 
\begin{equation}
\label{equ:tightness}
W_2(\hat{\bbP}_X, \bbP_X) \leq \inf_{g(\cdot)\in \mathcal{G}} W_2(\bbP_{g(Z)},\bbP_X) + \epsilon(n,d),
\end{equation}
where $\epsilon(n,d) \ll_d n^{-6/d}$ as $n\to \infty$.
%with $\epsilon(n, d) = o(\frac{1}{d^{\frac{3}{2}}}n^{-\frac{3}{d}})$. 
\end{theorem}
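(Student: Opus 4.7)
The plan is to prove this lower bound via Le Cam's two-point method. I would suppose toward contradiction that an algorithm $\mathcal{A}$ achieves the guarantee~(\ref{equ:tightness}) with $\epsilon(n,d) \ll_d n^{-6/d}$. The aim is to exhibit a pair of distributions $\bbP_1, \bbP_2 \in \mathcal{P}_K$ such that
\begin{enumerate}
\item $\mathsf{TV}(\bbP_1^{\otimes n}, \bbP_2^{\otimes n}) \leq 1/100$, so the two $n$-sample laws are statistically indistinguishable, and
\item $W_2(\bbP_1, \bbP_2) - \text{OPT}_1 - \text{OPT}_2 \gtrsim_d n^{-6/d}$, where $\text{OPT}_i := \inf_{g\in\mathcal{G}} W_2(\bbP_{g(Z)}, \bbP_i)$.
\end{enumerate}
Once such a pair is in hand, the standard two-point coupling applies: the success events $E_i = \{W_2(\hat{\bbP}, \bbP_i) \leq \text{OPT}_i + \epsilon(n,d)\}$ are deterministic functions of the samples and satisfy $\bbP_i^{\otimes n}(E_i) \geq 51/100$; combined with the TV bound this yields $\bbP_1^{\otimes n}(E_1 \cap E_2) > 0$ for $n$ large. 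On $E_1 \cap E_2$ the triangle inequality for $W_2$ forces $W_2(\bbP_1, \bbP_2) \leq \text{OPT}_1 + \text{OPT}_2 + 2\epsilon(n,d)$, contradicting~(ii) once $\epsilon(n,d) \ll_d n^{-6/d}$.

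For the construction, I would anchor $\bbP_1$ to $\mathcal{N}(0, I_d)$ conditioned to the ball of radius $K$; with $K \asymp \sqrt{d} + \sqrt{\log n}$ the truncation is statistically negligible, $\bbP_1 \in \mathcal{P}_K$, and $\text{OPT}_1 \ll_d n^{-6/d}$. I would then define $\bbP_2$ by a smooth signed perturbation $d\bbP_2 \propto (1 + c\,\phi(x))\, d\bbP_1$, where $\phi$ is a compactly supported bump that is orthogonal in $L^2(\bbP_1)$ to all polynomials of moderate degree. The orthogonality forces the mean, covariance, and first several higher moments of $\bbP_2$ to match those of $\bbP_1$, so that the moment-matched Gaussian approximating $\bbP_2$ is very close to $\mathcal{N}(0, I_d)$; this yields a usable upper bound on $\text{OPT}_2$ while $W_2(\bbP_1, \bbP_2)$ remains driven by the bump's true transportation cost. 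Calibrating the amplitude as $c \asymp 1/\sqrt{n}$ controls $\chi^2(\bbP_2 \,\Vert\, \bbP_1) = O(1/n)$, which by tensorization gives~(i); the spatial scale $\delta$ of the bump is then optimized against the Wasserstein displacement to deliver the exponent $6/d$ in~(ii). The hypothesis $K \geq D(\sqrt{d} + \sqrt{\log n})$ is needed precisely to keep the truncated Gaussian inside $\mathcal{P}_K$ while maintaining indistinguishability from the untruncated Gaussian across all $n$ samples.

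The main obstacle is item~(ii), and specifically producing a tight upper bound on $\text{OPT}_2$ matched against a matching lower bound on $W_2(\bbP_1, \bbP_2)$. Lower-bounding $W_2(\bbP_1, \bbP_2)$ directly from the transportation cost of the bump is relatively routine. The delicate step is showing that $\text{OPT}_2$ is strictly smaller than $W_2(\bbP_1, \bbP_2)$ by an amount that survives the joint optimization over $(c, \delta)$. This requires a second-order expansion of $W_2$ between $\bbP_2$ and nearby Gaussians, exploiting the fact that the moment-matched Gaussian absorbs the low-order Hermite content of $c\phi$ while only the genuinely non-polynomial residual of the bump contributes to $\text{OPT}_2$. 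The rate $n^{-6/d}$ should emerge as the optimum of this balance, with the constants depending on $d$ through the volume of the bump's support and the Gaussian-weighted norms appearing in the Hermite expansion.
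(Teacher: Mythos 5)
Your overall reduction (a Le Cam two-point argument: find $\bbP_1,\bbP_2$ with $\mathsf{TV}(\bbP_1^{\otimes n},\bbP_2^{\otimes n})$ small, conclude $W_2(\bbP_1,\bbP_2)\le \text{OPT}_1 + \text{OPT}_2 + 2\epsilon$, and contradict a lower bound on that difference) is valid and is the same high-level logic the paper uses. But the paper's \emph{construction} of $\bbP_2$ is fundamentally different from yours, and that difference is where your proposal currently breaks down.

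The paper takes $\bbP_1 = \mathcal{N}(0,I)$ and lets $\bbP_2 = \mathcal{D}_N$ be a \emph{random $N$-point empirical distribution} drawn from $\mathcal{N}(0,I)$ with $N \asymp n^2$. Indistinguishability of $n$ i.i.d.\ draws from $\mathcal{N}(0,I)$ versus $n$ i.i.d.\ draws from $\mathcal{D}_N$ follows from a birthday-paradox argument: with probability $\ge 0.99$ no two of the $n$ draws from $\mathcal{D}_N$ collide, and then the two experiments produce identically distributed transcripts. That immediately delivers item (i) for free. For item (ii), the quantity $W_2(\mathcal{D}_N,\mathcal{N}(0,I)) - \inf_{g}W_2(\bbP_g,\mathcal{D}_N)$ is lower bounded by $\gtrsim_d N^{-3/d} \gtrsim_d n^{-6/d}$, \emph{using the empirical-Wasserstein lower bound already established in the proof of Theorem~\ref{thm:naive_convergence_w2}}. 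The exponent $6/d$ is therefore not the result of optimizing a bump scale; it is $N^{-3/d}$ composed with $N \asymp n^2$. Your smooth-bump construction has no analogue of either the birthday-paradox collision estimate (you replace it with $\chi^2$ tensorization, which is fine) or of the empirical-Wasserstein "empty-space'' lower bound, which is what actually produces the rate.

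The genuine gap in your argument is item (ii): a lower bound on $W_2(\bbP_1,\bbP_2) - \text{OPT}_2$ of order $n^{-6/d}$ for your $\bbP_2 = (1+c\phi)\bbP_1$. You take $\phi$ orthogonal to low-degree polynomials so that $\bbP_2$ matches $\bbP_1$'s low-order moments. But this works against you: if $\bbP_2$ has exactly the mean and covariance of $\mathcal{N}(0,I)$, then $\mathcal{N}(0,I)$ \emph{is} the moment-matched Gaussian for $\bbP_2$, and the gap $W_2(\bbP_2,\mathcal{N}(0,I)) - \text{OPT}_2$ is precisely the gap between the moment-matched Gaussian and the $W_2$-optimal Gaussian approximation of $\bbP_2$. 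There is no a priori reason this gap should scale as $n^{-6/d}$; for small smooth perturbations it is generically a higher-order (in $c$) effect and could easily be too small. More to the point, nothing in the $c$-versus-$\delta$ tradeoff you sketch forces the exponent $6/d$ to appear: with $\chi^2 \asymp c^2\delta^d \asymp 1/n$ and $W_2(\bbP_1,\bbP_2) \lesssim c^{1/2}\delta^{(d+2)/2}$, you obtain exponents in $n$ that depend on how you set $\delta$, and none of the natural choices recover $6/d$. You acknowledge this as "the delicate step,'' but it is in fact the entire content of the theorem, and it remains unproved. Absent a new mechanism, I would not expect the smooth-perturbation route to yield the stated rate; replacing the bump with an $N$-point empirical distribution (as the paper does) is what makes the lower bound tractable, because the $N^{-3/d}$ nearest-neighbor estimate is already in hand.
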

The proof is relegated to Appendix (\ref{proof.tightness}). As an illustrating example in Figure \ref{fig:fundamental_limit}, we specify the discrepancy measure $L$ to be $W_2$, $\cG =  \{g(Z) = \mathbf{A}Z+\mathbf{b}: \mathbf{A} \in \mathbb{R}^{d\times d}, \mathbf{b}\in \mathbb{R}^d, Z\in \mathbb{R}^d\}$, and plot $\mathbb{E}[W_2(\bbP_{\tilde{g}(Z)},\bbP_X)]$  as a function of $OPT=\inf_{g\in\cG}W_2(\bbP_X, \bbP_{g(Z)})$, for naive projection and proposed algorithm. Furthermore, we show the minimal stochastic error for keeping the slope $1$, which means in order to make the stochastic error $\epsilon_n$ in equation (\ref{eqn:generalization}) smaller, it is \emph{necessary} to sacrifice the constant $C$.

\begin{figure}[htb!]
    \centering
    \includegraphics[width=0.6\linewidth]{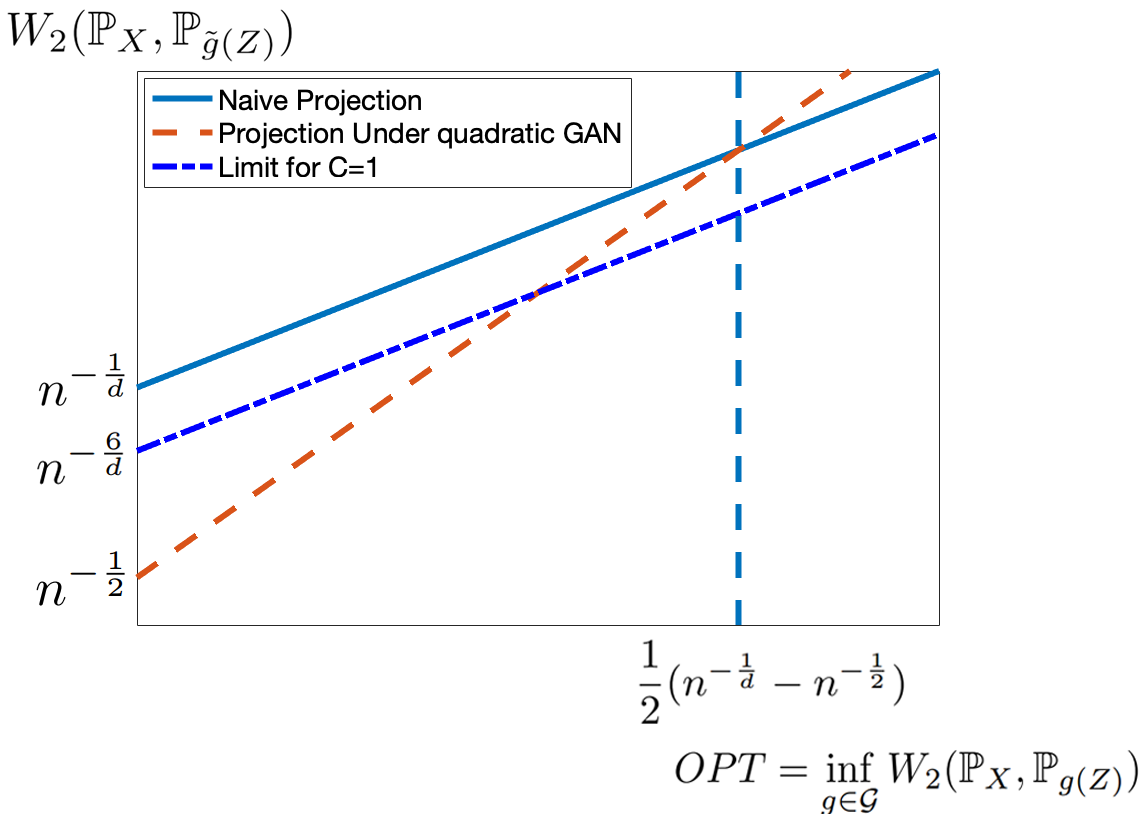}
    \caption{The trade-off between oracle error $OPT$ and stochastic error $\epsilon_n$.  Although Quadratic GAN suffers from a larger prefactor $C = 2$ for the oracle error term, it achieves significantly smaller overall error when the generator family is close to the target data distribution. 'Limit for $C=1$' line represents the best possible performance we can achieve if we want the slope to be $1$, which still requires exponential sample complexity when $OPT=0$. For any $OPT<\frac{1}{2}(n^{-1/d}-n^{-1/2})$ (left of the vertical line), the projection under the quadratic GAN method achieves smaller overall error. Beyond the vertical line, one may consider switching to a more appropriate generator family. }
    \label{fig:fundamental_limit}
    
\end{figure}

Combining Theorem~\ref{thm:new_convergence2} and~\ref{thm:tightness}, we know that for the non-Gaussianness term, the prefactor $2$ is achievable by the Quadratic GAN, while the constant $1$ is not achievable without sacrificing the stochastic error. It is also shown in~\cite{bousquet2019optimal} that the sacrifice of constant factor in general cannot be avoided if we require proper learning.  Can we pin down the exact constant while still keeping the stochastic error term small? The next theorem shows that the Quadratic GAN method cannot achieve any constant that is strictly less than $\sqrt{2}$ even with \emph{infinite} sample size. 

\begin{theorem}
\label{thm:sqrt2_lowerbound}
Suppose 
\begin{align}
    \cG = \{g(Z) = \mathbf{A}Z: \mathbf{A} \in \mathbb{R}^{d\times d}, Z \in \mathbb{R}^d\},
\end{align}
and the GAN estimator $\tilde{g}$ is defined in~(\ref{equ:project_weakened_W2}). Then, for any $\epsilon>0$, there exists some distribution $\bbP_X, \mathbb{E}_{\bbP_X}[X] = 0, \inf_{g\in \cG} W_2(\bbP_{g(Z)}, \bbP_X)>0$, such that with infinite sample size, we have
\begin{align}
W_2(\bbP_{\tilde g(Z)}, \bbP_X) \geq (\sqrt{2}-\epsilon)\inf_{g\in \cG} W_2(\bbP_{g(Z)}, \bbP_X). 
\end{align}
\end{theorem}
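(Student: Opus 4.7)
The plan is to show that with infinite samples the Quadratic GAN just moment-matches, then reduce to a one-dimensional instance, and finally exhibit a ``spiky'' scalar law on which the $W_2$-ratio can be driven arbitrarily close to $\sqrt{2}$. First, with infinite samples and $\mathbb{E}_{\bbP_X}[X]=0$, the objective $\tW_2(\bbP_{g(Z)},\bbP_X) = W_2(\cN(\mathbf{0},AA^T),\cN(\mathbf{0},\Sigma_X))$ vanishes iff $AA^T = \Sigma_X$, so $\bbP_{\tilde g(Z)} = \cN(\mathbf{0},\Sigma_X)$. The goal is then to find centered $\bbP_X$ with $W_2(\cN(\mathbf{0},\Sigma_X),\bbP_X) \geq (\sqrt{2}-\epsilon)\inf_{\Sigma \succeq 0} W_2(\cN(\mathbf{0},\Sigma),\bbP_X)$, since $\{AA^T:A\in\mathbb{R}^{d\times d}\}$ ranges over all PSD matrices.

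Next, reduce to one dimension. Take a unit vector $v \in \mathbb{R}^d$ and set $X = cv$ for a centered scalar $c$. Using the orthogonal decomposition $\|Y-cv\|^2 = (v^TY - c)^2 + \|(I-vv^T)Y\|^2$ and the fact that the second term depends only on the marginal $Y \sim \cN(\mathbf{0},\Sigma)$, one obtains the identity
\[
W_2^2(\cN(\mathbf{0},\Sigma),\bbP_X) = W_2^2\!\bigl(\cN(0, v^T\Sigma v),\bbP_c\bigr) + \Tr(\Sigma) - v^T\Sigma v.
\]
Minimizing over $\Sigma \succeq 0$ kills the orthogonal trace term (take $\Sigma = \tau^2 vv^T$), so $\inf_\Sigma W_2^2(\cN(\mathbf{0},\Sigma),\bbP_X) = \inf_{\tau\geq 0} W_2^2(\cN(0,\tau^2),\bbP_c)$, while $\Sigma_X = \sigma_c^2 vv^T$ yields $W_2^2(\cN(\mathbf{0},\Sigma_X),\bbP_X) = W_2^2(\cN(0,\sigma_c^2),\bbP_c)$.

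In one dimension, the quantile formula gives $W_2^2(\cN(0,\tau^2),\bbP_c) = \tau^2 + \sigma_c^2 - 2\tau\rho$ with $\rho := \int_0^1 \Phi^{-1}(t)F_c^{-1}(t)\,dt$; minimization gives $\tau^\star = \rho$ with value $\sigma_c^2 - \rho^2$, while $\tau = \sigma_c$ gives $2\sigma_c(\sigma_c - \rho)$. Their ratio is $\frac{2\sigma_c}{\sigma_c+\rho}$, approaching $2$ (i.e., the $W_2$-ratio $\to\sqrt{2}$) as $\rho/\sigma_c \to 0$. Non-Gaussianity of $c$ ensures $\rho < \sigma_c$ strictly by Cauchy--Schwarz, so the oracle error stays strictly positive. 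To drive $\rho/\sigma_c$ to $0$, take $\Pr(c=0) = 1-\eta$ and $\Pr(c=\pm 1) = \eta/2$: then $\sigma_c = \sqrt\eta$ and $\rho = 2\phi(\Phi^{-1}(1-\eta/2)) \asymp \eta\sqrt{\log(1/\eta)}$ by Mill's ratio, so $\rho/\sigma_c \asymp \sqrt{\eta\log(1/\eta)} \to 0$ as $\eta\to 0$, and any constant below $\sqrt{2}$ is achievable.

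The main obstacle is the 1D reduction: verifying the additive decomposition of $W_2^2(\cN(\mathbf{0},\Sigma),\bbP_X)$ into parallel and orthogonal parts, and showing the infimum over $\Sigma$ is attained on the rank-one family $\Sigma = \tau^2 vv^T$. Once that identity is in place, the rest is a one-parameter quantile calculation governed by the standard Mill's ratio asymptotic $\phi(\Phi^{-1}(1-\eta/2)) \sim (\eta/2)\sqrt{-2\ln(\eta/2)}$.
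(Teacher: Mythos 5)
Your proof is correct and tracks the paper's argument at the level of the key mechanism: with infinite samples the Quadratic GAN outputs the moment-matching Gaussian $\cN(\mathbf 0,\Sigma_X)$, and one exhibits a scalar law whose $W_2$-ratio with its own moment-matching Gaussian approaches $\sqrt 2$. The 1D law you use (mass $1-\eta$ at $0$, mass $\eta/2$ at $\pm 1$) is, up to a harmless rescale $\eta = 1/a^2$, exactly the $Q_a$ the paper uses. Where you diverge is in how the 1D problem is isolated and how the maximal-correlation term is controlled. The paper embeds $Q_a$ as one coordinate of a product distribution with the remaining $d-1$ coordinates i.i.d.\ standard Gaussian and appeals to tensorization of $W_2^2$, restricting to diagonal covariances to upper-bound the oracle error. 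You instead take a rank-one $X = cv$ and prove the orthogonal decomposition $W_2^2(\cN(0,\Sigma),\bbP_X) = W_2^2(\cN(0, v^T\Sigma v),\bbP_c) + \operatorname{Tr}(\Sigma) - v^T\Sigma v$, which is correct (the second term is coupling-independent, and the first is attained by monotone rearrangement of $v^TY$ against $c$) and cleanly shows the infimum over all $\Sigma\succeq 0$ is attained on the rank-one family $\tau^2 vv^T$; the paper's restriction yields only the needed upper bound. For the correlation $\rho = \sup_\pi\bbE[ZY]$ between $Z\sim\cN(0,1)$ and $c$, you compute the exact value $\rho = 2\phi(\Phi^{-1}(1-\eta/2))$ from the quantile formula and use Mill's ratio to get $\rho/\sigma_c \asymp \sqrt{\eta\log(1/\eta)}\to 0$; the paper instead bounds $\rho$ from above by a Young's-inequality argument with a hand-built convex function, obtaining a weaker but sufficient rate. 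Your version is slightly sharper and more self-contained in the 1D step; the paper's product construction has the minor advantage of producing a full-rank $\bbP_X$ rather than a degenerate one, but the theorem statement imposes no such requirement, so both are valid.
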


\subsection{Admissible distances for Cascade GANs}

In the Cascade GAN setting in~(\ref{equ:cascaded}), we have motivated the usage of the distance
\begin{align}\label{eqn.cascadelpopulation}
    L(\bbP, \bbQ) =  \min_{g' \in \cG'} L_1(\bbP_{g'(Z)}, \bbQ) + \lambda L_2(\bbP, \bbP_{g'(Z)})
\end{align}
when the overall perturbation cannot be represented as only perturbing under $L_1$ or $L_2$, but a cascaded perturbation $L_2$ followed by $L_1$. In this section, we consider the generalization problem~(\ref{eqn:generalization}) for this new distance. 

\begin{theorem}
\label{thm:cascaded}
Assume we have $L_1'$ as an admissible distance for ($L_1, \cG'$) with parameter $(c_1, c_2, L_1'')$, $L_2'$ as an admissible distance for ($L_2, \cG'$) with parameter $(d_1, d_2, L_2'')$ for generator family $\cG'$, and $\cG\subset \cG'$. We further assume that $L_1'(\bbP_g, \bbP_g) = L_2'(\bbP_g, \bbP_g) = 0$ for all $g\in\mathcal{G}'$. Define the new distance as
\begin{align}
    L'(\bbP, \bbQ) = \min_{g'\in\cG'} L_1'(\bbP_{g'(Z)},\bbQ) + \lambda L_2'(\bbP,\bbP_{g'(Z)}).
\end{align}
Then, $L'$ is an admissible distance for $L$ in~(\ref{eqn.cascadelpopulation}) and $\cG$ with parameter 
\[
\left(\max(c_1, d_1), \max(c_1, d_1)\cdot \left(\frac{c_2}{c_1} +\frac{d_2}{d_1} \right ),L_1'' \right ),
\]
and if we define the GAN estimator as
\begin{align}
    \hat{g} = \argmin_{g\in \cG} L'(\bbP_{g(Z)}, \hat{\bbP}_X^n),
\end{align}
we have
\begin{align}
    L(\bbP_{\hat{g}(Z)}, \bbP_X) \leq  (1+2\max(c_1, d_1))OPT + 2\max(c_1, d_1) L_1''(\bbP_X,  \hat{\bbP}_X^n) + \max(c_1, d_1)\cdot \left(\frac{c_2}{c_1} +\frac{d_2}{d_1} \right )，
\end{align}
where
\begin{align}
    OPT = \inf_{g\in \cG} L(\bbP_{g(Z)}, \bbP_X). 
\end{align}
\end{theorem}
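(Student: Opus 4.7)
The proof reduces to verifying the two conditions of Definition~\ref{def.admissible} for $L'$ with the claimed parameters, after which Theorem~\ref{thm.populationinsights} can be invoked as a black box to obtain the finite-sample bound. The construction of $L'$ as a \emph{joint} infimum over a single intermediate $g'\in\cG'$ is what makes the bookkeeping clean: I only pay one ``$\max$'' when combining the two inner admissibility guarantees, rather than paying two and then coupling them with a triangle-inequality step.

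\textbf{Approximate resolution within generators.} Fix $g_1,g_2\in\cG\subset\cG'$. Because $L_1'(\bbP_g,\bbP_g)=L_2'(\bbP_g,\bbP_g)=0$ for every $g\in\cG'$, the choice $g'=g_2$ in the defining infimum yields $L'(\bbP_{g_2(Z)},\bbP_{g_2(Z)})=0$. Let $g^*\in\cG'$ be any minimizer in $L'(\bbP_{g_1(Z)},\bbP_{g_2(Z)})$. Applying the approximate-resolution property of $L_1'$ at the pair $(g^*,g_2)\in\cG'\times\cG'$ and of $L_2'$ at $(g_1,g^*)\in\cG'\times\cG'$ gives
\begin{align*}
c_1\, L_1'(\bbP_{g^*(Z)},\bbP_{g_2(Z)}) \geq L_1(\bbP_{g^*(Z)},\bbP_{g_2(Z)}) - c_2, \quad d_1\, L_2'(\bbP_{g_1(Z)},\bbP_{g^*(Z)}) \geq L_2(\bbP_{g_1(Z)},\bbP_{g^*(Z)}) - d_2.
\end{align*}
Dividing by $c_1$ and $d_1$ respectively, weighting the second inequality by $\lambda$, and summing produces a lower bound on $L'(\bbP_{g_1(Z)},\bbP_{g_2(Z)})$. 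Multiplying through by $\max(c_1,d_1)$ drives the coefficients in front of $L_1$ and $\lambda L_2$ up to at least $1$; the infimum definition of $L$ then lets me replace $L_1(\bbP_{g^*(Z)},\bbP_{g_2(Z)}) + \lambda L_2(\bbP_{g_1(Z)},\bbP_{g^*(Z)})$ by $L(\bbP_{g_1(Z)},\bbP_{g_2(Z)})$, delivering the resolution inequality with first parameter $\max(c_1,d_1)$ and additive constant $\max(c_1,d_1)(c_2/c_1+d_2/d_1)$.

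\textbf{Robustness to perturbation.} For $g\in\cG$ and arbitrary $\bbP_1,\bbP_2$, let $g_2^*\in\cG'$ achieve the minimum defining $L'(\bbP_{g(Z)},\bbP_2)$. Using the same $g_2^*$ as a feasible (possibly suboptimal) choice inside the infimum for $L'(\bbP_{g(Z)},\bbP_1)$ and canceling the common $\lambda L_2'(\bbP_{g(Z)},\bbP_{g_2^*(Z)})$ term yields
\begin{align*}
L'(\bbP_{g(Z)},\bbP_1) - L'(\bbP_{g(Z)},\bbP_2) \leq L_1'(\bbP_{g_2^*(Z)},\bbP_1) - L_1'(\bbP_{g_2^*(Z)},\bbP_2) \leq L_1''(\bbP_1,\bbP_2),
\end{align*}
using the perturbation property of $L_1'$ in the last step. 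Reversing the roles of $\bbP_1,\bbP_2$ gives the two-sided bound, so $L_1''$ is a valid witness for the robustness condition of $L'$. With admissibility of $L'$ now in hand with parameter $\bigl(\max(c_1,d_1),\,\max(c_1,d_1)(c_2/c_1+d_2/d_1),\,L_1''\bigr)$, Theorem~\ref{thm.populationinsights} applied to the empirical minimizer $\hat g = \argmin_{g\in\cG}L'(\bbP_{g(Z)},\hat\bbP_X^n)$ immediately produces the stated bound on $L(\bbP_{\hat g(Z)},\bbP_X)$. The only genuinely delicate point is the \emph{shared} minimizer $g^*$ in the resolution step: it is crucial that one feeds the \emph{same} intermediate generator into both inner admissibility bounds, and this single-minimizer coupling is free precisely because $L'$ is defined as one joint infimum rather than two independent ones.
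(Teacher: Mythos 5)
Your proof is correct and follows essentially the same route as the paper's: you verify the two conditions of Definition~\ref{def.admissible} by (i) using the shared minimizer $g^*$ in the resolution step, dividing the inner admissibility bounds by $c_1$ and $d_1$, rescaling by $\max(c_1,d_1)$, and then lower-bounding the resulting sum by $L$ via its defining infimum, and (ii) plugging the $\bbP_2$-minimizer into the $\bbP_1$-infimum to cancel the common $\lambda L_2'$ term and inherit $L_1''$ as a witness, before invoking Theorem~\ref{thm.populationinsights}. The paper organizes the robustness step by introducing both $g_1^*$ and $g_2^*$ and arguing WLOG, but this is the same cancellation you perform.
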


\subsection{Testing the strength of discriminators with empirical samples}

In order to achieve a fast convergence rate, it is crucial that $L''$ in Theorem~\ref{thm.populationinsights} is weak enough. As illustrated in Theorem~\ref{thm:naive_convergence_w2}, with inappropriate choice of discriminator, discriminator winning phenomenon will lead to poor performance of GANs. In this section, we propose a principled way to \emph{predict} the number of samples needed to alleviate the discriminator winning phenomenon  from empirical samples by a matching theory~\cite{ambrosio2018finer}. Concretely, we aim at estimating the distance between empirical distribution and its population distribution $L(\hat{\bbP}_X^n,\bbP_X)$ using \emph{only} empirical samples. 

The method we propose is as follows. Given $N$ samples from true distribution, we select the first $n<N/2$ samples to form empirical distribution $\hat \bbP_1$, and another disjoint $n$ samples to form empirical distribution $\hat \bbP_2$. Then we compute $L(\hat \bbP_1, \hat \bbP_2)$ as a \emph{proxy} for $L(\bbP_X, \hat{\bbP}_X^n)$. A special case of this method is known as maximum discrepancy~\cite{Bartlett2002} in the literature. However, as is shown below, our method is more general and can be applied to any pseudometric that is convex in its first argument. Our non-asymptotic result below can be seen as a generalization of the asymptotic justification for matching method in~\cite{rippl2016limit}. 

\begin{theorem}\label{thm:strengthtesting}
 Given $N$ samples from true distribution, we select the first $n<N/2$ samples to form empirical distribution $\hat \bbP_1$, and another disjoint $n$ samples to form empirical distribution $\hat \bbP_2$. Suppose $L(\bbP, \bbQ)$ is a pseudometric which is convex in its first argument. Then, 
\begin{align}
  \bbE[L(\bbP_X, \hat \bbP_X^n)] \leq \bbE[L(\hat \bbP_1, \hat \bbP_2)] \leq 2\bbE[L(\bbP_X, \hat \bbP_X^n)]. 
\end{align}
\end{theorem}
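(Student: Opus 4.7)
The plan is to derive the two inequalities essentially separately: the upper bound will follow from the triangle inequality (available since $L$ is a pseudometric), while the lower bound will follow from Jensen's inequality applied to the convexity of $L$ in its first argument, combined with the independence of $\hat{\bbP}_1$ and $\hat{\bbP}_2$.

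For the upper bound, I would observe that by the triangle inequality,
\[
L(\hat{\bbP}_1, \hat{\bbP}_2) \leq L(\hat{\bbP}_1, \bbP_X) + L(\bbP_X, \hat{\bbP}_2).
\]
Taking expectations and using symmetry of a pseudometric, both terms on the right equal $\bbE[L(\bbP_X, \hat{\bbP}_X^n)]$, since $\hat{\bbP}_1$ and $\hat{\bbP}_2$ are each formed from $n$ i.i.d.\ samples of $\bbP_X$, hence are distributed identically to $\hat{\bbP}_X^n$. This immediately yields $\bbE[L(\hat{\bbP}_1, \hat{\bbP}_2)] \leq 2\,\bbE[L(\bbP_X, \hat{\bbP}_X^n)]$.

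For the lower bound, the key idea is to condition on $\hat{\bbP}_2$ and average out $\hat{\bbP}_1$. Since $\hat{\bbP}_1$ is a random probability measure with $\bbE[\hat{\bbP}_1(A)] = \bbP_X(A)$ for every measurable $A$, we have $\bbE[\hat{\bbP}_1] = \bbP_X$ in the sense of measures. The samples forming $\hat{\bbP}_1$ are disjoint from those forming $\hat{\bbP}_2$, so $\hat{\bbP}_1 \perp \hat{\bbP}_2$. Then by convexity of $L(\cdot, \bbQ)$ in its first argument, applied conditionally on $\hat{\bbP}_2$ and combined with Jensen's inequality,
\[
\bbE\bigl[L(\hat{\bbP}_1, \hat{\bbP}_2)\,\big|\,\hat{\bbP}_2\bigr] \;\geq\; L\bigl(\bbE[\hat{\bbP}_1 \mid \hat{\bbP}_2],\,\hat{\bbP}_2\bigr) \;=\; L(\bbP_X, \hat{\bbP}_2).
\]
Taking outer expectations and using that $\hat{\bbP}_2 \stackrel{d}{=} \hat{\bbP}_X^n$ and $L$ is symmetric gives $\bbE[L(\hat{\bbP}_1, \hat{\bbP}_2)] \geq \bbE[L(\bbP_X, \hat{\bbP}_X^n)]$.

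The only subtle point is justifying the Jensen step for measure-valued random variables: one interprets $L(\cdot, \hat{\bbP}_2)$ as a convex functional on the convex set of probability measures containing the (a.s. finite) range of $\hat{\bbP}_1$, and the mean measure $\bbE[\hat{\bbP}_1]$ is well-defined coordinate-wise on the $n$-sample empirical simplex. Alternatively, one can write $\hat{\bbP}_1 = \tfrac{1}{n}\sum_{i=1}^n \delta_{X_i}$ and repeatedly apply finite convexity: $L(\tfrac{1}{n}\sum_i \delta_{X_i}, \hat{\bbP}_2) \leq \tfrac{1}{n}\sum_i L(\delta_{X_i}, \hat{\bbP}_2)$, take expectation over the $X_i$'s, then apply convexity once more to pull the expectation inside. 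This is a technical but standard maneuver, and I expect it to be the only place where care is needed.
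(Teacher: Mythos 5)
Your main argument is correct and is essentially the paper's proof: triangle inequality of the pseudometric for the upper bound, and Jensen's inequality applied to the convexity of $L$ in its first argument, combined with independence of $\hat{\bbP}_1$ and $\hat{\bbP}_2$, for the lower bound.

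One caution about the "alternative" technical justification in your final paragraph: it does not work. Writing $\hat{\bbP}_1 = \frac{1}{n}\sum_i \delta_{X_i}$ and applying finite convexity gives
\begin{align}
L\left(\frac{1}{n}\sum_i \delta_{X_i}, \hat{\bbP}_2\right) \leq \frac{1}{n}\sum_i L(\delta_{X_i}, \hat{\bbP}_2),
\end{align}
which is an \emph{upper} bound on $L(\hat{\bbP}_1, \hat{\bbP}_2)$ and therefore cannot be chained into the desired \emph{lower} bound $\bbE[L(\hat{\bbP}_1, \hat{\bbP}_2)] \geq \bbE[L(\bbP_X, \hat{\bbP}_2)]$. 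The structural convex combination $\frac{1}{n}\sum_i \delta_{X_i}$ appears inside the function argument, so convexity pushes it outward (average of values), whereas the Jensen step you actually need averages over the \emph{randomness} of the samples and pushes the expectation inward ($\bbE[f(\hat{\bbP}_1)] \geq f(\bbE[\hat{\bbP}_1])$); these are dual uses of convexity, and only the latter does the job. Moreover, the "apply convexity once more to pull the expectation inside" step invokes exactly the same Jensen-for-random-measures fact you were trying to sidestep. Your first justification — treating $\hat{\bbP}_1$ as a random element of the convex set of probability measures and applying Jensen to the convex functional $\bbP \mapsto L(\bbP, \hat{\bbP}_2)$ conditionally on $\hat{\bbP}_2$ — is the right (and the paper's implicit) argument; drop the alternative.
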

The proof is deferred to Appendix \ref{pf:strengthtesting}. In the experiments below, we use the matching method to successfully recover the exponent as predicted in Theorem \ref{thm:naive_convergence_w2} and \ref{thm:new_convergence2}. We conduct the experiments for estimating $W_2$ and $\tW_2$ distances between an isotropic Gaussian distribution and its empirical distribution with $d = 10$ and $d=20$, where the result is shown in Figure \ref{fig:matching}. We can see that the matching method succesfully recovers the exponent of sample size $n$.  In our analysis, it is shown that for $d\geq 3$ and $n$ large enough, the quadratic GAN can achieve convergence rate as $dn^{-1/2}$, and the fitted rate from two figures are $d^{1.00}n^{-0.53}$. Meanwhile the convergence rate for the naive $W_2$ GAN structure is lower bounded by  $d^{1/2}n^{-3/d}$ and upper bounded by $C(d)n^{-1/d}$. The fitted rate is $d^{0.48}n^{-1.24/d}$. We conjecture that the convergence rate is exactly $d^{1/2}n^{-1/d}$ up to universal multiplicative constants for the $W_2$ distance when $d\geq 3$, whose validity is currently an open problem in optimal transport theory. Based on the matching method, we can estimate the number of samples required for $L(\bbP_X, \hat{\bbP}_X^n)$ to achieve certain error $\epsilon$, which helps predict when discriminator winning problem occurs.

\begin{figure}[htb!]
    \centering
    \begin{subfigure}[t]{0.45\textwidth}
    \includegraphics[width=\linewidth]{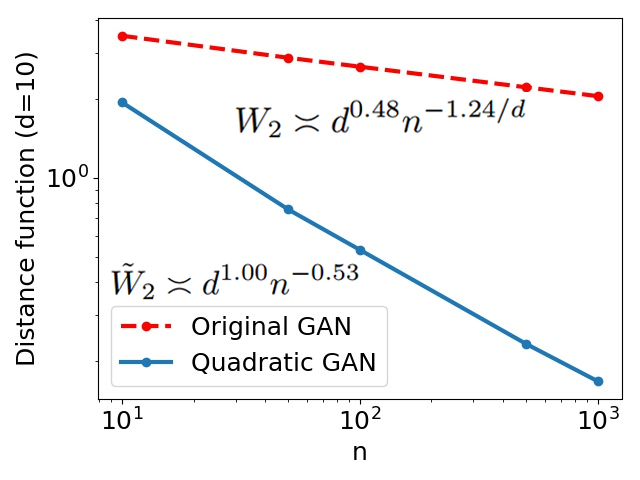}
     \end{subfigure}
     ~\begin{subfigure}[t]{0.45\textwidth}
    \includegraphics[width=\linewidth]{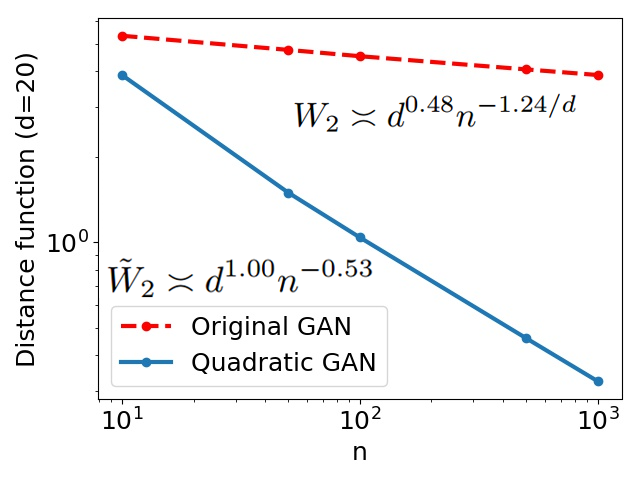}
     \end{subfigure}
    \caption{The estimated $W_2$ and $\tW_2$ distances between a high dimensional isotropic Gaussian distribution and its empirical distribution for original $W_2$ GAN and Quadratic GAN. The left figure is for 10-dimensional Gaussian and the right figure is for 20-dimensional Gaussian. The $x$ and $y$ axes are all logarithmic. The fitted coefficients of linear model from two lines show that the matching method produces results that are consistent with theoretical predictions with reasonably small sample sizes. }
    \label{fig:matching}
\end{figure}

\section{Optimization of GANs}
\label{sec:computation}

GANs are usually optimized using alternating gradient descent (AGD), which is known to be difficult to train, with little known stability or convergence guarantees. An unstable training procedure may lead to {\bf mode collapse}. Given a learning algorithm proposed from statistical considerations, it is crucial to ensure that it is \emph{provably computationally efficient}. Thus it is of paramount importance to ensure that widely used algorithms such as AGD are able to \emph{provably} solve the GAN optimization problem.

It was shown in prior work~\cite{feizi2017understanding} that the optimal generated distribution under (\ref{equ:project_weakened_W2}) is a Gaussian distribution whose covariance matrix is the $r$-PCA of the \emph{empirical} covariance matrix $\hat{\Sigma}$. However, in practice, PCA is solved using specialized algorithms but not alternating gradient descent. Recent work in Fr\'echet Inception distance~\cite{heusel2017gans} also suggests training GAN under the quadratic GAN distance $\tilde W_2$ between coding layers. Hence, it is natural to answer whether \emph{particularizing} the alternating gradient descent algorithm to this specific setting produces a \emph{convergent, stable, and efficient} algorithm for the $r$-PCA. We mention that \cite{feizi2017understanding} has obtained some results in this direction but here we explore the question in greater depth.

In this section, we show that for the linear generator family 
\begin{align}\label{eqn.grdefcom}
         \cG_r = \{g(Z) = \mathbf{A}Z: \mathbf{A} \in \mathbb{R}^{d\times r}, Z\in \mathbb{R}^r\}
\end{align}
and $W_2$ distance, AGD is globally asymptotically stable when $r=d$, but is generally not \emph{locally asymptotically stable} when $r<d$. We propose another GAN architecture based on parameter sharing which is provably \emph{globally asymptotically stable}\footnote{The definition of local and global asymptotic stability is in Appendix (\ref{appendix.def}).} and successfully computes the PCA using AGD. Intuitively, one reason why AGD is not \emph{locally asymptotically stable} in solving the $W_2$-GAN formulation of PCA is that the minimax value is not the same as the maximin value in the GAN formulation, but AGD does not favor minimax explicitly over maximin. However, we do emphasize that this is not the sole reason, since the example in Figure~\ref{fig:dynamics_sim} has duality gap zero, but alternating gradient descent is not asymptotically stable in solving the minimax. 

The key insight we draw is that utilizing information on where the maximum is attained can help us design a \emph{new minimax} formulation whose duality gap (minimax value minus maximin value) is zero, while at the same time the new minimax value is \emph{equal} to the old minimax value. It is in constrast with other convexification approaches such as MIX-GAN~\cite{arora2017generalization}, MAD-GAN~\cite{ghosh2017multi} and Stackelberg GAN~\cite{zhang2019stackelberg}, where new minimax problems are proposed with smaller duality gap, but the new minimax problem shares \emph{neither} the same minimax value \emph{nor} the same generator set as the old formulation. Indeed, applying their formulations to the PCA setting mentioned above would trivialize the problem: by using multiple generators it is equivalent to finding the best \emph{full}-dimensional Gaussian approximation for an arbitrary distribution, and the Quadratic GAN would simply output the Gaussian distribution whose mean and covariance match the observed distribution. 

We summarize the minimax and maximin values in Table~\ref{tab:value} for the original Quadratic GAN in Theorem~\ref{thm:instability} and proposed Parameter Sharing GAN in Theorem~\ref{thm:stability}. One can see that for original Quadratic GAN, the duality gap is non-zero, while Parameter Sharing GAN circumvents this issue, resulting in global stability for alternative gradient descent.

\begin{table}[htb]
\caption{Minimax and maximin values of original Quadratic GAN in (\ref{optimization1}) for $r = 1$ and Parameter Sharing GAN in (\ref{eqn.parameter_sharing}).}
\centering
\label{tab:value}
%\wuhao[1.5]
\begin{tabular}{c|c|c}
\toprule[2pt]
& Original GAN for $r = 1$ & Parameter Sharing GAN  \\
\hline

%\midrule[1pt]
minimax & $\operatorname{Tr}(K)-\lambda_1$ & $\operatorname{Tr}(K)-\lambda_1$ \\
maximin &  0  & $\operatorname{Tr}(K)-\lambda_1$ \\

\bottomrule[1pt]
\end{tabular}
\end{table}

\subsection{Instability for optimization in degenerate case}\label{subsec.instability}

Let $\cG_r$ be defined in~(\ref{eqn.grdefcom}). Theorem~\ref{thm:new_convergence2} shows that the algorithm specified in (\ref{equ:project_weakened_W2_com}) below
\begin{align}\label{equ:project_weakened_W2_com}
\tilde g & = \argmin_{g(\cdot)\in \mathcal{G}_r} W_2(\bbP_{g(Z)}, \mathcal{N}( \hat{\mu}, \hat{\Sigma})) 
\end{align}
enjoys desirable statistical properties. Can this problem be solved efficiently numerically? First, we show two equivalent minimax formulation of~(\ref{equ:project_weakened_W2_com}) via the following theorem:
%, the details and proof of this theorem is included in Appendix \ref{appen:degenerate}.

\begin{theorem}
\label{thm:equ}
The GAN optimization problem in equations (\ref{equ:project_weakened_W2_com}) with $\hat \mu = 0$ can be cast as several equivalent forms. We can either do
\begin{align}
\label{optimization1}
\min_{U\geq 0, r(U)\leq r} \sup_{A>0} \operatorname{Tr}( (I-A)K + (I-A^{-1})U)
\end{align}
or do
\begin{align}
\min_{U\geq 0, r(U)\leq r} \sup_{A\geq 0, R(U)\subset R(A)} \operatorname{Tr}( (I-A)K + (I-A^{\dagger})U). 
\end{align}
Here $K = \hat{\Sigma}$, $A$ is the discriminator, and $U$ is the covariance matrix for the distribution of generator.
\end{theorem}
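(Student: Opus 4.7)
The plan is to reduce the primal minimization~(\ref{equ:project_weakened_W2_com}) to the stated minimax forms by combining the dual representation of $W_2^2$ with the fact that the Brenier optimal transport potential between two centered Gaussians is a positive semidefinite quadratic form. Since $\hat\mu=0$ and $g\in\cG_r$ is linear, the distribution $\bbP_{g(Z)}$ is the centered Gaussian $\cN(0,U)$ whose covariance $U\geq 0$ has rank at most $r$, and every such $U$ is realized in this way. Thus~(\ref{equ:project_weakened_W2_com}) reduces to $\min_{U\geq 0,\,r(U)\leq r} W_2^2(\cN(0,U),\cN(0,K))$, and it suffices to verify the two supremum representations of $W_2^2(\cN(0,U),\cN(0,K))$ for any feasible $U$.

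For the first form, I would invoke the duality~(\ref{equ:w2dual}) with $X\sim\cN(0,K)$ and $Y\sim\cN(0,U)$. By Brenier's theorem the optimal convex potential $\psi$ is the one whose gradient is the optimal transport map; between centered Gaussians this map is a symmetric PSD linear transformation, so the supremum may be restricted to positive definite quadratic forms $\psi(x)=\tfrac12 x^T A x$ with $A>0$. Standard Gaussian moment computations give $\mathbb{E}[\psi(X)]=\tfrac12\operatorname{Tr}(AK)$, $\psi^\ast(y)=\tfrac12 y^T A^{-1}y$, and $\mathbb{E}[\psi^\ast(Y)]=\tfrac12\operatorname{Tr}(A^{-1}U)$; substituting these together with $\mathbb{E}\|X\|^2=\operatorname{Tr}(K)$ and $\mathbb{E}\|Y\|^2=\operatorname{Tr}(U)$ into~(\ref{equ:w2dual}) collapses the dual to $\sup_{A>0}\operatorname{Tr}((I-A)K+(I-A^{-1})U)$, which is exactly~(\ref{optimization1}).

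For the second form I would pass from the open constraint $A>0$ to the closed constraint $A\geq 0$ with $R(U)\subset R(A)$ by a perturbation-limit argument; this is the most delicate step because the supremum in~(\ref{optimization1}) is generally not attained when $U$ is rank deficient. The direction ``first form $\leq$ second form'' is immediate because any $A>0$ satisfies $R(A)=\bbR^d\supset R(U)$ and $A^{-1}=A^\dagger$. For the reverse, fix $A\geq 0$ with $R(U)\subset R(A)$ and set $A_\varepsilon=A+\varepsilon I>0$. Diagonalizing $A$ in an orthonormal basis aligned with $R(A)$ and $\ker(A)$, the matrix $A_\varepsilon^{-1}$ splits into a part converging to $A^\dagger$ on $R(A)$ and a $\tfrac{1}{\varepsilon}$-blowup on $\ker(A)$; the blowup contributes $\tfrac{1}{\varepsilon}\operatorname{Tr}(P_{\ker(A)}U)$ to $\operatorname{Tr}(A_\varepsilon^{-1}U)$, which vanishes because $R(U)\subset R(A)$ forces $P_{\ker(A)}U=0$. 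Consequently $\operatorname{Tr}((I-A_\varepsilon^{-1})U)\to\operatorname{Tr}((I-A^\dagger)U)$ and $\operatorname{Tr}((I-A_\varepsilon)K)\to\operatorname{Tr}((I-A)K)$, so the value at $A$ in the second form is the $\varepsilon\downarrow 0$ limit of values at $A_\varepsilon$ in the first form, establishing the reverse inequality after taking suprema. A residual subtlety is rigorously justifying the restriction to quadratic $\psi$ when $U$ is rank deficient, which can be handled either by approximating $U$ by $U+\delta I$ and invoking continuity of $W_2^2$ under weak convergence of Gaussians, or by comparing against the closed-form expression $W_2^2(\cN(0,U),\cN(0,K))=\operatorname{Tr}(U)+\operatorname{Tr}(K)-2\operatorname{Tr}((K^{1/2}UK^{1/2})^{1/2})$ as an independent sanity check for the supremum value.
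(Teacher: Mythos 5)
Your reduction of the problem to proving two supremum representations of $W_2^2(\cN(0,U),\cN(0,K))$ is correct, and your overall route — invoke the convex-dual form~(\ref{equ:w2dual}), argue via Brenier's theorem that the optimal potential is quadratic, and then connect the open constraint $A>0$ to the closed constraint $A\geq 0,\ R(U)\subset R(A)$ by an $\varepsilon$-perturbation — is genuinely different from the paper's. The paper never reasons about the shape of the optimal Brenier potential at all. Instead it works in the primal, introducing the set of admissible Gaussian couplings via the PSD constraint $\bigl[\begin{smallmatrix}A & \Psi\\ \Psi^T & B\end{smallmatrix}\bigr]\geq 0$, and obtains the lower bound $\sup_{S}\operatorname{Tr}((I-S)A+(I-S^{\dagger})B)\leq W_2^2$ directly from the pointwise Young inequality $x^Ty\leq f(x)+f^\ast(y)$ applied to $f(x)=\tfrac12 x^T S x$; the matching upper bound comes from an explicit feasible coupling $\Psi_0 = A\,B^{1/2}\bigl((B^{1/2}AB^{1/2})^{1/2}\bigr)^{\dagger}B^{1/2}$ (Lemma~\ref{lemma.psi0construction}), and the two quadratic dual programs are tied to the same value via Anderson's theorem $\inf_{S>0}\operatorname{Tr}(AS+BS^{-1})=2\operatorname{Tr}((A^{1/2}BA^{1/2})^{1/2})$ (Lemma~\ref{thm.dualpd}). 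This sidesteps all regularity questions and, crucially, gives the equalities for arbitrary PSD $A,B$, with no assumption that either covariance is nonsingular.

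The ``residual subtlety'' you flag is the one genuine gap. Brenier's theorem requires the source distribution to be absolutely continuous. If $K=\hat\Sigma$ is nonsingular this is fine, but nothing in the theorem assumes it, and when $n<d$ the empirical covariance is always rank-deficient, in which case your justification that the supremum over convex $\psi$ may be restricted to quadratic forms does not directly apply. Your first suggested fix (perturb $U$ to $U+\delta I$ and invoke continuity of $W_2^2$) addresses the wrong degeneracy: the obstruction to Brenier is a degenerate $K$, not a degenerate $U$, and moreover uniformity of $\sup_{A>0}\operatorname{Tr}((I-A^{-1})(U+\delta I))$ as $\delta\downarrow 0$ is itself delicate because $\operatorname{Tr}(A^{-1})$ is unbounded on $\{A>0\}$. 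Your second suggested fix (comparing against the closed-form $\operatorname{Tr}(U)+\operatorname{Tr}(K)-2\operatorname{Tr}((K^{1/2}UK^{1/2})^{1/2})$) is a sanity check, not a proof, and in fact deriving that closed form for degenerate covariances is essentially what Lemma~\ref{lemma.psi0construction} together with Lemma~\ref{thm.dualpd} accomplish. If you want to stay with the Brenier route, the cleanest repair is to note that the Young-inequality direction (quadratic $\psi$ gives a lower bound on $W_2^2$) needs no Brenier at all, and then prove the matching upper bound by a coupling construction — which is exactly the paper's argument, just reached from the other end. Your $\varepsilon I$ perturbation argument linking the two supremum forms, by contrast, is sound as stated.
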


It follows from \cite[Pg. 129]{arrow1958studies} that when $r = d$, alternating gradient descent converges to the optimal solution $U^* = K$~\cite{feizi2017understanding}. However, one of the key features of GANs is that usually the generator distribution is supported on a low-dimensional manifold embedded in the space with high ambient dimensions. What happens when $r<d$? It is not difficult to see that alternate gradient descent is not globally convergent in this case \cite{feizi2017understanding}. We show here that one does not even have local asymptotic stability.  
\begin{theorem}
\label{thm:instability}
The optimization problem (\ref{optimization1}) is generally not locally asymptotically stable when $r(U) = 1$ using alternating gradient descent. 
\end{theorem}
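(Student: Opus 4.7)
The plan is to parametrize the rank-one constraint as $U = uu^T$ with $u\in\mathbb{R}^d$ and to analyze the alternating-gradient flow on the pair $(u,A)$ directly. Under this parametrization, the objective reads
\[
f(u,A) \;=\; \operatorname{Tr}(K) \;-\; \operatorname{Tr}(AK) \;+\; \|u\|^2 \;-\; u^T A^{-1} u,
\]
whose gradients are $\nabla_u f = 2(I - A^{-1})u$ and $\nabla_A f = -K + A^{-1} u u^T A^{-1}$, using $\partial_A \operatorname{Tr}(A^{-1}U) = -A^{-1}UA^{-1}$ for symmetric $U$. The alternating-gradient-descent flow for~(\ref{optimization1}) is therefore $\dot u \propto (A^{-1}-I)u$ for the minimizer and $\dot A \propto -K + A^{-1}uu^T A^{-1}$ for the maximizer.

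The core step is to enumerate the equilibria of this flow inside the feasible domain $\{A\succ 0\}$. The condition $\dot u = 0$ forces either $u = 0$ or $A^{-1}u = u$; in either case $A^{-1}uu^T A^{-1} = uu^T$, so $\dot A = 0$ further requires $K = uu^T$. Whenever $K$ has rank at least two, which is the generic situation the theorem targets, this is impossible. The degenerate candidate $u=0$ also fails because then $\dot A = -K \neq 0$. Consequently the flow has no fixed point in the interior, and since local asymptotic stability requires a fixed point with an attracting neighborhood, the claim follows. To reconcile this picture with the natural candidate solution $U^\star = \lambda_1 v_1 v_1^T$ (with $v_1$ the top eigenvector of $K$), I would observe that the sup over $A$ in~(\ref{optimization1}) is attained only in the limit $A\to v_1 v_1^T$ on the boundary of the PSD cone --- precisely the reason for the non-zero duality gap recorded in Table~\ref{tab:value} --- and that along the $A$-flow the component orthogonal to $v_1$ is driven by $-\sum_{i\geq 2}\lambda_i v_i v_i^T\neq 0$, pushing $A$ toward singularity while $A^{-1}$ and hence $\nabla_u f$ blow up simultaneously.

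The main obstacle I expect is handling the boundary behavior rigorously: because the sup over $A$ is not attained in the interior, the natural ``equilibrium'' is only a limit point, and one must decide what local asymptotic stability even means at such a point. I plan to sidestep this by working entirely inside the open feasible set $\{A\succ 0\}$, on which the absence of any fixed point makes local asymptotic stability vacuous. This also clarifies the qualifier \emph{generally} in the statement: in the non-generic case $\operatorname{rank}(K)=1$ the equation $K=uu^T$ is solvable and AGD can indeed stabilize, consistent with the $r=d$ convergence result cited from~\cite{arrow1958studies}.
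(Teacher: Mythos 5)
Your computation of the gradient flow is correct, and the observation that interior fixed points must satisfy $K = uu^T$ (hence none exist when $\operatorname{rank}(K)\geq 2$) is a valid and interesting algebraic fact. But the proposal contains two genuine gaps.

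First, the ``sidestep'' of working entirely in the open set $\{A\succ 0\}$ does not prove the theorem. ``Absence of any fixed point in the interior'' shows stability analysis is inapplicable there, but the claim the theorem needs is that AGD fails to converge to the true optimizer $(U^\star,A^\star)$ from nearby initializations. The optimizer lives on the boundary $A^\star = v_1v_1^T$, and the paper addresses this head-on: it extends the flow to the boundary by interpreting $A^{-1}$ as the pseudo-inverse $A^\dagger$, verifies $(U^\star, A^\star)$ is then a fixed point of the extended flow, and shows instability there. Your argument is silent about what happens near the boundary, which is where the action is; no interior fixed point is perfectly consistent with trajectories converging to the boundary limit (think $\dot x = -1$ on $(0,\infty)$).

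Second, and more seriously, your conclusion about the rank-one case --- ``the equation $K=uu^T$ is solvable and AGD can indeed stabilize'' --- is exactly backwards, and it is exactly the case the paper's proof uses. With $K = \mathrm{diag}(1,0)$, $d=2$, $r=1$, the fixed-point condition $Au = u$ with $K = uu^T$ is satisfied by $u = (1,0)^T$ together with \emph{every} $A = \mathrm{diag}(1, a_{22})$, $a_{22}>0$, since $A^{-1}u = u$ and $A^{-1}uu^TA^{-1}=uu^T=K$ regardless of $a_{22}$. This produces a one-parameter curve of equilibria passing arbitrarily close to the target $(u^\star, A^\star)$ with $A^\star = \mathrm{diag}(1,0)$. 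A trajectory started at such a nearby fixed point stays there forever and never approaches $(u^\star,A^\star)$, so local asymptotic stability fails by definition. Solvability of $K=uu^T$ gives you a \emph{non-isolated} fixed-point set, which is precisely the obstruction to asymptotic stability, not evidence for it. The appeal to~\cite{arrow1958studies} is also misplaced: that citation concerns the $r=d$ (full-rank-$U$) convergence result, which is a statement about the rank constraint on the generator, not about $\operatorname{rank}(K)$. To salvage the proposal you would either need to carry out the boundary analysis the paper does, or restate the rank-one observation correctly: the fixed points form a continuum, hence the equilibrium cannot be an attracting isolated point.
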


As an illustrating example, we consider $d=2$, and decompose $U = vv^T$. We denote $A$ as $[a_{11}, a_{12}; a_{21}, a_{22}]$. Let $K$ be $[1,0;0,0]$. The corresponding optimal solution is $v=(1,0)$, $a_{11}=1$, $a_{12}=a_{22}=0$. 
Figure~\ref{fig:gradient_flow} shows that starting from the optimal solution pair $(A^*, U^*)$, if we let $A$ deviate from $A^*$ a little, $U$ would be immediately repulsed off $U^*$. Furthermore, we simulate the dynamics of both $A$ and $U$ after perturbation in Figure \ref{fig:dynamics_sim} under the same setting. It can be seen that $a_{22}$ is monotone increasing and other values are fluctuating around the equilibrium. Thus the system is not locally asymptotically stable.

%\begin{center}
\begin{figure}[!ht]
    \centering
    \includegraphics[width=0.6\linewidth]{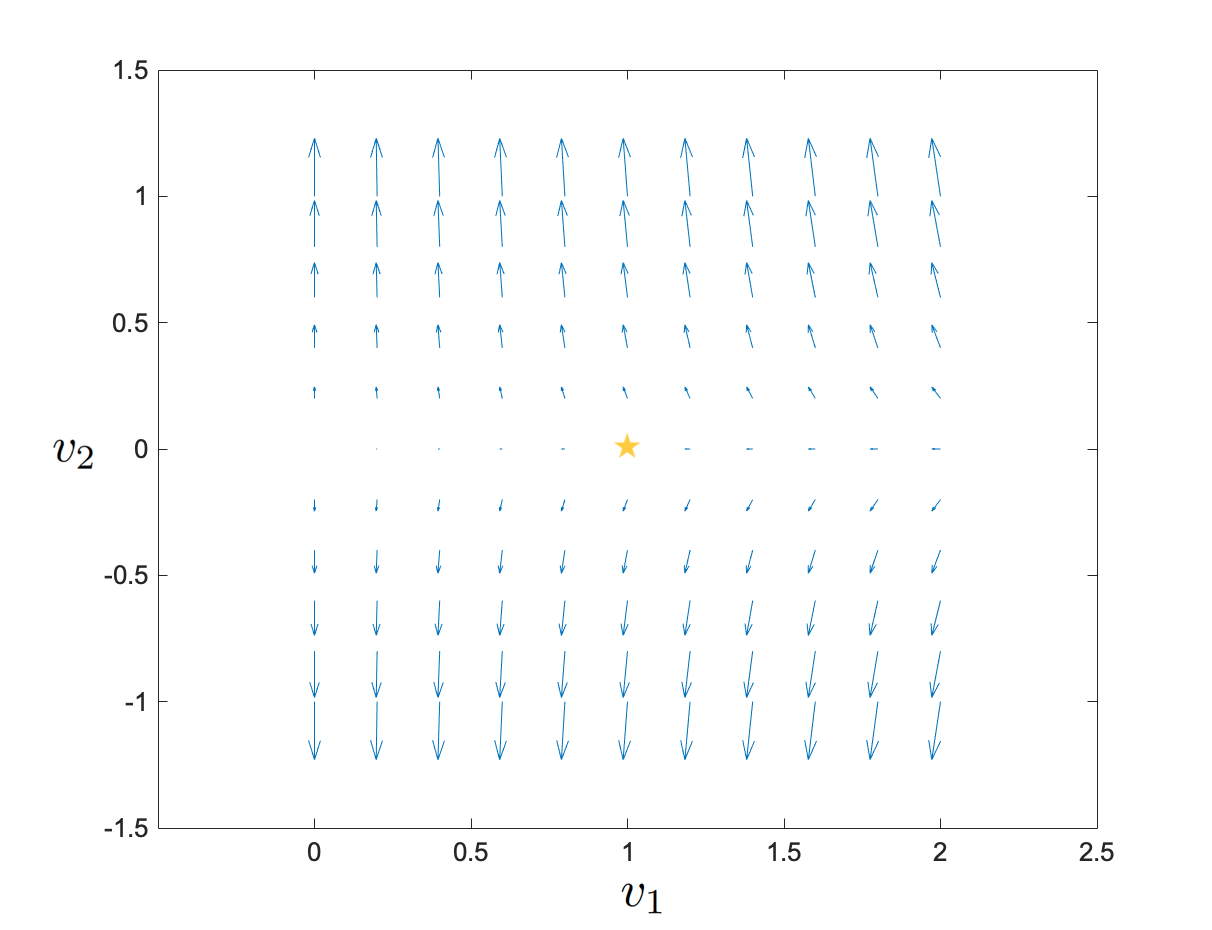}
    \caption{The illustration of the instability of alternating gradient descent in GAN optimization in Theorem \ref{thm:instability}. We add $0.1$ perturbation on both $a_{11}$and $a_{22}$, and plot the gradient flow for $v$. The star in the middle of the figure is the optimal point for $v$. As long as $A$ is perturbed a little, it would be repelled from the optimal point. This intuitively explains why alternating gradient descent is not \emph{locally asymptotically} stable.}
    \label{fig:gradient_flow}
\end{figure}
%\end{center}

\begin{figure}[!ht]
    \centering
    \includegraphics[width=0.67\linewidth]{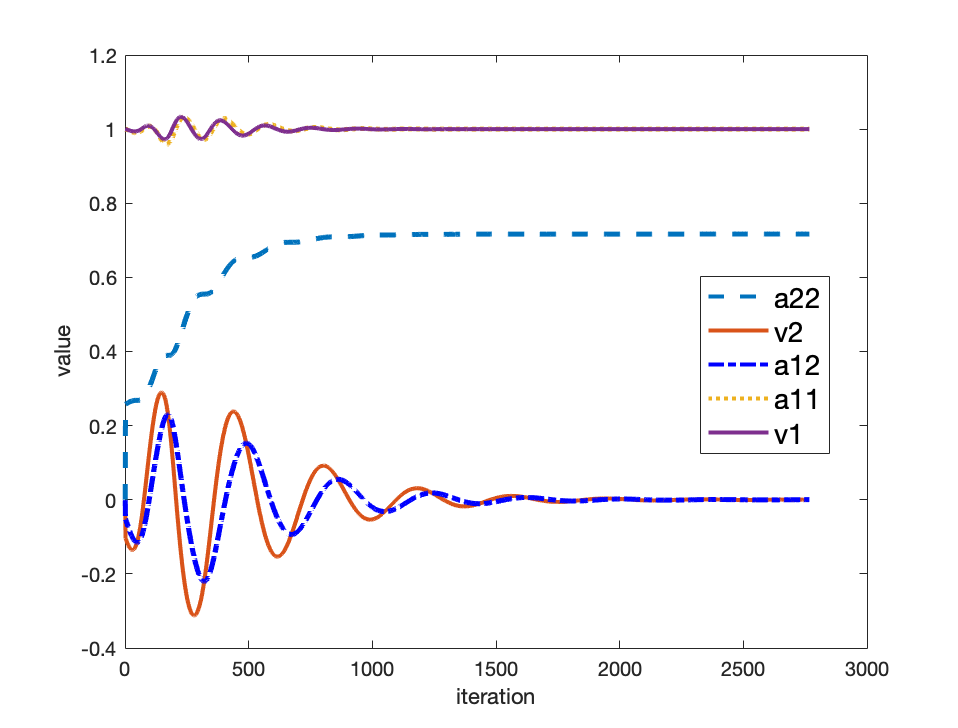}
    \caption{The simulation of the dynamics of alternating gradient descent in GAN optimization in Theorem \ref{thm:instability}.  We add $0.0001$ perturbation on $a_{11},a_{12},a_{22},v_{1}$ and $-0.0001$ perturbation on $v_{2}$. We plot the change of values with time. We can see $a_{22}$ is monotone increasing and other values are fluctuating around equilibrium. This shows that alternating gradient descent is not \emph{locally asymptotically} stable.}
    \label{fig:dynamics_sim}
\end{figure}

Our result provides a provable example that is of the same spirit as the example in~\cite{li2017towards} which showed experimentally that standard gradient dynamics of GAN to learn a mixture of Gaussian often fails to converge, while training generator with optimal discriminator would lead to convergence. It is also shown in~\cite{DBLP:journals/corr/abs-1804-05464} that employing a gradient-based learning algorithm for general-sum and potential games will avoid a non-negligible subset of the local Nash equilibrium, and will lead to convergence to non-Nash strategies.

\subsection{Stability results for Parameter Sharing GAN}

Ideally, in order to solve the minimax GAN optimization problem, we need to first solve the max and then the min, since minimax may not equal maximin in general. The failure of alternating gradient descent in Theorem~\ref{thm:instability} shows that treating the min and max optimization problems in an equal footing may cause instability problems. The idea of solving \emph{more} the maximization problem and \emph{less} the minimization problem is implicitly suggested in~\cite{sanjabi2018convergence} and the two time-scale update rule~\cite{heusel2017gans}. 

\begin{comment}
In reality, we are doing projection under the distance, and it may not make too much sense to think of this as a zero-sum game, but more of a Stackelberg game where there is a specific order of moving (the generator moves first, and the discriminator moves second, so the discriminator is in a stronger position).
\end{comment}

The next result shows that if we embed some knowledge of the optimal solution of the inner maximization problem, we can create a new minimax formulation with the following properties:
\begin{itemize}
    \item \textbf{Zero duality gap}: the minimax value equals the maximin value;
    \item \textbf{Relation to old minimax formulation:} the minimax value of the new formulation is equal to the minimax value of the old formulation \emph{without} changing the generator. 
    \item \textbf{Optimization stability:} AGD is \emph{globally asymptotically} stable in solving the $r$-PCA for $r = 1$. 
\end{itemize}

Naturally, one shall embed the connections between the optimal value $A^*$ and $U^*$ in the optimization problem. It can be shown that when $r = 1$, the optimal solutions in~(\ref{optimization1}) can be written in the following form:
\begin{align}
K & = Q \Sigma Q^T \label{eqn.eigendecompositionofK} \\
U^* & = \lambda_1 v_1 v_1^T \\
A^* & = v_1 v_1^T,
\end{align}
where~(\ref{eqn.eigendecompositionofK}) is the eigenvalue decomposition of $K$, $Q = [v_1,v_2,\ldots,v_d]$, and $\Sigma = \text{diag}(\lambda_1,\lambda_2,\ldots,\lambda_d)$ ranks the eigenvalues of $K$ in a descending order. In other words, $U^*$ is the $1$-PCA of $K$, and $A^*$ shares the linear subspace as $U^*$ with the eigenvalue being one. It naturally leads to the following \emph{parameter sharing} GAN architecture.  (\cite{feizi2017understanding} considered another parameter sharing architecture.)
\begin{theorem}
\label{thm:stability}
Alternating gradient descent is globally asymptotically stable in solving GAN in the parameter sharing formulation, where
\begin{align}
A & = \lambda v v^T, \\
U & = b v v^T,
\end{align}
and we use alternating gradient descent to solve
\begin{align}\label{eqn.parameter_sharing}
\min_{\|v\|=1,b\geq 0} \sup_{\lambda \geq 0} \operatorname{Tr}( (I-A)K + (I-A^{\dagger})U). 
\end{align}
\end{theorem}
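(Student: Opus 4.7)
The plan is to work with the continuous-time gradient flow of AGD, construct a Lyapunov function that couples the $v$ and $(b,\lambda)$ dynamics, and invoke LaSalle's invariance principle; the discrete-time statement will then follow by the standard small-step-size argument. Substituting $A=\lambda vv^{T}$ and $U=bvv^{T}$ into~(\ref{eqn.parameter_sharing}) and using $\|v\|=1$ together with $A^{\dagger}=\lambda^{-1}vv^{T}$ for $\lambda>0$, the objective collapses to
\begin{equation*}
F(v,b,\lambda)=\operatorname{Tr}(K)-\lambda\rho+b-b/\lambda,\qquad \rho:=v^{T}Kv,
\end{equation*}
and the projected descent--ascent flow becomes
\begin{equation*}
\dot v=2\lambda(Kv-\rho v),\qquad \dot b=(1-\lambda)/\lambda,\qquad \dot\lambda=b/\lambda^{2}-\rho.
\end{equation*}
A direct check shows that the interior critical points are exactly $\{(\pm v_i,\lambda_i,1)\}_{i=1}^{d}$ for the eigenpairs $(\lambda_i,v_i)$ of $K$, and the minimax value $\operatorname{Tr}(K)-\lambda_1$ is attained at $(\pm v_1,\lambda_1,1)$.

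Two structural facts drive the argument. First, by Cauchy--Schwarz,
\begin{equation*}
\dot\rho=4\lambda(v^{T}K^{2}v-\rho^{2})\geq 0,
\end{equation*}
with equality iff $v$ is an eigenvector of $K$ (or $\lambda=0$); this is the usual Rayleigh-quotient ascent along the $v$-flow. Second, for any fixed $v$ the restriction $F(v,\cdot,\cdot)$ is linear in $b$ and strictly concave in $\lambda$ on $\{b,\lambda>0\}$, so the $(b,\lambda)$ subsystem is convex--concave about the \emph{moving} saddle $(b^{\star},\lambda^{\star})=(\rho,1)$.

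These observations motivate the Lyapunov function
\begin{equation*}
V(v,b,\lambda)=(\lambda_{1}-\rho)+\frac{c}{2}\bigl[(b-\rho)^{2}+(\lambda-1)^{2}\bigr],\qquad c\in(0,1/\lambda_{1}),
\end{equation*}
which is nonnegative, radially unbounded in $(b,\lambda)$, and vanishes only at the target equilibrium. The crux of the calculation is the algebraic identity
\begin{equation*}
c(b-\rho)\,\dot b+c(\lambda-1)\,\dot\lambda=-\frac{c(b+\rho\lambda)(\lambda-1)^{2}}{\lambda^{2}},
\end{equation*}
obtained by clearing denominators and factoring out $(\lambda-1)^{2}$, which combined with the $\rho$-chain rule yields
\begin{equation*}
\dot V=-\bigl[1+c(b-\rho)\bigr]\dot\rho-\frac{c(b+\rho\lambda)(\lambda-1)^{2}}{\lambda^{2}}.
\end{equation*}
Since $K\succeq 0$ forces $\rho\in[0,\lambda_{1}]$ and $b\geq 0$, the bracket satisfies $1+c(b-\rho)\geq 1-c\lambda_{1}>0$, so both terms are nonpositive and $\dot V\leq 0$ globally.

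LaSalle's invariance principle then confines the $\omega$-limit set to the largest invariant subset of $\{\dot V=0\}$, which forces $\lambda=1$ and $v$ to be an eigenvector of $K$; forward invariance through $\dot\lambda=0$ subsequently pins $b$ to the matching eigenvalue, leaving exactly the finite collection $\{(\pm v_i,\lambda_i,1)\}_{i=1}^{d}$. To exclude the suboptimal equilibria, I would linearize $\dot v=2\lambda(K-\rho I)v$ at $(\pm v_i,\lambda_i,1)$ with $i\geq 2$ and exhibit an unstable eigendirection toward $v_{1}$ (with linearized rate $2(\lambda_{1}-\lambda_{i})>0$), then invoke the stable manifold theorem to conclude that the basin of attraction of each non-optimal saddle has Lebesgue measure zero. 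Combined with the positive-definite behavior of $V$ at $(\pm v_1,\lambda_1,1)$ that certifies local asymptotic stability, this delivers global asymptotic stability in the generic (Lebesgue-a.e.) sense used throughout the paper. The main obstacle is the construction of a single Lyapunov function that reconciles the slow Rayleigh-quotient flow on $v$ with the fast convex--concave saddle flow on $(b,\lambda)$: the cancellation producing $-c(b+\rho\lambda)(\lambda-1)^{2}/\lambda^{2}$ and the tight range $c<1/\lambda_{1}$ that keeps $1+c(b-\rho)\geq 0$ \emph{globally} rather than only in a neighborhood are the central technical moves. Boundary technicalities are benign: $\lambda\to 0^{+}$ is automatically repelled because $\dot\lambda\to+\infty$ whenever $b>0$, and the projection onto $b\geq 0$ can only further decrease $V$.
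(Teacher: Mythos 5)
Your gradient-flow, Lyapunov, LaSalle strategy matches the paper's; the Lyapunov function itself is genuinely different and noticeably simpler. The algebraic identity $c(b-\rho)\dot b+c(\lambda-1)\dot\lambda=-c(b+\rho\lambda)(\lambda-1)^2/\lambda^2$ is correct, $1+c(b-\rho)\geq 1-c\lambda_1>0$ holds on the feasible region $b\geq 0$, $\rho\leq\lambda_1$, and $\dot\rho=4\lambda(v^TK^2v-\rho^2)\geq 0$, so $\dot V\leq 0$ everywhere the flow is defined. The paper instead uses a barrier-type function
\begin{equation*}
L=\left(\tfrac18+\lambda_1^2\right)\ln\tfrac{1}{|v^Tv_1|}+\tfrac12(b-\lambda_1)^2+\tfrac12(\lambda-1)^2+\tfrac18\left(\tfrac{1}{\lambda}-1\right)^2+\tfrac18\left(\tfrac{b}{\lambda^2}-v^TKv\right)^2,
\end{equation*}
which blows up as $v^Tv_1\to 0$, as $\lambda\to 0^+$ or $\infty$, and as $b\to\infty$. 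Since $\dot L\leq 0$, the sublevel set $\{L\leq L(0)\}$ is compact and bounded away from these singularities; the trajectory is trapped there, and LaSalle then gives convergence to the top eigenpair for \emph{every} initialization with $v^Tv_1(0)\neq 0$. Your $V$ buys a cleaner form in natural coordinates but is not a barrier, so confinement has to be argued separately.

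Two pieces of that separate work are understated. First, ruling out $\lambda\to 0^+$ is not finished by the remark that $\dot\lambda\to+\infty$ whenever $b>0$: the trajectory can sit on the constraint $b=0$ with $\dot\lambda=-\rho<0$, and after $\lambda$ passes below $1$ the coordinate $b$ grows from zero while $\lambda$ keeps decreasing, so one still needs a phase-plane argument to see that $b$ reaches the nullcline $b=\rho\lambda^2$ before $\lambda$ hits zero; this is true (because $\dot b=(1-\lambda)/\lambda$ diverges as $\lambda\to 0$) but requires writing out, and is exactly what the paper's $(1/\lambda-1)^2$ term dispenses with. Second and more substantively, every eigenpair $(\pm v_i,\lambda_i,1)$ lies in $\{\dot V=0\}$ and is a genuine equilibrium of the flow, so LaSalle by itself only confines the $\omega$-limit set to the full collection of saddles (plus the degenerate set $\rho=b=0$ if $K$ is singular, excluded once $\rho(0)>0$). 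Your stable-manifold exclusion is a valid way to discard $i\geq 2$, with the caveat that repeated eigenvalues make those saddles non-hyperbolic and call for a center-stable manifold instead, but it yields convergence only for Lebesgue-a.e.\ initialization. That is precisely the work the $\ln(1/|v^Tv_1|)$ term does for free: $L=+\infty$ at each suboptimal saddle, so $L(t)\leq L(0)<\infty$ forbids approaching them with no genericity argument. Neither gap is fatal, but together they are the real cost of the simpler Lyapunov function.
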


\section{Acknowledgements}
We would like to thank Farzan Farnia and Soheil Feizi for helpful discussions at the initial stage of this project. We thank Luigi Ambrosio and Yihong Wu for discussions related to optimal transport, Jacob Steinhardt and Ilias Diakonikolas for discussions related to robust statistics. We are grateful to Chao Gao for stimulating discussions on the relationship between GAN and robust statistics at the workshop of robust and high-dimensional statistics held at the Simons Institute at the University of California, Berkeley in fall, 2018.   

\newpage

\setcounter{page}{1}
%\renewcommand{\baselinestretch}{0.}
%\begin{footnotesize}
\bibliographystyle{abbrv}
\bibliography{di}

\newpage

\appendix
\section{Definition of Terms}\label{appendix.def}

\begin{definition}[Pseudometric]
A non-negative real-valued function $d:X\times X \rightarrow\mathbb{R}^+$ is a pseudometric if for any $x,y,z \in X$, the following condition holds:
 \begin{align}
 & d(x,x)=0.\\
 & d(x,y) = d(y,x).\\
 & d(x,z)\leq d(x,y)+ d(y,z).
\end{align}
Unlike a metric, one may have $d(x,y)=0$ for distinct values $x\neq y$.
\end{definition}

\begin{definition}[Pseudonorm]
A real-valued function $\| \cdot\|:X \rightarrow\mathbb{R}$ is a pseudonorm if for any $x,y \in X$, the following condition holds:
 \begin{align}
 & \|x \|\geq0, x = 0 \Rightarrow \|x\|=0.\\
 & \| cx\|=|c|\|x\|.\\
 & \|x+y\|\leq \|x\|+\|y\|.
\end{align}
A pseudonorm differs from a norm in that the pseudonorm can be zero for nonzero vectors (functions).
\end{definition}
\begin{definition}[$\epsilon$-covering number~\cite{devroye2012combinatorial}]
Let $(V,\| \cdot \|)$ be a normed space, and $\cG \in V$. We say $\{V_1, ..., V_N \}$ is an $\epsilon$-covering of $\cG$ if $\forall g \in \cG$, $\exists i$ such that $\| g -  V_i\| \leq  \epsilon$.

\end{definition}
\begin{definition}[Equilibrium point~\cite{sastry2013nonlinear}]
Consider a nonlinear time-invariant system $\frac{dx}{dt} = f(x)$, where $f:\mathbb{R}^d \rightarrow \mathbb{R}^d$.  A point $x^*\in \mathbb{R}^d$ is an equilibrium point of the system if $f(x^*)=0$.
\end{definition}
\begin{definition}[Local stability~\cite{sastry2013nonlinear}]
An equilibrium point in a system is locally stable if any point initialized near the equilibrium point stays near that equilibrium point. Formally, we say that an equilibrium point $x^*$ is locally stable if for all $\epsilon > 0$, there exists a $\delta > 0$ such that
\begin{align}
    \|x(0) - x^* \|< \delta \Rightarrow   \|x(t) - x^* \|< \epsilon,  \forall t>0
\end{align}
\end{definition}

\begin{definition}[Local asymptotic stability~\cite{sastry2013nonlinear}]
An equilibrium point $x^*$ in a system is locally asymptotically stable if it is locally stable, and there exists a $\delta>0$, such that 
\begin{align}
    \|x(0) - x^* \|< \delta \Rightarrow  \lim_{t\rightarrow \infty} \|x(t) - x^* \|=0
\end{align}
\end{definition}

\begin{definition}[Global asymptotic stability~\cite{sastry2013nonlinear}]
An equilibrium point $x^*$ is globally asymptotically stable if it is locally stable, and $\lim_{t\rightarrow \infty}\|x(t)-x^*\|=0$ for all $x(0)\in\mathbb{R}^d$.
\end{definition}

\section{Proof of Lemmas and Theorems}

\subsection{Proof of Lemma~\ref{lemma.pseudonormrepres}}\label{proof.pseudonormrepres}

It follows from the definition of the pseudonorm that $\|x\|_L$ is a sublinear function on the vector space. It follows from the definition of $\|f\|_{L^*}$ that for any $f$ such that $\|f\|_{L^*} = 1$, we have
\begin{align}
    \int f d\mu \leq \| \mu \|_L. 
\end{align}
Hence, it suffices to show that for every $\mu$, we can indeed find some $f$ such that $\| f\|_{L^*} = 1$ and $\int f d\mu = \|\mu\|_L$. 

For any $\mu$, consider a continuous linear functional $M$ defined on the one-dimensional subspace $\{\alpha \cdot \mu: \alpha \in \mathbb{R}\}$, such that $M(\alpha \mu ) = \alpha \| \mu \|_L$. Clearly, we have 
\begin{align}
M(\alpha \mu) \leq \| \alpha \mu \|_L = |\alpha| \| \mu\|_L.
\end{align}
It follows from the Hahn--Banach theorem~\cite[Theorem 4.2]{kreyszig1978introductory} that we can extend $M$ to the whole vector space to obtain another linear functional $T$ such that $T(\mu) = \| \mu\|_L$, and the $T(\mu') \leq \|\mu'\|_L$ for any $\mu'$. The proof is completed by noting that we have assumed any continuous linear functional can be written as $\int fd\mu$ for some measurable $f$.

\subsection{Proof of Theorem~\ref{thm.populationinsights}}\label{proof.populationinsights}

\begin{proof}
It follows from the definition of $OPT$ in~(\ref{eqn.optdef}) that there exists some $g^*$ such that
\begin{align}
    OPT = L(\bbP_{g^*(Z)}, \bbP_X). 
\end{align}
We have
\begin{align}
    L(\bbP_{g'(Z)}, \bbP_X) & \leq L(\bbP_{g'(Z)}, \bbP_{g^*(Z)}) + L(\bbP_{g^*(Z)}, \bbP_X) \label{eqn.trianglethm1} \\
    & \leq c_2 + c_1 (L'(\bbP_{g'(Z)}, \bbP_{g^*(Z)}) - L'(\bbP_{g^*(Z)}, \bbP_{g^*(Z)})) + OPT \label{eqn.resolutionthm1} \\
    & \leq c_2 + c_1 \left ( L'(\bbP_{g'(Z)}, \bbP_X) - L'(\bbP_{g^*(Z)}, \bbP_{g^*(Z)}) + OPT \right ) + OPT \label{eqn.robustthm1} \\
    & = c_1 (L'(\bbP_{g'(Z)}, \bbP_X)- L'(\bbP_{g^*(Z)}, \bbP_{g^*(Z)})) + (1+c_1) OPT + c_2   \label{eqn.refthm1}\\
    & \leq c_1 (L'(\bbP_{g^*(Z)}, \bbP_X)- L'(\bbP_{g^*(Z)}, \bbP_{g^*(Z)})) + (1+c_1) OPT + c_2 \label{eqn.defofgthm1} \\
    & \leq c_1 (L'(\bbP_{g^*(Z)}, \bbP_{g^*(Z)}) +  OPT- L'(\bbP_{g^*(Z)}, \bbP_{g^*(Z)})) + (1+c_1 ) OPT + c_2 \label{eqn.robust2thm1} \\
    & = (1+ 2c_1)OPT + c_2 . \label{eqn.zerothm1}
\end{align}

Concretely, (\ref{eqn.trianglethm1}) follows from the triangle inequality of $L$, (\ref{eqn.resolutionthm1}) follows from the first property of admissibility of $L'$, (\ref{eqn.robustthm1}) follows from the second property of admissibility and the fact that $L'' \leq L$, (\ref{eqn.defofgthm1}) follows from the definition of $g'$, (\ref{eqn.robust2thm1}) follows from the second property of admissibility. 

For the estimator $g''$, following similar lines of arguments in equation (\ref{eqn.refthm1}), we have
\begin{align}
    L(\bbP_{g''(Z)}, \bbP_X) & \leq c_1 (L'(\bbP_{g''(Z)}, \bbP_X)- L'(\bbP_{g^*(Z)}, \bbP_{g^*(Z)})) + (1+c_1 ) OPT + c_2  \nonumber\\
    & \leq c_1 (L'(\bbP_{g''(Z)}, \hat{\bbP}_X^n) +  L''(\bbP_X,  \hat{\bbP}_X^n) - L'(\bbP_{g^*(Z)}, \bbP_{g^*(Z)})) + (1+c_1 ) OPT + c_2 \nonumber \\
    & \leq c_1 (L'(\bbP_{g^*(Z)}, \hat{\bbP}_X^n) +  L''(\bbP_X,  \hat{\bbP}_X^n) - L'(\bbP_{g^*(Z)}, \bbP_{g^*(Z)}) ) + (1+c_1 ) OPT + c_2 \nonumber \\
    & = c_1 L'(\bbP_{g^*(Z)}, \hat{\bbP}_X^n) + c_1  L''(\bbP_X,  \hat{\bbP}_X^n ) - c_1 L'(\bbP_{g^*(Z)}, \bbP_{g^*(Z)}) + (1+c_1 ) OPT + c_2 \nonumber \\
    & \leq c_1 (L'(\bbP_{g^*(Z)}, \bbP_X) +  L''(\bbP_X,  \hat{\bbP}_X^n)  ) + c_1  L''(\bbP_X,  \hat{\bbP}_X^n) - c_1 L'(\bbP_{g^*(Z)}, \bbP_{g^*(Z)}) + (1+c_1 ) OPT + c_2 \nonumber \\
    & = c_1 L'(\bbP_{g^*(Z)}, \bbP_X) + 2c_1 L''(\bbP_X,  \hat{\bbP}_X^n) - c_1 L'(\bbP_{g^*(Z)}, \bbP_{g^*(Z)}) + (1+c_1 ) OPT + c_2 \nonumber \\
    & \leq c_1(L'(\bbP_{g^*(Z)}, \bbP_{g^*(Z)}) +  OPT ) + 2c_1 L''(\bbP_X,  \hat{\bbP}_X^n) - c_1 L'(\bbP_{g^*(Z)}, \bbP_{g^*(Z)})  + (1+c_1 ) OPT + c_2 \nonumber \\
    & = (1+2c_1)OPT + 2c_1 L''(\bbP_X,  \hat{\bbP}_X^n) + c_2 . 
\end{align}
\end{proof}

\subsection{Proof of Theorem \ref{thm:new_convergence}}\label{proof.new_convergence}
We first show that the first property is satisfied with $c_1 = 1, c_2 = 4\epsilon$. Apparently, $L'(\bbP_X, \bbP_X) = 0$ for any $\bbP_X$. Indeed, for any $\tilde g, g^* \in \cG$, by the definition of $\cG_\epsilon$, $\exists g_1, g_2 \in \cG_\epsilon$, such that 
\begin{equation}
L(\bbP_{\tilde g(Z)}, \bbP_{g_1(Z)}) \leq \epsilon, L(\bbP_{g_2(Z)}, \bbP_{g^*(Z)}) \leq \epsilon.
\end{equation}
Thus by triangle inequality
\begin{align}
%\forall g_1, g_2 \in \mathcal{G}_\epsilon, 
L(\bbP_{\tilde g(Z)}, \bbP_{g^*(Z)})  & \leq L(\bbP_{\tilde g(Z)}, \bbP_{g_1(Z)}) + L( \bbP_{g_1(Z)}, \bbP_{g_2(Z)}) + L(\bbP_{g_2(Z)}, \bbP_{g^*(Z)}) \nonumber \\
& \leq L(\bbP_{g_1(Z)}, \bbP_{g_2(Z)}) + 2\epsilon \nonumber \\
& = L'(\bbP_{g_1(Z)}, \bbP_{g_2(Z)}) + 2\epsilon \nonumber \\
& \leq L'(\bbP_{g_1(Z)}, \bbP_{\tilde g(Z)}) + L'(\bbP_{\tilde g(Z)}, \bbP_{g^*(Z)})  + L'(\bbP_{g^*(Z)}, \bbP_{g_2(Z)})  + 2\epsilon \nonumber \\
& \leq L'(\bbP_{\tilde g(Z)}, \bbP_{g^*(Z)}) + 4\epsilon,
 \end{align}
 where in the last inequality we used the fact that $L'\leq L$. 
 
The second property of admissibility is satisfied since $L'$ satisfies the triangle inequality and we can take $L'' = L'$.  Applying Theorem~\ref{thm.populationinsights}, we have
\begin{align}
    L(\bbP_{\tilde g(Z)}, \bbP_X) \leq 3 \cdot OPT + 2 L'(\bbP_X, \hat{\bbP}_X^n) + 4\epsilon
\end{align}

In order to bound $L'(\bbP_X, \hat{\bbP}_X^n) $, we write it as 
\begin{equation}
L'(\bbP_X, {\hat \bbP_X^n}) = \sup_{f\in \cF_\epsilon} \bbE [f(X)]- \frac{1}{n} \sum_{i=1}^n f(X_i).
\end{equation}

Suppose there exists a constant $B$ such that for any $f\in \cF_\epsilon$, 
\begin{align}
\bbP (\bbE [f(X)] - \frac{1}{n} \sum_{i=1}^n f(X_i) > t) \leq 2 \exp(-\frac{nt^2}{2B^2}).
\end{align}

Using union bound, we can get
\begin{align}
\bbP ( \sup_{f\in \cF_\epsilon} \bbE [f(X) - \frac{1}{n} \sum_{i=1}^n f(X_i)] > t) & \leq 2 |\cF_\epsilon| \exp(-\frac{nt^2}{2B^2}) \nonumber \\
& \leq 2N_\epsilon^2 \exp(-\frac{nt^2}{2B^2}) \nonumber \\
& \leq 2C^2 \left (\frac{1}{\epsilon} \right )^{2d}\exp(-\frac{nt^2}{2B^2}),
\label{equ:thm3union}
\end{align}
since we have assumed that $N_\epsilon \leq C \left ( \frac{1}{\epsilon} \right )^d$. Denoting the right hand side in equation (\ref{equ:thm3union}) as $\delta$, we have 
\begin{align}
t = \sqrt{\frac{2B^2}{n}\log(\frac{2C^2}{\delta\epsilon^{2d}})}.
\end{align}
Then with probability at least $1-\delta$, we have
\begin{align}
L(\bbP_{\tilde g(Z)}, \bbP_X) \leq 3 \cdot OPT + 2\sqrt{\frac{2B^2}{n}\log(\frac{2C^2}{\delta\epsilon^{2d}})} + 4\epsilon .
\end{align}
Taking $\epsilon = (\frac{d\log n}{n})^\frac{1}{2}$, we can derive 
\begin{align}
L(\bbP_{\tilde g(Z)}, \bbP_X) \leq 3 \cdot OPT + D(B,C) \left (\frac{1}{n}\max \left (d \log( n), \log \frac{1}{\delta} \right )\right )^\frac{1}{2}.%((\frac{d \log n}{n})^{\frac{1}{2}})
\end{align}

\subsection{Proof of Theorem \ref{thm:yatracos_weakened}}\label{proof.yatracos}

From the definition of $\mathsf{TV}'$ in~(\ref{equ:def_yatracos}), there is $\mathsf{TV}'(\bbP_X, \bbP_X) = 0$ for any $\bbP_X$. By definition of total variation distance, we have for $g_1, g_2\in \cG$, 
\begin{align}
    \mathsf{TV}'(\bbP_{g_1},\bbP_{g_2}) =  \mathsf{TV}(\bbP_{g_1},\bbP_{g_2}),
\end{align}
since the optimal set to distinguish between $\bbP_{g_1}$ and $\bbP_{g_2}$ is a halfspace. Thus the first property is satisfied with $c_1 = 1, c_2 = 0$. 
 
The second property of admissibility is satisfied since $\mathsf{TV}'$ satisfies the triangle inequality and we can take $L'' = \mathsf{TV}'$.  Applying Theorem~\ref{thm.populationinsights}, we have
\begin{align}
    \mathsf{TV}(\bbP_{\tilde g}, \bbP_X)  & \leq 3 \cdot \inf_{g\in\cG}\mathsf{TV}(\bbP_g, \bbP_X) + 2  \mathsf{TV}'(\bbP_X, \hat{\bbP}_X^n) \nonumber \\
    & \leq  3\cdot OPT + 2  \mathsf{TV}'(\bbP_X, \hat{\bbP}_X^n). \label{equ.yatracos_pf1}
\end{align}

\begin{comment}
Thus
\begin{align}\label{equ.yatracos_pf1}
    \mathsf{TV}(\bbP_{\tilde g}, \bbP_{g^*}) & \leq \mathsf{TV}(\bbP_{\tilde g}, \bbP_X) + \mathsf{TV}(\bbP_X, \bbP_{g^*}) \nonumber \\
    &\leq 4\cdot OPT +  2  \mathsf{TV}'(\bbP_X, \hat{\bbP}_X^n)
\end{align}
\end{comment}

Since the class $\mathcal{A} = \{ \{\mathbf{x}\in\mathbb{R}^d:\left <\mathbf{v},\mathbf{x}\right>+\mathbf{b}>0\}:\|\mathbf{v}\|_2 =1,\mathbf{b}\in\mathbb{R}^d\}$ is all halfspaces in $\mathbb{R}^d$, it has VC dimension $d+1$~\cite{vapnik2015uniform}. By VC inequality~\cite[Theorem 2.2, Chap. 4]{devroye2012combinatorial}, regardless of the distribution $\bbP_X$, we have with probability at least $1-\delta$, there exists some absolute constant $C$, such that 
\begin{align}\label{equ.yatracos_pf2}
    \mathsf{TV}'(\bbP_X, \hat{\bbP}_X^{n}) & \leq C\cdot \sqrt{\frac{d+1}{n}} + \sqrt{\frac{
    \log(1/\delta)}{2n}}.
\end{align}

Combining  equation (\ref{equ.yatracos_pf1}) and (\ref{equ.yatracos_pf2}), we can derive with probability at least $1-\delta$, there exists some universal constant $C$, such that 
\begin{align}
    \mathsf{TV}(\bbP_{\tilde g}, \bbP_{X})\leq 3\cdot OPT +  C\cdot \sqrt{\frac{d+1}{n}}+ 2\sqrt{\frac{\log(1/\delta)}{2n}}.
\end{align}
\subsection{Proof of Theorem \ref{thm:equ:tukeymedian}}\label{proof.tukey}

We first show the first property is satisfied. From the definition of $L_{\mathsf{Tukey}}$ in~(\ref{equ:tukeymedian_def}), there is $L_{\mathsf{Tukey}}(\bbP_X, \bbP_X) = 0 $ for any $\bbP_X \in \cG$ since each distribution in $\cG$ is symmetric.  For any two distribution $\bbP_1 = \mathcal{N}(\mu_1, I), \bbP_2  = \mathcal{N}(\mu_2,I)$, it follows from straightforward computation that
\begin{align}
\mathsf{TV}(\bbP_1,\bbP_2) = 2 \Phi\left( \frac{\| \mu_1 - \mu_2\|_2}{2} \right) -1. 
\end{align} 
We also have
\begin{align}
L_{\mathsf{Tukey}}(\bbP_1,\bbP_2) =  \Phi\left( \| \mu_1 - \mu_2 \|_2 \right) -1/2. 
\end{align}
Here $\Phi(x) = \int_{-\infty}^x \phi(x) dx$ is the standard normal CDF. We notice that for any $\bbP_1,\bbP_2\in \mathcal{G}$, we have
\begin{align}
L_{\mathsf{Tukey}}(\bbP_1,\bbP_2) \geq \frac{1}{2} \mathsf{TV}(\bbP_1,\bbP_2),
\end{align}
which follows from the fact that $\Phi(x)$ as a function on $[0,\infty)$ is non-decreasing. Hence the first property is satisfied with $c_1=2, c_2=0$. 

As for the second property, we notice that 
\begin{align}
    |L_{\mathsf{Tukey}}(\bbP_g,\bbP_1) - L_{\mathsf{Tukey}}(\bbP_g,\bbP_2)  | & = |\sup_{\mathbf{v}\in \mathbb{R}^d} \bbP_1\left( (X-\mathbb{E}_{\bbP_g}[X])^T \mathbf{v} >0 \right) - \sup_{\mathbf{v}\in \mathbb{R}^d} \bbP_2\left( (X-\mathbb{E}_{\bbP_g}[X])^T \mathbf{v} >0 \right)  | \nonumber \\
    & \leq \sup_{ A \in \{ \{x: (x - b)^T v >0\}: v,b\in \mathbb{R}^d\}} \bbP_1(A) - \bbP_2(A) \nonumber \\
    & = \mathsf{TV}'(\bbP_1, \bbP_2),
\end{align}
where $\mathsf{TV}'(\bbP_1, \bbP_2)$ is defined in equation (\ref{equ:def_yatracos}). Thus $L_{\mathsf{Tukey}}$ satisfies the second property with $L'' = \mathsf{TV}'$. 

Applying Theorem \ref{thm.populationinsights} we can derive 
\begin{align}\label{equ.tukey_pf1}
    \mathsf{TV}(\bbP_{\tilde g}, \bbP_X)  & \leq 5 \cdot OPT + 4  \mathsf{TV}'(\bbP_X, \hat{\bbP}_X^n).
\end{align}
It follows from similar arguments to that in the proof of Theorem~\ref{thm:yatracos_weakened} that there exists an absolute constant $C>0$ such that with probability at least $1-\delta$, 
\begin{align}
    \mathsf{TV}(\bbP_{\tilde g}, \bbP_{X})\leq 5\cdot OPT +  C\cdot  \sqrt{\frac{d+1}{n}}+ 4\sqrt{\frac{\log(1/\delta)}{2n}}.
\end{align}

\subsection{Proof of Theorem \ref{thm:naive_convergence_w2}}\label{proof.w2_weakened}

\subsubsection{Concentration for $W_2$ distance for distributions that satisfy logarithmic Sobolev inequality}
Following the similar idea in \cite[Corollary 5.5]{gozlan2010transport}, we can prove the following lemma.
\begin{lemma}
\label{lem:lipschitz}
For any measure $\mu$ on Polish space $\mathcal{X}$,
consider random variable $X\sim \mu$ and corresponding empirical samples  $x\in \mathcal{X}^n$, with each $x_i$ sampled i.i.d. from distribution $\mu$. The map 
$x \rightarrow W_2(\bbP_X, \hat \bbP_X^n)$ is $\frac{1}{\sqrt{n}}$-Lipschitz under Euclidean distance, i.e.,
viewing $W_2(\bbP_X, \hat \bbP_X^n)$ as a function $W(x) $, %of i.i.d. samples $X_1, X_2, \cdots, X_n \sim \mathcal{N}(0,I)$, 
for different sets of samples $x, x'$
\begin{align}
\|W(x) - W(x')\| \leq \frac{1}{\sqrt{n}}\|x-x'\|.
\end{align}
\end{lemma}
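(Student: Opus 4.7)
The plan is to reduce the Lipschitz estimate to a bound on the $W_2$ distance between the two empirical measures, via the triangle inequality for $W_2$, and then exhibit an explicit coupling realizing the desired bound.

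First I would apply the (reverse) triangle inequality for the $W_2$ metric to write
\begin{align}
|W(x) - W(x')| = |W_2(\bbP_X, \hat{\bbP}_X^n(x)) - W_2(\bbP_X, \hat{\bbP}_X^n(x'))| \leq W_2(\hat{\bbP}_X^n(x), \hat{\bbP}_X^n(x')),
\end{align}
where $\hat{\bbP}_X^n(x) = \tfrac{1}{n}\sum_{i=1}^n \delta_{x_i}$ and likewise for $x'$. This step kills the dependence on the (potentially complicated) reference measure $\bbP_X$ and is the only place we use that $W_2$ is a genuine metric on the relevant Polish space.

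Next I would construct a specific (suboptimal) coupling of $\hat{\bbP}_X^n(x)$ and $\hat{\bbP}_X^n(x')$ by pairing the $i$-th atom of the first with the $i$-th atom of the second, i.e.\ the transport plan $\pi = \tfrac{1}{n}\sum_{i=1}^n \delta_{(x_i, x'_i)}$. Since both marginals are correct, this $\pi$ is admissible in the definition of $W_2$, so
\begin{align}
W_2^2(\hat{\bbP}_X^n(x), \hat{\bbP}_X^n(x')) \leq \int \|u-v\|^2 \, d\pi(u,v) = \frac{1}{n}\sum_{i=1}^n \|x_i - x'_i\|^2 = \frac{1}{n}\|x - x'\|^2,
\end{align}
where $\|x - x'\|$ denotes the Euclidean distance on $\mathcal{X}^n$. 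Taking square roots and combining with the first display yields $|W(x) - W(x')| \leq \tfrac{1}{\sqrt{n}}\|x - x'\|$, as desired.

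There is no real obstacle in this argument; the only points to be careful about are (i) verifying that the triangle inequality for $W_2$ holds on the Polish space $\mathcal{X}$ (standard), and (ii) checking that the diagonal-pairing transport plan is indeed a valid coupling with the correct marginals, which is immediate because each $x_i$ and each $x'_i$ carries mass exactly $1/n$. No optimality of the coupling is needed — suboptimality only strengthens the inequality we want.
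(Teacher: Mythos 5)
Your proof is correct and shares the paper's first step (the reverse triangle inequality for $W_2$, which removes the dependence on $\bbP_X$), but it departs in how it bounds $W_2(\hat\bbP_X^n(x),\hat\bbP_X^n(x'))$. The paper cites the convexity of the Talagrand functional $\mathcal{T}_2 = W_2^2$ as a mixture of couplings (Villani, Theorem 4.8), writing each empirical measure as the uniform mixture $\frac{1}{n}\sum_i \delta_{x_i}$ and applying convexity termwise. You instead bypass that citation by directly exhibiting the diagonal coupling $\pi = \frac{1}{n}\sum_i \delta_{(x_i,x'_i)}$ as an admissible transport plan, which immediately yields $W_2^2 \leq \frac{1}{n}\sum_i \|x_i - x'_i\|^2$. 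These are essentially the same bound obtained two different ways: the convexity result the paper cites is itself proved via exactly the kind of mixture-of-couplings argument you write out explicitly. Your version is more elementary and self-contained (no appeal to the convexity lemma), at the cost of a line of verification that the diagonal plan has the right marginals; the paper's version is more compact if one is already comfortable invoking the convexity of $\mathcal{T}_2$. Both are valid, and your observation that the coupling need not be optimal — suboptimality only strengthens the inequality — is exactly the right way to see why the argument closes.
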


\begin{proof}
By triangle inequality of the $W_2$ distance
\begin{align}
\|W(x) - W(x')\| \leq W_2(\hat  \bbP_X^n, \hat \bbP_{X'}^n).
\end{align}
We know that the Wasserstein distance is related to Talagrand function~\cite{gozlan2010transport} as follows:
\begin{align}
W_2(\mu, v) & = \mathcal{T}_2(\mu, v)^\frac{1}{2} \nonumber \\
& = \inf_{\pi} \left \{ \int_{\mathcal{X}^2} |x - y|^2d\pi(x,y);\pi(x,y)\in \bbP(\mathcal{X}^2);\pi(x)=\mu,\pi(y)=v\right \}^\frac{1}{2}.
\end{align}
According to the convexity of $\mathcal{T}_2(\cdot,\cdot)$ \cite[Theorem 4.8]{villani2008optimal}, one has 
\begin{align}
W_2(\hat \bbP_X^n, \hat \bbP_{X'}^n) & = (\mathcal{T}_2(\hat \bbP_X^n, \hat \bbP_{X'}^n))^\frac{1}{2} \nonumber \\
& \leq (\frac{1}{n}\sum_{i=1}^{n} \mathcal{T}_2(\delta(x_i), \delta(x_i'))^\frac{1}{2} \nonumber \\
& = \frac{1}{\sqrt{n}}(\sum_{i=1}^n (x_i-x_i')^2)^\frac{1}{2} \nonumber \\
& = \frac{1}{\sqrt{n}}\|x-x'\|.
\end{align}
Thus we have 
\begin{align}
\|W(x) - W(x')\| \leq \frac{1}{\sqrt{n}}\|x-x'\|.
\end{align}
\end{proof}

Under the same setting, we can derive the concentration property of $W_2$ distance between any distribution that satisfies logarithmic Sobolev inequality and its empirical distribution as follows.

\begin{lemma}
\label{lem:concentration}
Let $\mu$ be a probability measure on $\mathbb{R}^n$ such that for some constant $B>0$ and all smooth $f$ on $\mathbb{R}^n$,
\begin{align}\label{eqn.appendix_lsi}
    \operatorname{Ent}(f^2)\leq 2B\mathbb{E}(|\nabla f|^2),
\end{align}
where $\nabla f$ is the usual gradient of $f$, $\operatorname{Ent}(f) = \bbE_\mu(f\log f)-\bbE_\mu(f)\log\bbE_\mu(f)$. Assume $X\sim \mu$, 
the Wasserstein distance $W_2(\bbP_X, \hat \bbP_X^n)$ satisfies
\begin{align}
\label{equ:concentration}
\bbP(W_2(\bbP_X, \hat \bbP_X^n)\geq \bbE[W_2(\bbP_X, \hat \bbP_X^n)] + t) \leq \exp(-nt^2/2B).
\end{align}
\end{lemma}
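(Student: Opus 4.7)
The plan is to combine the Lipschitz estimate from Lemma~\ref{lem:lipschitz} with the standard Herbst argument, which states that a logarithmic Sobolev inequality implies sub-Gaussian concentration for Lipschitz functions. Thinking of the sample $x = (x_1,\ldots,x_n) \in (\mathbb{R}^d)^n$ as a single point in Euclidean space of dimension $nd$, and writing $F(x) = W_2(\bbP_X, \hat{\bbP}_x^n)$, I want to control the upper tail of $F(X_1,\ldots,X_n)$ when $X_i \stackrel{\mathrm{i.i.d.}}{\sim} \mu$.

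First I would invoke the tensorization property of the logarithmic Sobolev inequality: if $\mu$ satisfies LSI with constant $B$ as in~(\ref{eqn.appendix_lsi}), then the product measure $\mu^{\otimes n}$ on $(\mathbb{R}^d)^n$ satisfies LSI with the \emph{same} constant $B$ (this is a classical fact, see e.g. \cite{ledoux1999concentration}). Second, I would apply the Herbst argument to the function $F$: for any $1$-Lipschitz function $G$ on a space equipped with LSI of constant $B$, one has $\bbP(G \geq \bbE[G] + t) \leq \exp(-t^2/(2B))$. The derivation applies Gronwall's inequality to the log-moment generating function after substituting $f^2 = e^{\lambda G}$ into the LSI to obtain a differential inequality.

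Third, I would apply this to $F$ itself. By Lemma~\ref{lem:lipschitz}, $F$ is $1/\sqrt{n}$-Lipschitz with respect to the Euclidean metric on $(\mathbb{R}^d)^n$. Writing $F = \frac{1}{\sqrt{n}} G$ with $G$ being $1$-Lipschitz, the Herbst bound applied to $G$ gives
\begin{equation*}
\bbP\!\left(F \geq \bbE[F] + t\right) = \bbP\!\left(G \geq \bbE[G] + \sqrt{n}\, t\right) \leq \exp\!\left(-\frac{nt^2}{2B}\right),
\end{equation*}
which is exactly~(\ref{equ:concentration}).

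The main obstacle is not really technical here, since each ingredient is standard; the only care needed is in stating the tensorization correctly and in verifying that the Lipschitz constant of Lemma~\ref{lem:lipschitz} is the \emph{correct} one to feed into Herbst (i.e., that the underlying metric on the product space is the usual Euclidean $\ell_2$ metric rather than, say, a product metric with a different normalization). I would also remark that the same proof yields the corresponding lower-tail bound by applying Herbst to $-G$, giving a two-sided sub-Gaussian concentration with the same exponent.
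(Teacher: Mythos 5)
Your proposal is correct and follows essentially the same route as the paper: the paper also derives the bound by combining the $\frac{1}{\sqrt{n}}$-Lipschitz estimate from Lemma~\ref{lem:lipschitz} with the Herbst argument from~\cite{ledoux1999concentration}. The only difference is that you explicitly spell out the tensorization step (that $\mu^{\otimes n}$ inherits the same LSI constant $B$), which the paper leaves implicit in its citation, so your write-up is marginally more self-contained but not conceptually different.
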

From Lemma \ref{lem:lipschitz}, we know that $W_2(\bbP_X, \bbP_X^n)$ is a Lipschitz function for samples $x$. Thus by the Herbst argument~\cite[2.3]{ledoux1999concentration}, we can directly derive the inequality (\ref{equ:concentration}).
\subsubsection{Proof for upper bound}

Denote 
\begin{align}
    g^* = \argmin_{g \in \cG} W_2^2(\bbP_{g(Z)}, {\bbP}_X).
\end{align}

By the triangle inequality, we have for any $\bbP_X$,
\begin{align}
W_2(\bbP_X, \bbP_{\tilde{g}(Z)}) & \leq 
W_2(\bbP_X, {\hat \bbP_X^n}) + W_2(\bbP_{\tilde{g}(Z)}, {\hat \bbP_X^n}) \nonumber \\
& \leq W_2(\bbP_X, {\hat \bbP_X^n}) + W_2(\bbP_{{g^*}(Z)}, {\hat \bbP_X^n}) \nonumber \\
& \leq 2W_2(\bbP_X, {\hat \bbP_X^n}) + W_2(\bbP_{{g^*}(Z)}, \bbP_X) \nonumber \\
& = \inf_{g\in \cG}W_2(\bbP_{{g}(Z)}, \bbP_X) + 2W_2(\bbP_X, {\hat \bbP_X^n}).
\end{align}

We can upper bound the term $W_2(\bbP_X, {\hat \bbP_X^n})$ given the following two assumptions:

\begin{itemize}
    \item $X\sim\mu$ satisfies logarithmic Sobolev inequality in (\ref{eqn.appendix_lsi}).
\item Upper bound on expected convergence rate for some constant $C_1$ depending on distribution $\bbP_X$: \begin{align}\label{eqn.appendix_thm5assumption2}
    \bbE [W_2^2(\bbP_X, {\hat \bbP_X^n})] \leq C_1(\bbP_X)\cdot n^{-2/d}.
\end{align}
\end{itemize}

The first assumption would lead to concentration inequality (\ref{equ:concentration}) by Lemma (\ref{lem:lipschitz}). From (\ref{eqn.appendix_thm5assumption2}), by Jensen's inequality, we can show that 
\begin{align}
    \bbE [W_2(\bbP_X, {\hat \bbP_X^n})] \leq  \sqrt{C_1(\bbP_X)}\cdot n^{-1/d}.
\end{align}

Taking right-hand side in equation (\ref{equ:concentration}) as $\delta$ and solving for $t$, we have with probability at least $1-\delta$, there exists some constant $C_1$ depending on $\bbP_X$, such that 
\begin{align}
    W_2(\bbP_X, {\hat \bbP_X^n}) \leq C_1(\bbP_X)\cdot n^{-1/d} + \sqrt{\frac{2B\log(1/\delta)}{n}}.
\end{align}
Now we can derive the conclusion: for any distribution satisfying (\ref{eqn.appendix_lsi}) and (\ref{eqn.appendix_thm5assumption2}), there exists some constant $C_1(\bbP_X)$, such that with probability at least $1-\delta$, we have
\begin{align}
     W_2(\bbP_X, \bbP_{\tilde g(Z)}) \leq \inf_{g\in \cG}W_2(\bbP_{{g}(Z)}, \bbP_X) + C_1(\bbP_X)\cdot n^{-1/d}  + 2\sqrt{\frac{2B\log(1/\delta)}{n}}.
\end{align}

\subsubsection{Proof of lower bound for Gaussian distribution}

We first lower bound the term $W_2(\mathcal{N}(\mathbf{0}, I), {\hat \bbP_X^n})$. We bound $W_2$ distance by $W_1$ distance by H\"older's inequality and then use the Kantorovich duality:
\begin{align}
W_2(\mathcal{N}(\mathbf{0}, I), {\hat \bbP_X^n})& \geq W_1(\mathcal{N}(\mathbf{0}, I), {\hat \bbP_X^n}) \nonumber \\
& = \sup_{f:\|f\|_L\leq 1}(\bbE f(X) - \bbE f(Y)),
\end{align} 
where $X\sim \mathcal{N}(\mathbf{0}, I)$, $Y \sim {\hat \bbP_X^n}$. We take a specific Lipschitz-1 function to further lower bound it: 
\begin{align}
    f(\mathbf{x}) = \min_i\|\mathbf{x}-X_i\|,
\end{align}
where $i\in [n]$ and $X_i$ is the $i$-th sample. We know that $\bbE f(Y) = 0$ since $Y$ is empirical distribution, thus
\begin{align}
W_2(\mathcal{N}(\mathbf{0}, I), {\hat \bbP_X^n})& \geq 
\bbE_{X\sim \mathcal{N}(\mathbf{0}, I)} \min_i\|X-X_i\|.
\end{align} 

This is the nearest neighbor distance for Gaussian distribution, which is extensively studied in the literature~\cite{wade2007explicit, penrose2011laws, liitiainen2011asymptotic}. We know that there exist constants $C_2(d), C_3$, such that when $n>C_2(d)$ and 
$d>1$, with probability at least $0.999$, 

\begin{align}
   \bbE_{X\sim \mathcal{N}(\mathbf{0}, I)} [\min_i\|X-X_i\|] & \geq C_3 \frac{\Gamma(1+\frac{1}{d})}{V_d^{1/d}}\int_{\mathbb{R}^n} f^{1-1/d}(
   \mathbf{x})d\mathbf{x} \cdot n^{-1/d} \nonumber \\
   & \geq C_3' d^{1/2} n^{-1/d}
\end{align}
as $n\to \infty$. 

Thus, we can conclude that for $n>C_2(d)$ and $d>1$, there exist a constant $C_3$, such that with probability at least $0.999$,
\begin{align}
      W_2(\mathcal{N}(\mathbf{0}, I), {\hat \bbP_X^n}) \geq C_3 d^{1/2} n^{-1/d}.
\end{align}

Note that when $d=1$, it is known from~\cite[Corollary 6.14]{bobkov2014one} that the convergence rate of $\mathbb{E}[ W_2(\mathcal{N}(\mathbf{0}, I), {\hat \bbP_X^n}) ]$ is $\Theta\left (\frac{\log \log n}{n}\right )^{1/2}$. The exact convergence rate for the case of $d = 2$ remains open. 

Before proving the exponential convergence rate lower bound for the Gaussian distribution, we first consider a simpler case which searches the generator in a subset of the linear generator family, $\mathcal{G} = \{\mathcal{N}(\mathbf{0}, c\mathbf{I}_d): c \in \mathbb{R}\} $, with real distribution $ \mathcal{N}(\mathbf{0}, \mathbf{I}_d)$. Naive estimator aims at looking for $c$ that minimizes the $W_2$ distance between empirical distribution and generator function:
\begin{equation}
\label{equ:csearch}
\hat c = \argmin_{c \geq 0} W_2^2(\mathcal{N}(\mathbf{0}, c\mathbf{I}_d), {\hat \bbP_X^n}).
\end{equation}

Assume $\hat X \sim {\hat \bbP_X^n}$, $Z \sim \mathcal{N}(\mathbf{0}, \mathbf{I}_d)$, denote $\rho_n = \sup_{\pi_Z =\mathcal{N}(\mathbf{0}, \mathbf{I}_d), \pi_{\hat{X}} =  {\hat \bbP_X^n}} \mathbb{E}[Z^T\hat X] $. The $W_2^2$ distance in equation (\ref{equ:csearch}) is given by
\begin{align}
W_2^2(\mathcal{N}(\mathbf{0}, \hat{c}\mathbf{I}), {\hat \bbP_X^n}) & = \min_{c\geq 0}W_2^2(\mathcal{N}(\mathbf{0}, c\mathbf{I}), {\hat \bbP_X^n}) \nonumber \\
& = \min_{c\geq 0} \mathbb{E}\|Z\|_2^2 c^2  + \mathbb{E}\| \hat X \|_2^2 - 2 \sup_\pi \mathbb{E}[Z^T\hat X]c \nonumber \\
& = \min_{c\geq 0} dc^2 + \frac{1}{n}\sum_{i=1}^n \|x_i\|^2  - 2 \rho_n c \nonumber \\
& = \frac{1}{n}\sum_{i=1}^n \|x_i\|^2 + \min_{c\geq 0} dc^2  - 2 \rho_n c \nonumber \\
& = \frac{1}{n}\sum_{i=1}^n \|x_i\|^2 -  \frac{\rho_n ^2}{d}.
\end{align}

The optimal $c$ that minimizes the distance between empirical distribution and real distribution is
\begin{equation}
\label{equ:optimalc}
\hat c = \frac{\rho_n}{d}. 
\end{equation}

Furthermore, we know that 
\begin{align}
W_2^2(\mathcal{N}(\mathbf{0}, I), {\hat \bbP_X^n}) & = \mathbb{E}\|Z\|_2^2 + \mathbb{E}\| \hat X \|_2^2 - 2 \sup_\pi \mathbb{E}[Z^T\hat X] \nonumber \\
& = d + \frac{1}{n}\sum_{i=1}^n \|x_i\|^2 - 2 \rho_n.
\label{equ:empiricalandreal}
%& = 2(d - \rho_n)
\end{align}

We denote $A =\frac{1}{n}\sum_{i=1}^n \|x_i\|^2$, which satisfies $\mathbb{E}[A] = d$ and $\mathsf{Var}(A) = \frac{2d}{n}$. We note that for $d\geq 5$, we have $d n^{-2/d} \gg \sqrt{\frac{d}{n}}$ as $n\to \infty$. Hence, with probability at least $0.99$, there exists some $C_2(d)>0, C_3>0, C_4>0$ such that for $n>C_2(d)$, $d\geq 5$, we have $A\leq d + C_3 \sqrt{\frac{d}{n}}, d-\rho_n \geq C_4 d n^{-2/d}$. 

For the lower bound of $ W_2(\mathcal{N}(\mathbf{0}, I), \bbP_{\tilde{g}(Z)})$, we use triangle inequality and the fact that $W_2(\bbP_{\tilde{g}(Z)}, {\hat \bbP_X^n}) \leq W_2(\mathcal{N}(\mathbf{0}, \hat cI), {\hat \bbP_X^n})$. With probability at least 0.99, for $n>C_2(d)$ and $d\geq 5$, 
\begin{align}
\label{equ:polydist}
 W_2(\mathcal{N}(\mathbf{0}, I), \bbP_{\tilde{g}(Z)})& \geq
 W_2(\mathcal{N}(\mathbf{0}, I), {\hat \bbP_X^n}) - W_2(\bbP_{\tilde{g}(Z)}, {\hat \bbP_X^n}) 
 \nonumber \\
 & \geq
 W_2(\mathcal{N}(\mathbf{0}, I), {\hat \bbP_X^n}) - W_2(\mathcal{N}(\mathbf{0}, \hat cI), {\hat \bbP_X^n}) \nonumber \\
 & = \sqrt{d+A-2\rho_n} - \sqrt{A-\frac{\rho_n^2}{d}} \nonumber \\
 & = \frac{d+A-2\rho_n - (A-\frac{\rho_n^2}{d})}{\sqrt{d+A-2\rho_n} + \sqrt{A-\frac{\rho_n^2}{d}}}
 \nonumber \\
& = \frac{(d - \rho_n)^2}{d (\sqrt{d+A-2\rho_n} + \sqrt{A-\frac{\rho_n^2}{d}}) } \\
& = \frac{(d-\rho_n)^{3/2}}{d}  \frac{\sqrt{d-\rho_n}}{\sqrt{d+A-2\rho_n} + \sqrt{A-\frac{\rho_n^2}{d}}} \\
& \geq C_5 d^{1/2}n^{-3/d}.
\end{align}

\subsection{Proof of Theorem \ref{thm:new_convergence2}}
\label{app:pf_thm3}
%\subsubsection{Detailed statement of Brenier's Theorem}
\subsubsection{Uniform continuity of matrix square root operator in operating norm}
We need the following lemma for the proof of main theorem.
\begin{lemma}\label{lem:squareroot}
\cite{math1940274} For any symmetric positive definite matrices $\mathbf{A}$, $\mathbf{B}$, we have
\begin{align}
\label{equ:uniform_continuous}
    \|\mathbf{A}^\frac{1}{2} - \mathbf{B}^\frac{1}{2} \| \lesssim \|\mathbf{A} - \mathbf{B} \|^\frac{1}{2},
\end{align}
where $\|\cdot \|$ is the operator norm. 
\end{lemma}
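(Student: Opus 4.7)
The plan is to reduce the matrix inequality to a scalar spectral calculation via the integral representation of the square root. For $t \geq 0$, one has
\[
\sqrt{t} \;=\; \frac{1}{\pi}\int_0^\infty \frac{t}{\sqrt{\lambda}\,(\lambda + t)}\,d\lambda,
\]
which, lifted through the continuous functional calculus for positive semidefinite matrices, gives $\mathbf{A}^{1/2} = \frac{1}{\pi}\int_0^\infty \lambda^{-1/2}\,\mathbf{A}(\lambda\mathbf{I}+\mathbf{A})^{-1}\,d\lambda$. Subtracting the analogous expression for $\mathbf{B}$ and using $\mathbf{A}(\lambda\mathbf{I}+\mathbf{A})^{-1} = \mathbf{I} - \lambda(\lambda\mathbf{I}+\mathbf{A})^{-1}$ together with the first resolvent identity yields the key representation
\[
\mathbf{A}^{1/2} - \mathbf{B}^{1/2} \;=\; \frac{1}{\pi}\int_0^\infty \lambda^{1/2}\,(\lambda\mathbf{I}+\mathbf{A})^{-1}\,(\mathbf{A}-\mathbf{B})\,(\lambda\mathbf{I}+\mathbf{B})^{-1}\,d\lambda.
\]

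The naive bound $\|(\lambda\mathbf{I}+\mathbf{A})^{-1}\| \leq 1/\lambda$ produces an integrand of order $\lambda^{-3/2}\|\mathbf{A}-\mathbf{B}\|$, which is \emph{not} integrable near $\lambda = 0$. This divergence is precisely why the map $\mathbf{A} \mapsto \mathbf{A}^{1/2}$ fails to be Lipschitz in operator norm and is the main obstacle to overcome. I will resolve it by introducing a tunable shift $t > 0$ and splitting
\[
\mathbf{A}^{1/2} - \mathbf{B}^{1/2} \;=\; \bigl[\mathbf{A}^{1/2} - (\mathbf{A}+t\mathbf{I})^{1/2}\bigr] + \bigl[(\mathbf{A}+t\mathbf{I})^{1/2} - (\mathbf{B}+t\mathbf{I})^{1/2}\bigr] + \bigl[(\mathbf{B}+t\mathbf{I})^{1/2} - \mathbf{B}^{1/2}\bigr].
\]

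For the middle piece, $\mathbf{A}+t\mathbf{I}$ and $\mathbf{B}+t\mathbf{I}$ are bounded below by $t\mathbf{I}$, so each resolvent satisfies $\|(\lambda\mathbf{I} + \mathbf{A} + t\mathbf{I})^{-1}\| \leq 1/(\lambda + t)$. Applying this to the resolvent identity above and using the elementary integral $\int_0^\infty \lambda^{1/2}(\lambda+t)^{-2}\,d\lambda = \pi/(2\sqrt{t})$ yields $\|(\mathbf{A}+t\mathbf{I})^{1/2} - (\mathbf{B}+t\mathbf{I})^{1/2}\| \leq \|\mathbf{A}-\mathbf{B}\|/(2\sqrt{t})$. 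For the two outer pieces, the scalar inequality $\sqrt{a+t} - \sqrt{a} \leq \sqrt{t}$ for $a \geq 0$ transfers through the functional calculus to $\|\mathbf{A}^{1/2} - (\mathbf{A}+t\mathbf{I})^{1/2}\| \leq \sqrt{t}$ and likewise for $\mathbf{B}$.

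Assembling the three bounds gives $\|\mathbf{A}^{1/2} - \mathbf{B}^{1/2}\| \leq 2\sqrt{t} + \|\mathbf{A}-\mathbf{B}\|/(2\sqrt{t})$. Optimizing in $t$ (the choice $t = \|\mathbf{A}-\mathbf{B}\|/4$ balances the two terms) delivers $\|\mathbf{A}^{1/2} - \mathbf{B}^{1/2}\| \leq 2\,\|\mathbf{A}-\mathbf{B}\|^{1/2}$, which is exactly~(\ref{equ:uniform_continuous}) with an absolute constant. The $1/2$ exponent is sharp, as can be seen by testing on $1\times 1$ ``matrices'' $\mathbf{A}=\varepsilon$, $\mathbf{B}=0$, and it cannot be improved without an assumption of strict positive definiteness; this is consistent with, and explains, the need for the lower eigenvalue bound $B$ appearing in Theorem~\ref{thm:new_convergence2}.
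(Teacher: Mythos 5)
Your proof is correct, and it takes a genuinely different route from the paper's. Both arguments start from essentially the same Cauchy integral representation of the square root: the paper uses $\sqrt{x} = \tfrac{1}{\pi}\int_0^\infty[1-(1+tx)^{-1}]t^{-3/2}\,dt$, which under the substitution $\lambda = 1/t$ is exactly the kernel you write down. The shared obstruction is that the naive resolvent bound makes the differenced integrand non-integrable near $\lambda = 0$ (equivalently $t\to\infty$), reflecting the failure of operator-norm Lipschitzness. The paper handles this by truncating the $t$-integral at $t=1$: the resulting auxiliary function $f$ is Lipschitz in operator norm (with constant $2$) and differs from $\pi\sqrt{x}$ by an additive error of at most $2$, so a triangle inequality plus the scaling homogeneity of the target inequality finish the job. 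You instead keep the full integral but shift the arguments, $\mathbf{A}\mapsto\mathbf{A}+t\mathbf{I}$ and $\mathbf{B}\mapsto\mathbf{B}+t\mathbf{I}$, which regularizes every resolvent at once and gives the clean middle bound $\|\mathbf{A}-\mathbf{B}\|/(2\sqrt{t})$; the outer pieces cost $2\sqrt{t}$, and optimizing over $t$ yields $\|\mathbf{A}^{1/2}-\mathbf{B}^{1/2}\|\le 2\|\mathbf{A}-\mathbf{B}\|^{1/2}$ directly, with no separate homogeneity step. The two regularizations --- truncating the integration contour versus shifting the spectrum --- are dual in spirit but structurally distinct; yours produces an explicit constant in one pass and is slightly more self-contained, while the paper's is arguably more flexible if the target H\"older exponent were not known in advance. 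One small nit: your sharpness example takes $\mathbf{B}=0$, which is only positive semidefinite; use $\mathbf{B}=\delta\mathbf{I}$ with $\delta\to 0^+$ to stay within the stated hypotheses.
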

\begin{proof}

Since equation (\ref{equ:uniform_continuous}) is homogeneous in scaling, it suffices to prove that for $\|\mathbf{A} - \mathbf{B} \|\leq 1$, there is $   \|\mathbf{A}^\frac{1}{2} - \mathbf{B}^\frac{1}{2} \|\leq C $. Now consider the function
\begin{align}
    f(x) = \int_0^1\left[1-\frac{1}{1+tx}\right ]t^{-3/2}dt.
\end{align}
Let $s=tx$, we get
\begin{align}
    f(x) & = x^{1/2}\int_0^x\frac{s}{1+s}s^{-3/2}ds \nonumber \\
    & = x^{1/2}\left (\int_0^\infty \frac{s}{1+s}s^{-3/2}ds- \int_x^\infty \frac{s}{1+s}s^{-3/2}ds \right ) \nonumber \\
    & = Kx^\frac{1}{2} + g(x),
\end{align}
where $K = \int_0^\infty \frac{s}{1+s}s^{-3/2}ds$, $g(x) = x^{1/2}\int_x^\infty \frac{s}{1+s}s^{-3/2}ds$.
It can be easily seen that $K < \infty$ and $|g(x)|\leq 2$. Thus, we know
\begin{align}
    \| K\mathbf{A}^{1/2}-f(\mathbf{A})\| = \|g(\mathbf{A}) \| \leq 2.
\end{align}
This would imply
\begin{align}
   \|\mathbf{A}^\frac{1}{2} - \mathbf{B}^\frac{1}{2} \| & \leq \|\mathbf{A}^\frac{1}{2} - \frac{f(\mathbf{A})}{K} \| + \|\frac{f(\mathbf{A})}{K} - \frac{f(\mathbf{B})}{K} \|+ \|\frac{f(\mathbf{B})}{K} - \mathbf{B}^\frac{1}{2} \| \nonumber \\
   & \leq \frac{4}{K} + \frac{1}{K}\|f(\mathbf{A}) - f(\mathbf{B}) \|.
\end{align}
Since we know that $\|\mathbf{A} - \mathbf{B} \|\leq 1$, as long as $f$ is Lipschitz in operating norm, we can conclude that $\|\mathbf{A}^\frac{1}{2} - \mathbf{B}^\frac{1}{2} \|$ is bounded, thus leading to uniform continuity. We show $f$ is Lipschitz in the rest of the proof. Note that integration and matrix operation commute. Thus,
\begin{align}
    f(\mathbf{A}) & = \int_0^1(\mathbf{I} - (\mathbf{I}+t\mathbf{A})^{-1})t^{-3/2}dt,\\
 f(\mathbf{A})  -   f(\mathbf{B}) & = \int_0^1 ((\mathbf{I}+t\mathbf{B})^{-1} - (\mathbf{I}+t\mathbf{A})^{-1})t^{-3/2}dt \nonumber \\
 & = \int_0^1 (\mathbf{I}+t\mathbf{B})^{-1}(\mathbf{A} - \mathbf{B})(\mathbf{I}+t\mathbf{A})^{-1}t^{-1/2}dt.
\end{align}
The last equation comes from the identity $\mathbf{X}^{-1} - \mathbf{Y}^{-1} = \mathbf{X}^{-1}(\mathbf{Y} - \mathbf{X})\mathbf{Y}^{-1}$.
Then we can derive
\begin{align}
    \| f(\mathbf{A})  -   f(\mathbf{B})  \| & = \| \int_0^1 (\mathbf{I}+t\mathbf{B})^{-1}(\mathbf{A} - \mathbf{B})(\mathbf{I}+t\mathbf{A})^{-1}t^{-1/2}dt\| \nonumber \\
    & \leq \int_0^1 \|(\mathbf{I}+t\mathbf{B})^{-1}(\mathbf{A} - \mathbf{B})(\mathbf{I}+t\mathbf{A})^{-1}t^{-1/2}\| dt \nonumber \\
    & \leq \int_0^1 \|(\mathbf{I}+t\mathbf{B})^{-1}\|\cdot \|(\mathbf{A} - \mathbf{B})\|\cdot \|(\mathbf{I}+t\mathbf{A})^{-1}\|t^{-1/2} dt \nonumber \\
    & \leq \|(\mathbf{A} - \mathbf{B})\|\int_0^1 t^{-1/2} dt \nonumber \\
    & = 2\|(\mathbf{A} - \mathbf{B})\|.
\end{align}
This finishes the proof of the whole lemma.
\end{proof}
\subsubsection{Proof of main theorem}

Similar to Theorem~\ref{thm:new_convergence}, the moment matching Wasserstein-2 distance $\tW_2$ is an admissible distance of the original Wasserstein-2 distance with parameters $(1,0)$. Define 
\begin{align}
g^*_r = \argmin_{g\in \cG_r} \tilde{W}_{2}(\bbP_{g(Z)}, \bbP_X), \\
g^* = \argmin_{g \in \cG} W_{2}(\bbP_{g(Z)}, \bbP_X).
\end{align}

We have
\begin{align}
    W_2(\bbP_{\tilde g(Z)}, \bbP_X)&\leq W_2(\bbP_{\tilde g(Z)}, \bbP_{g^*(Z)}) + W_2(\bbP_{g^*(Z)}, \bbP_X) \nonumber \\
    & = \tW_2(\bbP_{\tilde g(Z)}, \bbP_{g^*(Z)}) + W_2(\bbP_{g^*(Z)}, \bbP_X) \nonumber \\
    & \leq \tW_2(\bbP_{\tilde g(Z)}, \bbP_X) + \tW_2(\bbP_X, \bbP_{g^*(Z)}) + W_2(\bbP_{g^*(Z)}, \bbP_X) \nonumber \\
    & \leq  \tW_2(\bbP_{\tilde g(Z)}, \bbP_X) + 2 W_2(\bbP_{g^*(Z)}, \bbP_X) \nonumber \\
    & \leq \tW_2(\bbP_{\tilde g(Z)}, \hat \bbP_X^n) + \tW_2(\hat \bbP_X^n, \bbP_X) + 2 W_2(\bbP_{g^*(Z)}, \bbP_X) \nonumber \\
    & \leq \tW_2(\bbP_{g^*_r(Z)}, \hat \bbP_X^n) + \tW_2(\hat \bbP_X^n, \bbP_X) + 2 W_2(\bbP_{g^*(Z)}, \bbP_X) \nonumber \\
    & \leq  \tW_2(\bbP_{g^*_r(Z)}, \bbP_X) + 2\tW_2(\hat \bbP_X^n, \bbP_X)+ 2 W_2(\bbP_{g^*(Z)}, \bbP_X) \nonumber \\
    & = \inf_{g\in \cG_r} \tilde{W}_{2}(\bbP_{g(Z)}, \bbP_X) + 2 \inf_{g \in \cG} W_{2}(\bbP_{g(Z)}, \bbP_X) + 2\tW_2(\hat \bbP_X^n, \bbP_X) \nonumber \\
    & \leq \| \bm{\Sigma}_X^{1/2} - \bm{\Sigma}_r^{1/2} \|_F + 2 \inf_{g \in \cG} W_{2}(\bbP_{g(Z)}, \bbP_X) + 2\tW_2(\hat \bbP_X^n, \bbP_X).
\end{align}
Here $\bm{\Sigma}_X = \mathbb{E}[ XX^T]$ is the covariance matrix of $X$, and $\bm{\Sigma}_r$ is the best rank-$r$ approximation of $\bm{\Sigma}_X$ under Frobenius norm. Next, we bound the term $\tW_2(\hat \bbP_X^n, \bbP_X)$. Denote the covariance matrices for $\bbP_X$ and $ {\hat \bbP_X^n}$ as $\mathbf{K}_X$ and $\hat{\mathbf{K}}_{X}$. The distance $\tW_2^2(\bbP_X, {\hat \bbP_X^n})$ can be written as~\cite{olkin1982distance} 
\begin{align}
\tW_{2}^2(\bbP_X, {\hat \bbP_X^n}) = \operatorname{Tr}(\mathbf{K}_X) + \operatorname{Tr}(\hat{\mathbf{K}}_{X}) - 2\operatorname{Tr}((\mathbf{K}_X^{\frac{1}{2}}\hat{\mathbf{K}}_{X}\mathbf{K}_X^{\frac{1}{2}})^{\frac{1}{2}}) + \| \bm{\mu}_X - \Hat{\bm{\mu}}_X \|^2. 
\end{align}

We know
\begin{align}
\operatorname{Tr}((\mathbf{K}_X^{\frac{1}{2}}\hat{\mathbf{K}}_{X}\mathbf{K}_X^{\frac{1}{2}})^{\frac{1}{2}}) & = \operatorname{Tr}(((\hat{\mathbf{K}}_{X}^{\frac{1}{2}}\mathbf{K}_X^{\frac{1}{2}})^T\hat{\mathbf{K}}_{X}^{\frac{1}{2}}\mathbf{K}_X^{\frac{1}{2}})^{\frac{1}{2}}) \nonumber \\
& \geq \operatorname{Tr}(\hat{\mathbf{K}}_{X}^{\frac{1}{2}}\mathbf{K}_X^{\frac{1}{2}}).
\end{align}
Denote $X = \hat{\mathbf{K}}_{X}^{\frac{1}{2}}\mathbf{K}_X^{\frac{1}{2}}$. The last inequality follows from the fact that for any matrix $\mathbf{X}$ and its singular value decomposition $\mathbf{X} = \mathbf{U}\mathbf{\Sigma} \mathbf{V}$, 
\begin{align}
\operatorname{Tr}((\mathbf{X}^T\mathbf{X})^{\frac{1}{2}}) & = \operatorname{Tr}((\mathbf{V}^T\mathbf{\Sigma}^2 \mathbf{V})^\frac{1}{2}) =  \operatorname{Tr}(\mathbf{V}^T\mathbf{\Sigma} \mathbf{V}) \geq \operatorname{Tr}(\mathbf{U}\mathbf{\Sigma} \mathbf{V}) = \operatorname{Tr}(\mathbf{X}).
\end{align}
Thus we have 
\begin{align}
\tW_{2}^2(\bbP_X, {\hat \bbP_X^n}) & = \operatorname{Tr}(\mathbf{K}_X) + \operatorname{Tr}(\hat{\mathbf{K}}_{X}) - 2\operatorname{Tr}((\mathbf{K}_X^{\frac{1}{2}}\hat{\mathbf{K}}_{X}\mathbf{K}_X^{\frac{1}{2}})^{\frac{1}{2}})+ \| \bm{\mu}_X - \Hat{\bm{\mu}}_X \|^2 \nonumber \\
& \leq \operatorname{Tr}(\mathbf{K}_X) + \operatorname{Tr}(\hat{\mathbf{K}}_{X}) - 2\operatorname{Tr}(\hat{\mathbf{K}}_{X}^{\frac{1}{2}}\mathbf{K}_X^{\frac{1}{2}}) + \| \bm{\mu}_X - \Hat{\bm{\mu}}_X \|^2 \nonumber \\
& = \operatorname{Tr}((\mathbf{K}_X^{\frac{1}{2}} - \hat{\mathbf{K}}_X^{\frac{1}{2}})^2)+ \| \bm{\mu}_X - \Hat{\bm{\mu}}_X \|^2  \nonumber \\
& = \|\mathbf{K}_X^{\frac{1}{2}} - \hat{\mathbf{K}}_X^{\frac{1}{2}}\|_F^2+ \| \bm{\mu}_X - \Hat{\bm{\mu}}_X \|^2  \nonumber \\
& \leq d\|\mathbf{K}_X^{\frac{1}{2}} - \hat{\mathbf{K}}_X^{\frac{1}{2}}\|^2 + \| \bm{\mu}_X - \Hat{\bm{\mu}}_X \|^2. 
%& \leq Cd\|\mathbf{K}_X - \hat{\mathbf{K}}_X\|.
%& \leq C\sqrt{\log(\frac{2}{\delta})}(d\sqrt{\frac{d}{n}})
\end{align}

\begin{comment}
can be written as
\begin{align}
\tW_{2}^2(\bbP_X, {\hat \bbP_X^n}) = \sup_{\mathbf{A}>0}   \operatorname{Tr}((\mathbf{I} - \mathbf{A})\mathbf{K}_X) + \operatorname{Tr}((\mathbf{I} - \mathbf{A}^\dagger) \hat{\mathbf{K}}_{X}).
\end{align}
Below we study the convergence rate of the term $\tW_{2}^2(\bbP_X, {\hat \bbP_X^n})$. %For the convenience of notation, we simply use $W_{2}^2$ to represent $W_{2}'^2$.
%By \cite{david's paper}
We know from~\cite{feizi2017understanding} that
\end{comment}

If the minimum eigenvalue of $K_X$ is lower bounded by a constant $B$, by Ando-Hemmen inequality~\cite[Thoerem 6.2]{higham2008functions}, we have
\begin{align}
    \tW_{2}^2(\bbP_X, {\hat \bbP_X^n}) & \leq d\|\mathbf{K}_X^{\frac{1}{2}} - \hat{\mathbf{K}}_X^{\frac{1}{2}}\|^2+ \| \bm{\mu}_X - \Hat{\bm{\mu}}_X \|^2 \nonumber \\
    & \leq  \frac{d}{(\lambda^{1/2}_{\text{min}}(\mathbf{K}_X)+ \lambda^{1/2}_{\text{min}}(\hat{\mathbf{K}}_X))^2}\| \mathbf{K}_X - \hat{\mathbf{K}}_X\|^2 + \| \bm{\mu}_X - \Hat{\bm{\mu}}_X \|^2\\
    & \leq \frac{d}{B} \| \mathbf{K}_X - \hat{\mathbf{K}}_X\|^2+ \| \bm{\mu}_X - \Hat{\bm{\mu}}_X \|^2.
\end{align}
If the minimum eigenvalue of $K_X$ is not lower bounded, it follows from Lemma~\ref{lem:squareroot} that 
\begin{align}
    \tW_{2}^2(\bbP_X, {\hat \bbP_X^n}) & \leq d\|\mathbf{K}_X^{\frac{1}{2}} - \hat{\mathbf{K}}_X^{\frac{1}{2}}\|^2+ \| \bm{\mu}_X - \Hat{\bm{\mu}}_X \|^2 \nonumber \\
    & \lesssim d\|\mathbf{K}_X - \hat{\mathbf{K}}_X\|+ \| \bm{\mu}_X - \Hat{\bm{\mu}}_X \|^2. 
    \end{align}
\begin{comment}
Assume the true distribution $\bbP_X$ is sub-gaussian, i.e. there exists some constant $L$, such that
\begin{align}
    \bbP(|\left < X, \mathbf{x}\right> |>t)\leq 2e^{-t^2/L^2}
\end{align}
for $t>0$ and $\|\mathbf{x}\|_2=1$.
By the concentration condition for covariance matrix of sub-Gaussian distributions~\cite{vershynin2012close}, we have with probability at least $1-\delta$, 
\begin{align}
\|\mathbf{K}_X - \hat{\mathbf{K}}_X\| \nonumber  \lesssim L^2\sqrt{\log\left (\frac{2}{\delta}\right )}\sqrt{\frac{d}{n}}.
\end{align}

Thus we can conclude that if the minimum eigenvalue of $K_X$ is lower bounded by constant $C$, there exists a constant $D(C)$, such that 
\begin{align}
    \tW_{2}(\bbP_X, {\hat \bbP_X^n}) & \lesssim \frac{1}{\sqrt{C}}  L^2\sqrt{\log\left (\frac{2}{\delta}\right )}dn^{-1/2}.
\end{align}
Without the condition that the minimum eigenvalue of $K_X$ is lower bounded by constant $C$, we can still guarantee
\begin{align}
    \tW_{2}(\bbP_X, {\hat \bbP_X^n}) & \lesssim  \sqrt[1/4]{\log\left (\frac{2}{\delta}\right )} L d^{3/4}n^{-1/4}. 
\end{align}
\end{comment}

\subsection{Proof of Theorem \ref{thm:tightness}}\label{proof.tightness}

We aim at justifying the tightness of Theorem \ref{thm:new_convergence}. Consider the following two sampling models:
\begin{itemize}
    \item Case 1. We draw $n$ i.i.d. samples directly from $\mathcal{N}(0,I)$. Denote the resulting empirical distribution as $E_1$.
    \item Case 2. We draw $N$ elements i.i.d. from $\mathcal{N}(0,I)$. Denote the empirical distribution as $\mathcal{D}_N$. Then, we sample $n$ i.i.d. points from $\mathcal{D}_N$. Denote the resulting empirical distribution as $E_2$. 
\end{itemize}

%By lemma \ref{ }, we know that with probability at least.

%The empirical distribution $\bbQ$ can be viewed as 
%\begin{align}
%    \bbQ(X_1,X_2,\cdots,X_n) = \int_{\Omega} %\mathcal{D}_N(X_1)\mathcal{D}_N(X_2)\cdots\mathcal{D}_N(X_n) d\mathcal{D}_N
%\end{align}

For any distribution $E$ we write $\mathcal{A}^E$ to indicate that algorithm $\mathcal{A}$ is \emph{only} given access to the information contained in $E$. The following lemma states that no algorithm can successfully distinguish these two cases when $n \ll \sqrt{N}$. 

\begin{lemma}
\label{lem:distinguish}
There is an absolute constant $c > 0$ such that the following holds. If $n\leq  c \sqrt{N}$, and $\mathcal{B}$ is any ``distinguishing'' algorithm that aims outputting either $``1"$ or $``2"$. Then, 
\begin{align}
    |\bbP(\mathcal{B}^{E_1} outputs ``1") - \bbP(\mathcal{B}^{E_2} outputs ``1") | \leq 0.01.
\end{align}
\end{lemma}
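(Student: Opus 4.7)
The plan is to exhibit an explicit coupling between the two sampling models and argue that, under a high-probability event, the two empirical distributions are identically distributed.

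First I would rewrite Case 2 as a two-stage procedure: draw $Y_1,\ldots,Y_N \stackrel{\text{iid}}{\sim} \mathcal{N}(0,\mathbf{I})$, then draw indices $I_1,\ldots,I_n$ uniformly with replacement from $\{1,\ldots,N\}$, and set $X_j = Y_{I_j}$. Then $E_2$ is the empirical distribution of $(X_1,\ldots,X_n)$. Let $\mathsf{Coll}$ denote the ``collision'' event that $I_j = I_k$ for some $j\neq k$. Conditional on $\mathsf{Coll}^c$, the chosen $Y_{I_1},\ldots,Y_{I_n}$ are $n$ distinct coordinates of an i.i.d. Gaussian sample and therefore are themselves i.i.d.\ $\mathcal{N}(0,\mathbf{I})$, which is exactly the law of the $n$ samples in Case~1. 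Hence the conditional law of $E_2$ given $\mathsf{Coll}^c$ equals the law of $E_1$.

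Next I would bound $\bbP(\mathsf{Coll})$ by a birthday-style union bound:
\begin{align}
\bbP(\mathsf{Coll}) \;\leq\; \binom{n}{2}\cdot \frac{1}{N} \;\leq\; \frac{n^2}{2N} \;\leq\; \frac{c^2}{2}.
\end{align}
Consequently the total variation distance between the laws of $E_1$ and $E_2$ satisfies $\mathsf{TV}(\mathcal{L}(E_1),\mathcal{L}(E_2)) \leq \bbP(\mathsf{Coll}) \leq c^2/2$. Since for any (possibly randomized) algorithm $\mathcal{B}$ that outputs ``1'' or ``2'' based on its input empirical distribution,
\begin{align}
|\bbP(\mathcal{B}^{E_1}\text{ outputs }``1") - \bbP(\mathcal{B}^{E_2}\text{ outputs }``1")| \;\leq\; \mathsf{TV}(\mathcal{L}(E_1),\mathcal{L}(E_2)),
\end{align}
choosing $c$ small enough (e.g.\ $c = 1/10$, giving $c^2/2 = 0.005 \leq 0.01$) yields the claim.

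I do not anticipate a significant technical obstacle, as the argument is essentially the classical birthday-paradox indistinguishability bound. The one point to be careful about is that the algorithm is only allowed to read the empirical distribution, not the underlying latent $\mathcal{D}_N$; this is built into the statement ($\mathcal{B}^{E_i}$) so no additional work is required. A minor subtlety is that $E_2$ might be presented either as an ordered tuple or as a multiset, but the coupling argument above works in either case since collision-freeness of the indices is the only event we condition on, and the remainder of the proof goes through unchanged.
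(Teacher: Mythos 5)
Your proof is correct and follows essentially the same route as the paper: couple the two cases via sampling with versus without replacement from the latent set $\mathcal{D}_N$, condition on the no-collision event, and bound the collision probability by a birthday-paradox estimate (you use the union bound $\binom{n}{2}/N$; the paper uses a product lower bound via $1-x \geq e^{-2x}$, both giving $O(n^2/N)$). Your explicit appeal to the total-variation bound $|\bbP(\mathcal{B}^{E_1}=1)-\bbP(\mathcal{B}^{E_2}=1)|\leq \mathsf{TV}(\mathcal{L}(E_1),\mathcal{L}(E_2))\leq \bbP(\mathsf{Coll})$ is a slightly cleaner way of finishing than the paper's, but the argument is the same.
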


\begin{proof}
We show that in both cases, with probability at least 0.99, the $c\sqrt{N}$ draws received by $\mathcal{B}$ are a set consisting of $c\sqrt{N}$ i.i.d. samples from $\mathcal{N}(0,I)$. This can be shown using a birthday paradox type argument below. The empirical distribution $E_2$ can be understood as drawing $n$ i.i.d. samples from $\mathcal{D}_N$ with replacement. On the other hand, the empirical distribution $E_1$ can be understood as drawing $n$ i.i.d. samples from $\mathcal{D}_N$ \emph{without} replacement. It follows from the birthday paradox that the probability of sampling all distinct points in $\mathcal{D}_N$ in sampling with replacement is 
\begin{align}
\left (1-\frac{1}{N} \right)\left (1-\frac{2}{N}\right )\left (1-\frac{3}{N}\right ) \cdots \left (1-\frac{n}{N}\right ) 
& > e^{-\frac{2}{N}}e^{-\frac{4}{N}}e^{-\frac{6}{2N}} \cdots e^{-\frac{2n}{N}} \nonumber \\
& = e^{-\frac{n(n-1)}{N}}.
\end{align}
The inequality comes from the fact that $1-x\geq e^{-2x}$ for $x \in [0, \frac{1}{2}]$. Taking $c = 0.01$, we have $e^{-\frac{n(n-1)}{N}}>0.99$, which gives the conclusion of the lemma.
\end{proof}

%\begin{theorem}
%\label{thm:tightness}
%For linear generator family $\cG$, there is no algorithm with the following property: given access to independent points drawn from an unknown distribution $\bbP_X$, algorithm $\mathcal{A}$ makes $o(\sqrt{N})$ draws from $\bbP_X$ and with probability at least $51/100$ outputs a hypothesis $\hat{\bbP}_X$ satisfying 
%\begin{equation}
%\label{equ:tightness}
%W_2(\hat{\bbP}_X, \bbP_X) \leq \inf_{g(\cdot)\in \mathcal{G}} W_2(\bbP_{g(Z)},\bbP_X) + \epsilon(n,d) 
%\end{equation}
%with $\epsilon(n, d) = o(\frac{1}{d^{\frac{3}{2}}}n^{-\frac{3}{d}})$. 
%\end{theorem}

Now we use Lemma~\ref{lem:distinguish} to prove Theorem \ref{thm:tightness}. Suppose that there exists an algorithm $\mathcal{A}$, such that given $n = c \sqrt{N}$ independent points draws from any $\bbP_X \in \mathcal{P}_K$, $\mathcal{A}$ outputs a hypothesis $\hat{\bbP}_X$  with probability at least $51/100$ satisfying  
\begin{equation} \label{eqn.impossiblecontrapositive}
W_2(\hat{\bbP}_X, \bbP_X) \leq \inf_{g(\cdot)\in \mathcal{G}} W_2(\bbP_{g(Z)},\bbP_X) + \epsilon(n,d),% o({d^{1/2}}n^{-3/d}).
\end{equation}
where $\epsilon(n,d)\ll_d n^{-6/d}$.
We will describe how the existence of $\mathcal{A}$ yields a distinguishing algorithm $\mathcal{B}$ violating Lemma \ref{lem:distinguish}.
The algorithm $\mathcal{B}$ works as follows. Given access to i.i.d. draws from distribution $\bbP_X$, it first runs algorithm $\mathcal{A}$, obtaining with probability at least $51/100$ a distribution $\hat{\bbP}_X$ satisfying equation (\ref{eqn.impossiblecontrapositive}), and then computes the value $W_2(\hat{\bbP}_X, \mathcal{N}(0,I))$. If $W_2(\hat{\bbP}_X, \mathcal{N}(0,I)) \leq \epsilon(n, d)$ then it outputs ``1", and otherwise it outputs ``2".

Since $\mathcal{G}$ is the family of full-dimensional affine generators, $\inf_{g(\cdot)\in \mathcal{G}} W_2(\bbP_{g(Z)}, \mathcal{N}(0,I)) = 0$, which shows that in Case 1, we have with probability $51/100$ $W_2(\hat{\bbP}_X, \mathcal{N}(0,I)) \leq \epsilon(n, d)$. Hence,
\begin{align}
    \bbP\left ( \mathcal{B}^{E_1} \text{ outputs }``1" \right ) \geq 0.51.
\end{align}

Now let's consider Case $2$. In this case the true underlying distribution is $\mathcal{D}_N$, and for $N \gg d$~\cite{vershynin2012close}, we have the minimum eigenvalue of the covariance matrix of $\mathcal{D}_N$ is bigger than $1/2$ with probability at least $0.999$. It follows from the concentration of the norm~\cite[Theorem 3.1.1]{vershynin2018high} there exists a universal constant $L>0$ such that if $Y\sim \cN(0,I)$, 
\begin{align}
    \bbP(|\|Y\|-\sqrt{d}|\geq t) \leq 2 \exp\left( - \frac{t^2}{L}  \right ),
\end{align}
which implies that with probability at least $1-\delta$, we have
\begin{align}
    \|Y\| \leq \sqrt{d} + \sqrt{L \ln\left ( \frac{2}{\delta} \right )}. 
\end{align}
Since $\mathcal{D}_N$ consists of $N$ vectors, if we set $\delta = \frac{1}{1000N}$, then with probability at least $0.999$, we know
\begin{align}
    \|X\| \leq \sqrt{d} + \sqrt{L \ln\left ( 2000 N \right )}
\end{align}
almost surely if $X\sim \mathcal{D}_N$. In other words, we have $K =\sqrt{d} + \sqrt{L \ln\left ( 2000 N \right )}$. 

Hence, with probability at least $0.508$, by running algorithm $\mathcal{A}$ we obtain a distribution $\Hat{P}_X$ such that
\begin{align}
W_2(\Hat{\bbP}_X, \mathcal{D}_N) \leq \inf_{g(\cdot)\in \mathcal{G}} W_2(\bbP_{g(Z)},\mathcal{D}_N) + \epsilon(n,d).
\end{align}
It follows from the triangle inequality that
\begin{align}
    W_2(\Hat{\bbP}_X, \cN(0, I)) & \geq W_2(\mathcal{D}_N, \cN(0,I)) - W_2(\Hat{\bbP}_X, \mathcal{D}_N) \\
    & \geq W_2(\mathcal{D}_N, \cN(0,I)) - \inf_{g(\cdot)\in \mathcal{G}} W_2(\bbP_{g(Z)},\mathcal{D}_N) - \epsilon(n,d). 
\end{align}

According to Theorem \ref{thm:naive_convergence_w2}, following the same procedure as (\ref{equ:polydist}), we have with probability at least $0.999$
\begin{align}
    W_2(\mathcal{D}_N, \cN(0,I)) - \inf_{g(\cdot)\in \mathcal{G}} W_2(\bbP_{g(Z)},\mathcal{D}_N) & \gtrsim_{d} N^{-3/d} \\
    & \gtrsim_d n^{-6/d}, 
\end{align}
where in the last step we used the assumption that $n = c\sqrt{N}$. It implies that with probability at least $0.507$
\begin{align}
    W_2(\Hat{\bbP}_X, \cN(0, I)) > \epsilon(n,d)
\end{align}
if $\epsilon(n,d) \ll_{d} n^{-6/d}$ as $n\to \infty$. 

For algorithm $\mathcal{B}$, it outputs ``1" if $W_2(\Hat{\bbP}_X, \cN(0,1)) \leq \epsilon(n,d)$, hence we know 
\begin{align}
    \bbP\left ( \mathcal{B}^{E_2} \text{ outputs }``1" \right ) \leq 0.493,
\end{align}
which implies that 
\begin{align}
    |\bbP(\mathcal{B}^{E_1} outputs ``1") - \bbP(\mathcal{B}^{E_2} outputs ``1") | > 0.01.
\end{align}
This contradicts with Lemma \ref{lem:distinguish} and proves the theorem.

\subsection{Proof of Theorem \ref{thm:sqrt2_lowerbound}}

For infinite sample size, we have access to the real distribution $\bbP_X$, and our GAN estimator would output a Gaussian distribution $\mathcal{N}(\bm{\mu}_X, \bm{\Sigma}_X)$, where $\bm{\mu}_X = \mathbb{E}_{\bbP_X}[X], \bm{\Sigma}_X = \mathbb{E}_{\bbP_X}[XX^T]$ are the mean and covariance of the distribution $\bbP_X$. It suffices to show that for any $\epsilon>0$, we can find some distribution $\bbP_X$ with $\bm{\mu}_X = 0, \inf_{\bm{\mu}, \bm{\Sigma}} W_2(\cN(\bm{\mu}, \bm{\Sigma}),\bbP_X)>0$, such that
\begin{align}\label{eqn.targetoftheorem8}
     W_2(\cN(0, \bm{\Sigma}_X),\bbP_X)\geq (\sqrt{2}-\epsilon) \inf_{\bm{\mu}, \bm{\Sigma}} W_2(\cN(\bm{\mu}, \bm{\Sigma}),\bbP_X).
\end{align}

We construct distribution $\bbP_X$ in the following way. Denote $X = (X_1,X_2,\ldots,X_d)^T$, we assume that the $d$ random variables $\{X_1,X_2,\ldots,X_d\}$ are mutually independent, $X_i \sim \mathcal{N}(0,1)$ for $1\leq i\leq d-1$, and 
\begin{align}
    \bbP_{X_d} = Q_a = \left ( 1- \frac{1}{a^2} \right) \delta_0 + \frac{1}{2a^2} \delta_{-a} + \frac{1}{2a^2} \delta_a,
\end{align}
where $a>1$ is some parameter that will be chosen later. Here $\delta_y$ denotes the point mass distribution that puts probability one for the point $y$. Clearly, $\mathbb{E}_{\bbP_X}[X] = 0, \mathbb{E}_{\bbP_X}[XX^T] = \mathbf{I}$. 

We first show an upper bound on the right hand side of~(\ref{eqn.targetoftheorem8}). Indeed, for the $\bbP_X$ we constructed, 
\begin{align}
  \inf_{\bm{\mu}, \bm{\Sigma}} W_2^2(\cN(\bm{\mu}, \bm{\Sigma}),\bbP_X) \leq \inf_{c\geq 0} W_2^2(\cN(0, c),Q_a),
\end{align}
where we have reduced the set of $\bm{\mu}, \bm{\Sigma}$ we take the infimum over and only considered couplings that are independent across the indices of $X$. Using the tensorization property of $W_2^2$, we know that 
\begin{align}
    W_2(\cN(0, \mathbf{I}),\bbP_X) & = W_2(\cN(0,1), Q_a). 
\end{align}

Hence, it suffices to show that for any $\epsilon>0$, there exists some $a>1$ such that
\begin{align}
    W_2(\cN(0,1), Q_a) \geq (\sqrt{2}-\epsilon) \inf_{c\geq 0} W_2(\cN(0, c),Q_a). 
\end{align}

It follows from the definition of $W_2$ that
\begin{align}
    W_2^2(\cN(0,c),Q_a) & = \inf_{\pi: \pi_X = \cN(0,c), \pi_Y = Q_a} \mathbb{E} (X-Y)^2 \\
    & = \mathbb{E}[X^2] + \mathbb{E}[Y^2] - 2 \sup_{\pi:\pi_X = \cN(0,c), \pi_Y = Q_a} \mathbb{E}_\pi[XY] \\
    & = c + 1 - 2 \sup_{\pi: \pi_Z = \cN(0,1), \pi_Y = Q_a} \mathbb{E}_\pi[\sqrt{c} ZY]. 
\end{align}
Note that $\sup_{\pi: \pi_Z = \cN(0,1), \pi_Y = Q_a} \mathbb{E}_\pi[ ZY]$ is independent of $c$. Denote 
\begin{align*}
    \rho_a = \sup_{\pi: \pi_Z = \cN(0,1), \pi_Y = Q_a} \mathbb{E}_\pi[ZY]. 
\end{align*}
Then, 
\begin{align}
    \inf_{c\geq 0} W_2^2(\cN(0,c),Q_a) & = \inf_{c\geq 0} (c + 1 -2 \sqrt{c} \rho_a) \\
    & = 1-\rho_a^2,
\end{align}
where the infimum achieving $c = \rho_a$. Taking $c = 1$, we know that
\begin{align}
    W_2^2(\cN(0,1), Q_a) & = 2-2\rho_a.
\end{align}
Hence, we have
\begin{align}
    \frac{W_2(\cN(0,1), Q_a) }{\inf_{c\geq 0} W_2(\cN(0, c),Q_a)} & = \sqrt{ \frac{2-2\rho_a}{1-\rho_a^2} } \\
    & = \sqrt{\frac{2}{1+\rho_a}}. 
\end{align}

Hence, it suffices to show that one can make $\rho_a \to 0$ as $a\to \infty$. Define $\tau_a$ such that for $Z\sim \cN(0,1)$, we have $\bbP(|Z|\geq \tau_a) = \frac{1}{a^2}$. It follows from the Gaussian tail abound that
\begin{align}
    \tau_a \lesssim 1 + \sqrt{\ln a}. 
\end{align}

Consider function $f: \mathbb{R}\mapsto \mathbb{R}$ defined as
\begin{align}
    f(z) = \begin{cases} 0 & 0\leq |z|\leq \tau_a \\ a(|z|-\tau_a) & |z|\geq \tau_a \end{cases}
\end{align}

The key observation is the following inequality for any $z,x$ such that $|x|=a$ or $x = 0$:
\begin{align}
    x z \leq f(z) + \tau_a |x|. 
\end{align}
Hence, we can upper bound
\begin{align}
    \rho_a & \leq \mathbb{E}_{Z\sim \cN(0,1)}[f(Z)] + \tau_a \mathbb{E}_{Q_a} [|Y|]. 
\end{align}

We have
\begin{align}
    \mathbb{E}[f(Z)] & = \mathbb{E}[a(|Z|-\tau_a) \mathbbm{1}(|Z|\geq \tau_a)] \\
    & = a \mathbb{E}[|Z| \mathbbm{1}(|Z|\geq \tau_a)] - a \tau_a \bbP(|Z|\geq \tau_a) \\
    & = a \mathbb{E}[|Z| \mathbbm{1}(|Z|\geq \tau_a)] - \frac{\tau_a}{a} \\
    & \leq a (\mathbb{E}[Z^4])^{1/4} (\bbP(|Z|\geq \tau_a))^{3/4} - \frac{\tau_a}{a} \\
    & \leq (\mathbb{E}[Z^4])^{1/4} \frac{a}{a^{3/2}} - \frac{\tau_a}{a}
\end{align}
as well as
\begin{align}
    \mathbb{E}_{Q_a}[|Y|] = \frac{1}{a^2} a = \frac{1}{a}. 
\end{align}
Hence, 
\begin{align}
    \rho_a & \leq (\mathbb{E}[Z^4])^{1/4} \frac{1}{a^{1/2}} - \frac{\tau_a}{a} + \frac{\tau_a}{a} \\
    & = (\mathbb{E}[Z^4])^{1/4} \frac{1}{a^{1/2}},
\end{align}
which can be made arbitrarily close to zero if we take $a\to \infty$.

\subsection{Proof of Theorem \ref{thm:cascaded}}

We verify the two properties of admissible distances.

First, for any $g\in \cG$, since $\cG\subset \cG'$, 
\begin{align}
    L'(\bbP_{g}, \bbP_{g}) & = \min_{g'\in\cG'} L_1'( \bbP_{g'},\bbP_g) + \lambda L_2'(\bbP_g,\bbP_{g'}) \nonumber \\
    & = 0,
\end{align}
since we can take $g' = g$. There is $L'(\bbP_g, \bbP_g) = L_1'(\bbP_g, \bbP_g) = L_2'(\bbP_g, \bbP_g) = 0$.
Since $\cG \subset \cG' $, for $g_1, g_2 \in \cG$, we know that $g_1, g_2 \in \cG'$. Denote
\begin{align}
    g^* = \argmin_{g'\in\cG'}  L_1'(\bbP_{g'},\bbP_{g_1}) + \lambda L_2'(\bbP_{g_2},\bbP_{g'}).
\end{align}
From the fact that $L_1'$ and $L_2'$ are admissible distances, we have
\begin{align}
L'(\bbP_{g_1}, \bbP_{g_2}) - L'(\bbP_{g_2}, \bbP_{g_2})& =  L_1'(\bbP_{g^*},\bbP_{g_1}) + \lambda L_2'(\bbP_{g_2},\bbP_{g^*})\nonumber \\
& \geq \frac{1}{c_1}L_1( \bbP_{g^*},\bbP_{g_1}) - \frac{c_2}{c_1} + \frac{\lambda}{d_1}L_2(\bbP_{g_2},\bbP_{g^*}) - \frac{d_2}{d_1}.
\end{align}
By the non-negativity of the distance function, we can derive
\begin{align}
    \max(c_1, d_1)\cdot L'(\bbP_{g_1}, \bbP_{g_2}) & \geq L_1(\bbP_{g^*},\bbP_{g_1}) + \lambda L_2(\bbP_{g_2},\bbP_{g^*})  - \max(c_1, d_1)\cdot (\frac{c_2}{c_1} +\frac{d_2}{d_1}). \nonumber \\
    &\geq \min_{g'\in\cG'}  L_1(\bbP_{g'},\bbP_{g_1}) + \lambda L_2(\bbP_{g_2},\bbP_{g'})- \max(c_1, d_1)\cdot (\frac{c_2}{c_1} +\frac{d_2}{d_1}) \nonumber \\
    & = L(\bbP_{g_1}, \bbP_{g_2}) - \max(c_1, d_1)\cdot (\frac{c_2}{c_1} +\frac{d_2}{d_1}).
\end{align}
Hence the first property is satisfied.

Now we verify the second property. For any distribution $\bbP_1, \bbP_2$ and any $g\in\cG$, denote
\begin{align}
g^*_1 = \argmin_{g'\in\cG'}  L_1'( \bbP_{g'},\bbP_1) + \lambda L_2'(\bbP_{g},\bbP_{g'}), \\
g^*_2 = \argmin_{g'\in\cG'}  L_1'(\bbP_{g'},\bbP_2) + \lambda L_2'(\bbP_{g},\bbP_{g'}).
\end{align}
Without loss of generality, assume $L'(\bbP_g, \bbP_1)\geq L'(\bbP_g,\bbP_2)$, then there is
\begin{align}
     |L'(\bbP_{g}, \bbP_1) -  L'(\bbP_{g}, \bbP_2)|& = ( L_1'(\bbP_{g^*_1},\bbP_1) + \lambda L_2'(\bbP_g,\bbP_{g^*_1})) - ( L_1'(\bbP_{g^*_2},\bbP_2) + \lambda L_2'(\bbP_g,\bbP_{g^*_2})) \nonumber \\
     & \leq  ( L_1'(\bbP_{g^*_2},\bbP_1) + \lambda L_2'(\bbP_g,\bbP_{g^*_2})) - ( L_1'(\bbP_{g^*_2},\bbP_2) + \lambda L_2'(\bbP_g,\bbP_{g^*_2})) \nonumber \\
     & = |L_1'(\bbP_{g^*_2},\bbP_1) - L_1'( \bbP_{g^*_2},\bbP_2)| \\
     & = L_1''(\bbP_1, \bbP_2). 
\end{align}

\subsection{Proof of Theorem \ref{thm:strengthtesting}}
\label{pf:strengthtesting}

The right-hand side can be seen via the triangle inequality upon noting that $\hat \bbP_1$ and $\hat \bbP_2$ are independent copies of the empirical distribution $\hat \bbP_X^n$. 
\begin{align}
    \bbE[L(\hat \bbP_1, \hat \bbP_2)] & \leq \bbE[L(\hat \bbP_1, \bbP_X)] + \bbE[L(\bbP_X, \hat \bbP_2)]] \nonumber \\ 
    & = 2\bbE[L(\bbP_X, \hat \bbP_X^n)].
\end{align}
The left hand side can be proved via Jensen's inequality and convexity of $L$, 
\begin{align}
     \bbE[L(\hat \bbP_1, \hat \bbP_2)] & \geq \bbE[L( \bbP_X, \hat \bbP_2)] \nonumber \\
     & = \bbE[L(\bbP_X, \hat \bbP_X^n)].
\end{align}

\subsection{Proof of Theorem \ref{thm:equ}}
\label{appen:degenerate}

It suffices to show the following: for any two Gaussian distributions $X\sim \cN(0,A), Y\sim \cN(0,B)$, we have
\begin{align}
    W_2^2(\cN(0,A),\cN(0,B)) & = \inf_{\pi: \pi_X = \cN(0,A),\pi_Y = \cN(0,B)} \mathbb{E}_\pi \| X - Y\|^2 \\
    & = \min\left \{ \operatorname{Tr}(A) + \operatorname{Tr}(B)-2\operatorname{Tr}(\Psi): \begin{bmatrix} A & \Psi \\ \Psi^T & B \end{bmatrix} \geq 0  \right \} \label{eqn.primal} \\
    & = \sup_{S>0} \operatorname{Tr}( (I-S)A + (I-S^{-1})B) \label{eqn.dual1}\\
    & = \sup_{S\geq 0, R(B)\subset R(S)} \operatorname{Tr}( (I-S)A + (I-S^{\dagger})B) \label{eqn.dual2}  \\
    & = \operatorname{Tr}(A) + \operatorname{Tr}(B) - 2 \operatorname{Tr}((A^{1/2} B A^{1/2})^{1/2}). 
\end{align}

We first consider equation~(\ref{eqn.primal}). Consider jointly Gaussian coupling between $X$ and $Y$ such that the joint covariance matrix of the vector $(X;Y)$ is given by the positive semi-definite matrix
\begin{align}
\begin{bmatrix} A & \Psi \\ \Psi^T & B \end{bmatrix}. 
\end{align}
The corresponding value under this specific coupling is given by
\begin{align}
\operatorname{Tr}(A) + \operatorname{Tr}(B)-2\operatorname{Tr}(\Psi),
\end{align}
which by definition is an upper bound on $W_2^2(\cN(0,A), \cN(0,B))$ for any $\Psi$ such that $\begin{bmatrix} A & \Psi \\ \Psi^T & B \end{bmatrix}\geq 0$. 

We now show that for any $S\geq 0, R(B)\subset R(S)$, 
\begin{align}
  \operatorname{Tr}( (I-S)A + (I-S^{\dagger})B)\leq W_2^2(\cN(0,A), \cN(0,B)). 
\end{align}
Indeed, 
\begin{align}
    W_2^2(\cN(0,A),\cN(0,B)) & = \operatorname{Tr}(A) + \operatorname{Tr}(B) - 2 \sup_\pi \mathbb{E}_\pi[X^T Y] \\
    & \geq \operatorname{Tr}(A) + \operatorname{Tr}(B) - 2  (\mathbb{E}[f(X)] + \mathbb{E}[f^*(Y)]),
\end{align}
where we have used Young's inequality $x^Ty \leq f(x) + f^*(y)$ where $f$ is an arbitrary convex function, and $f^*$ is its convex conjugate. Taking $f(x) = \frac{1}{2}x^T S x, S\geq 0$, we have
\begin{align}
f^*(y) & = \sup_{x\in \mathbb{R}^d} \left( x^T y - f(x) \right) \\
& = \begin{cases} \frac{1}{2} y^T S^{\dagger} y & y\in R(S) \\ \infty & y \notin R(S) \end{cases}. 
\end{align}
Evaluating the expectations leads to the desired inequality. 

Now, we have shown that for any $\Psi$ such that $\begin{bmatrix} A & \Psi \\ \Psi^T & B \end{bmatrix}\geq 0$, any $S\geq 0$ such that $R(B) \subset R(S)$, we have
\begin{align}
    \operatorname{Tr}(A) + \operatorname{Tr}(B) - 2\operatorname{Tr}(\Psi) & \geq W_2^2(\mathcal{N}(0,A), \mathcal{N}(0,B)) \nonumber  \\
    & \geq \operatorname{Tr}(A) + \operatorname{Tr}(B) - \operatorname{Tr}(SA + S^\dagger B). \label{eqn.trtargerineq}
\end{align}

The following lemma constructs an explicit coupling. 
\begin{lemma}\label{lemma.psi0construction}
Define
\begin{align}
S_0 & = B^{1/2} \left( (B^{1/2} A B^{1/2})^{1/2} \right)^{\dagger} B^{1/2} \\
\Psi_0 & = A S_0.
\end{align}
Then, 
\begin{align}
    \begin{bmatrix} A & \Psi_0 \\ \Psi_0^T & B \end{bmatrix}\geq 0,
\end{align}
and 
\begin{align}
    \operatorname{Tr}(\Psi_0) & = \operatorname{Tr}((A^{1/2} B A^{1/2})^{1/2}). 
\end{align}
\end{lemma}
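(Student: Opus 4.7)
The plan is to verify both claims through a singular value decomposition of $B^{1/2} A^{1/2}$, which handles the possibly rank-deficient case uniformly. Write $D = (B^{1/2} A B^{1/2})^{1/2}$, so that $S_0 = B^{1/2} D^{\dagger} B^{1/2}$ and $\Psi_0 = A B^{1/2} D^{\dagger} B^{1/2}$. The trace identity will be a one-line cyclic-trace computation; the PSD claim will follow from a contraction characterization of positive semidefinite block matrices, with the contraction read off from the same SVD.

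For the trace, I first apply the cyclic property:
\begin{align}
\operatorname{Tr}(\Psi_0) = \operatorname{Tr}\bigl( A B^{1/2} D^{\dagger} B^{1/2} \bigr) = \operatorname{Tr}\bigl( (B^{1/2} A B^{1/2}) D^{\dagger} \bigr) = \operatorname{Tr}(D^2 D^{\dagger}) = \operatorname{Tr}(D),
\end{align}
where the last step uses the PSD identity $X^2 X^{\dagger} = X$, valid because $D D^{\dagger}$ is the orthogonal projection onto $R(D) \supset R(D)$. To replace $\operatorname{Tr}((B^{1/2} A B^{1/2})^{1/2})$ by $\operatorname{Tr}((A^{1/2} B A^{1/2})^{1/2})$, I invoke the standard fact that $P P^{\mathrm{T}}$ and $P^{\mathrm{T}} P$ share the same nonzero eigenvalues with multiplicities, applied to $P = B^{1/2} A^{1/2}$: then $P P^{\mathrm{T}} = B^{1/2} A B^{1/2}$ and $P^{\mathrm{T}} P = A^{1/2} B A^{1/2}$, so their positive-definite square roots have equal traces.

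For the positive semidefiniteness, I use the characterization that $\begin{bmatrix} A & \Psi \\ \Psi^{\mathrm{T}} & B \end{bmatrix} \geq 0$ if and only if $A, B \geq 0$ and $\Psi = A^{1/2} K B^{1/2}$ for some contraction $K$ with $\|K\|_{\text{op}} \leq 1$ (a classical fact that follows from the generalized Schur complement and holds without invertibility assumptions). The candidate contraction is $K = A^{1/2} B^{1/2} D^{\dagger}$: direct substitution gives $A^{1/2} K B^{1/2} = A B^{1/2} D^{\dagger} B^{1/2} = \Psi_0$, as required. To verify $\|K\|_{\text{op}} \leq 1$, I take the SVD $B^{1/2} A^{1/2} = U \Sigma V^{\mathrm{T}}$, whence $A^{1/2} B^{1/2} = V \Sigma U^{\mathrm{T}}$ and $D^2 = U \Sigma^2 U^{\mathrm{T}}$, giving $D^{\dagger} = U \Sigma^{\dagger} U^{\mathrm{T}}$. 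Then
\begin{align}
K K^{\mathrm{T}} = V \Sigma U^{\mathrm{T}} \cdot U \Sigma^{\dagger} U^{\mathrm{T}} \cdot U \Sigma^{\dagger} U^{\mathrm{T}} \cdot U \Sigma V^{\mathrm{T}} = V \bigl( \Sigma (\Sigma^{\dagger})^{2} \Sigma \bigr) V^{\mathrm{T}},
\end{align}
and the diagonal matrix $\Sigma (\Sigma^{\dagger})^{2} \Sigma$ has a $1$ in every row/column corresponding to a nonzero singular value and $0$ elsewhere, so $K K^{\mathrm{T}}$ is an orthogonal projection and therefore $\|K\|_{\text{op}} \leq 1$.

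The main obstacle is bookkeeping in the degenerate case: one must be careful that the pseudoinverse identities used (notably $D^2 D^{\dagger} = D$ and $\Sigma (\Sigma^{\dagger})^2 \Sigma$ being a projection) do in fact go through without requiring $A$ or $B$ invertible, and that the PSD contraction characterization of block matrices remains valid in the rank-deficient setting. Both facts are standard, but the SVD approach above makes them transparent and avoids any case analysis on ranges.
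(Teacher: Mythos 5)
Your proof is correct. Both you and the paper reduce the positive-semidefiniteness claim to a statement about the SVD of $B^{1/2}A^{1/2}$ (equivalently $A^{1/2}B^{1/2}$), and both identify the resulting diagonal object $\Sigma(\Sigma^{\dagger})^2\Sigma$ as a subprojection of the identity. The route differs in the choice of block-PSD criterion: the paper invokes the generalized Schur-complement characterization
\[
\begin{bmatrix} A & \Psi \\ \Psi^T & B\end{bmatrix}\geq 0 \iff R(\Psi^T)\subset R(B),\ A-\Psi B^\dagger \Psi^T\geq 0,
\]
and then pushes the pseudoinverse $B^\dagger$ through the SVD computation, whereas you use the equivalent contraction characterization $\Psi = A^{1/2}KB^{1/2}$, $\|K\|_{\mathrm{op}}\leq 1$, which absorbs the range condition automatically and eliminates $B^\dagger$ from the calculation entirely; this is a bit cleaner in the rank-deficient case since one inequality step in the paper's chain (using $B^{1/2}B^\dagger B^{1/2}\leq I$) simply disappears. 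You also write out the trace identity explicitly (cyclicity, $D^2D^\dagger = D$, and that $PP^T$ and $P^T P$ share nonzero spectrum), which the paper states in the lemma but does not actually prove in the body of its argument, so your version is slightly more complete. One trivial slip: the parenthetical ``orthogonal projection onto $R(D)\supset R(D)$'' is a typo — the relevant fact is just that $DD^\dagger$ is the orthogonal projector onto $R(D)$ and hence $D\cdot DD^\dagger = D$ for symmetric $D$.
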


\begin{proof}
We first remind ourselves of the Schur complement characterization of PSD matrices. 
\begin{lemma}\cite{olkin1982distance}\label{lemma.schur}
Suppose $A\geq 0, B\geq 0$. Then, 
\begin{align}
\Sigma = \begin{bmatrix} A & \Psi \\ \Psi^T & B \end{bmatrix} \geq 0
\end{align}
if and only if
\begin{align} \label{eqn.psiomega}
\Psi \in \Omega = \{\Psi: R(\Psi^T)\subset R(B), A - \Psi B^{\dagger} \Psi^T \geq 0\}. 
\end{align}
\end{lemma}

Now we check that $\Psi_0$ is a legitimate covariance matrix using Lemma~\ref{lemma.schur}. Indeed, denoting $C = (B^{1/2} A B^{1/2})^{1/2}$, we have $S_0 = B^{1/2} C^\dagger B^{1/2}$, $\Psi_0 = A B^{1/2} C^\dagger B^{1/2}$, and it is clear that $R(\Psi_0^T) \subset R(B)$. It suffices to verify that 
\begin{align}
A - \Psi_0 B^\dagger \Psi_0^T \geq 0. 
\end{align}
It suffices to verify that
\begin{align}
I \geq A^{1/2 }B^{1/2} C^\dagger B^{1/2} B^\dagger B^{1/2} C^\dagger B^{1/2} A^{1/2}. 
\end{align}

Denote the SVD of $A^{1/2} B^{1/2} = U\Sigma V^T$, we have $B^{1/2} A^{1/2} = V\Sigma U^T$, $C = V \Sigma V^T$. Building on the observation that $B^{1/2} B^\dagger B^{1/2} \leq I$, we have
\begin{align}
A^{1/2 }B^{1/2} C^\dagger B^{1/2} B^\dagger B^{1/2} C^\dagger B^{1/2} A^{1/2} & \leq A^{1/2 }B^{1/2} C^\dagger C^\dagger B^{1/2} A^{1/2} \\
&  = U\Sigma V^T V \Sigma^\dagger V^T V \Sigma^\dagger V^T V \Sigma U^T \\
& \leq U \Sigma \Sigma^\dagger \Sigma^\dagger \Sigma U^T \\
& \leq I,
\end{align}
where in the last step we used the inequality $\Sigma \Sigma^\dagger \Sigma^\dagger \Sigma \leq I$. 
\end{proof}

\begin{lemma}\cite{anderson1978extremal} \label{thm.dualpd}
Suppose $A,B$ are PSD matrices of the same size. Then, 
\begin{align}
\inf_{S>0} \operatorname{Tr}(AS + BS^{-1}) = 2 \operatorname{Tr}((A^{1/2} B A^{1/2})^{1/2}). 
\end{align}
The infimum is achievable for some $S>0$ if and only if $r(A) = r(B) = r(A^{1/2} B A^{1/2})$. Here $r(\cdot)$ is the rank of matrix.
\end{lemma}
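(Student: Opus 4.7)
The plan is to establish the infimum identity via a matrix analogue of the scalar AM--GM inequality $\alpha s + \beta s^{-1} \geq 2\sqrt{\alpha\beta}$, and then characterize attainment by examining the matrix equality condition together with the structural constraint that a minimizer $S^\star$ be globally PD.

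For the lower bound $\operatorname{Tr}(AS + BS^{-1}) \geq 2\operatorname{Tr}((A^{1/2}BA^{1/2})^{1/2})$, I will first assume $A > 0$ so that the substitution $T := A^{1/2} S A^{1/2} > 0$ is a bijection on the PD cone, giving $\operatorname{Tr}(AS) = \operatorname{Tr}(T)$ and $\operatorname{Tr}(BS^{-1}) = \operatorname{Tr}(CT^{-1})$ with $C := A^{1/2} B A^{1/2} \geq 0$. The target reduces to $\operatorname{Tr}(T + CT^{-1}) \geq 2\operatorname{Tr}(C^{1/2})$ for every $T > 0$, which follows immediately from the Hermitian square expansion
\[
0 \leq (T^{1/2} - T^{-1/2} C^{1/2})^*(T^{1/2} - T^{-1/2} C^{1/2}) = T - 2 C^{1/2} + C^{1/2} T^{-1} C^{1/2}
\]
by taking the trace and using its cyclicity. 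For general $A \geq 0$ the bound is recovered by applying the PD case to $A_\epsilon := A + \epsilon I$ and letting $\epsilon \downarrow 0$, using continuity of the matrix square root on bounded subsets of PSD matrices.

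For the value of the infimum, the equality case of the Hermitian square reads $T = C^{1/2}$, i.e.\ $A^{1/2} S A^{1/2} = (A^{1/2} B A^{1/2})^{1/2}$. When both $A$ and $B$ are PD this uniquely yields the explicit PD minimizer $S^\star = A^{-1/2}(A^{1/2} B A^{1/2})^{1/2} A^{-1/2}$, which directly attains the claimed value. For the general PSD case I will build a minimizing sequence by double regularization $A_\epsilon := A + \epsilon I$, $B_\epsilon := B + \epsilon I$ and $S_\epsilon := A_\epsilon^{-1/2}(A_\epsilon^{1/2} B_\epsilon A_\epsilon^{1/2})^{1/2} A_\epsilon^{-1/2} > 0$, then show $\operatorname{Tr}(A S_\epsilon + B S_\epsilon^{-1}) = 2\operatorname{Tr}((A_\epsilon^{1/2} B_\epsilon A_\epsilon^{1/2})^{1/2}) + o(1) \to 2\operatorname{Tr}((A^{1/2} B A^{1/2})^{1/2})$ by continuity of the matrix square root, thereby establishing that the infimum is exactly the claimed value even when it is not attained.

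For the rank characterization of when some PD $S^\star$ actually attains the infimum, the necessity direction uses the equality case: $A^{1/2} S^\star A^{1/2} = (A^{1/2} B A^{1/2})^{1/2}$ together with $S^\star$ being invertible forces $r(A) = r(A^{1/2} S^\star A^{1/2}) = r((A^{1/2} B A^{1/2})^{1/2}) = r(A^{1/2} B A^{1/2})$, and the symmetric equality $B^{1/2} (S^\star)^{-1} B^{1/2} = (B^{1/2} A B^{1/2})^{1/2}$ (obtained from the obvious swap $A \leftrightarrow B$, $S \leftrightarrow S^{-1}$, using $r(A^{1/2} B A^{1/2}) = r(B^{1/2} A B^{1/2})$) yields $r(B) = r(A^{1/2} B A^{1/2})$. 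For sufficiency I will first show that the rank equalities are equivalent to $r(A) = r(B)$ together with $\operatorname{range}(A) \cap \ker(B) = \{0\}$; choosing coordinates in which $A$ has a PD $r\times r$ block in the upper-left and zeros elsewhere, I will construct $S^\star$ as a block matrix whose upper-left block comes from the PD formula above, whose lower-right block is an arbitrary PD $(d-r)\times(d-r)$ matrix, and whose off-diagonal blocks are tuned so that $S^\star$ sends $\operatorname{range}(A)$ onto a subspace transverse to $\ker(B)$. The main obstacle will be precisely this construction in the singular case: the naive formula $A^{-1/2}(A^{1/2} B A^{1/2})^{1/2} A^{-1/2}$ is undefined, and the off-diagonal entries of $S^\star$ must be chosen so that $(S^\star)^{-1}$ sends $\operatorname{range}(B)$ back into $\operatorname{range}(A)$, which is exactly what the assumption $\operatorname{range}(A) \cap \ker(B) = \{0\}$ (equivalently, the rank condition) enables.
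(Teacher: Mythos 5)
The lemma is cited from \cite{anderson1978extremal} and is not proved in the compiled paper (a sketch exists only as commented-out source, relying on a simultaneous-congruence block reduction of the pair $(A,B)$ together with a three-case rank analysis). Your Hermitian-square route for the lower bound is a clean, independent argument, and your block construction for sufficiency is the right idea --- although the lower-right block of $S^\star$ cannot be an arbitrary PD matrix; it must satisfy $S_{22} > S_{12}^T S_{11}^{-1} S_{12}$ to keep $S^\star > 0$.

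The genuine gap is in the necessity direction. You invoke the equality case $A^{1/2}S^\star A^{1/2} = (A^{1/2}BA^{1/2})^{1/2}$ for general PSD $A$, but your Hermitian square only establishes it when $A > 0$: for singular $A$ the matrix $T = A^{1/2}SA^{1/2}$ is singular, so $T^{-1/2}$ does not exist and the factor $T^{1/2}-T^{-1/2}C^{1/2}$ is undefined. Your $\epsilon$-regularization recovers the lower-bound inequality in the singular case but gives no characterization of equality, so the chain $r(A)=r(A^{1/2}S^\star A^{1/2})=r(A^{1/2}BA^{1/2})$ and its $A\leftrightarrow B$ swap are unjustified precisely when both $A$ and $B$ are singular, which is the only regime in which the rank condition can actually fail. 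The fix is to derive the equality case from first-order optimality rather than from the completed square: $f(S)=\operatorname{Tr}(AS+BS^{-1})$ is convex on the open cone $\{S>0\}$ (linearity of $\operatorname{Tr}(AS)$, operator convexity of $S\mapsto S^{-1}$, and monotonicity of $\operatorname{Tr}(B\,\cdot)$), so a PD minimizer $S^\star$ is an interior critical point and satisfies $A - (S^\star)^{-1}B(S^\star)^{-1} = 0$, i.e. $S^\star A S^\star = B$. Squaring $A^{1/2}S^\star A^{1/2}$ then gives $A^{1/2}BA^{1/2}$, whence $A^{1/2}S^\star A^{1/2}=(A^{1/2}BA^{1/2})^{1/2}$ by uniqueness of the PSD square root, and your rank chain goes through. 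Alternatively, one can run the Hermitian square for general PSD by inserting a unitary rotation, expanding $0 \leq (S^{1/2}A^{1/2} - S^{-1/2}B^{1/2}U)^*(S^{1/2}A^{1/2} - S^{-1/2}B^{1/2}U)$, tracing, and optimizing over $U$ via the polar decomposition of $A^{1/2}B^{1/2}$, which never requires inverting $T$. Either patch should be stated explicitly before the necessity direction can be considered complete.
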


Now we can prove the main results. We have the following chain of inequalities and equalities as a consequence of (\ref{eqn.trtargerineq}), Lemma~\ref{lemma.psi0construction}, and Lemma~\ref{thm.dualpd}. 
\begin{align}
    2 \operatorname{Tr}((A^{1/2} B A^{1/2})^{1/2}) & = \inf_{S>0} \operatorname{Tr}(AS + BS^{-1})\\
    & \geq \inf_{R(B) \subset R(S), S\geq 0} \operatorname{Tr} (A S  + B S^{\dagger} )  \\
    & \geq \sup_{\Psi: \begin{bmatrix} A & \Psi \\ \Psi^T & B \end{bmatrix}\geq 0} 2\operatorname{Tr}(\Psi) \\
    & \geq 2 \operatorname{Tr}(\Psi_0) \\
    & = 2 \operatorname{Tr}((A^{1/2} B A^{1/2})^{1/2}).  
\end{align}

\subsection{Proof of Theorem \ref{thm:instability}}
\begin{comment}
The proof of instability relies on the following lemma.
\begin{lemma}[Chetaev instability theorem~\cite{vidyasagar2002nonlinear}]\label{lem.chetaev}
For the system $\frac{dx}{dt}=f(x)$, if there exists an equilibrium point $x^*$ and a continuously differentiable function $V(x)$ such that
\begin{enumerate}
    \item $x^*$ is a boundary point of the set $G = \{x|V(x)>0 \}$.
    \item there exists a neighborhood $U$ of $x^*$ such that $\frac{dV}{dt}(x)>0$ for all $x\in G \cap U$.
\end{enumerate}
then $x^*$ is a locally unstable equilibrium point of the system.
\end{lemma}
Then we prove the main theorem.
\end{comment}
 The optimal solutions in~(\ref{optimization1}) when $r=1$ can be written in the following form:
\begin{align}
K & = Q \Sigma Q^T \\
U^* & = \lambda_1 v_1 v_1^T \\
A^* & = v_1 v_1^T,
\end{align}
where $Q = [v_1,v_2,\ldots,v_d]$, and $\Sigma = \text{diag}(\lambda_1,\lambda_2,\ldots,\lambda_d)$ ranks the eigenvalues of $K$ in a descending order. In other words, $U^*$ is the $1$-PCA of $K$, and $A^*$ shares the linear subspace as $U^*$ with the eigenvalue being one. %It naturally leads to the following \emph{parameter sharing} GAN architecture. 

We show the optimization problem (\ref{optimization1}) is in general not locally asymptotically stable with $r(U) \leq 1$. We parametrize  $U = vv^T$, i.e. 
\begin{equation}
\label{equ:1dopt}
\min_{v} \sup_{A>0} \operatorname{Tr}( (I-A)K + (I-A^{-1})vv^T).
\end{equation}
Taking the derivative of $A$ and $v$, the gradient flow differential equation is given by
\begin{align}
\label{equ:gradientflow}
\frac{d}{dt} \begin{bmatrix}
A \\v 
\end{bmatrix} & = \begin{bmatrix}
-K + A^{-1}UA^{-1}\\-2(I-A^{-1})v
\end{bmatrix}. 
\end{align}

\begin{comment}
We construct Lyapunov function as

\begin{equation}
V(A,U) = \frac{1}{2} \|A -A_*\|_2^2 + \frac{1}{2} \|v -v_*\|_2^2
\end{equation}

Here $(A_*,v_*)$ is the optimal solution for optimization problem (\ref{equ:1dopt}).

It is obvious that 
\begin{equation}
\frac{\partial V}{\partial A} = A - A_*, 
\frac{\partial V}{\partial v} = v - v_*
\end{equation}

Without loss of generality, assume $A$ is symmetric. By gradient flow in equation (\ref{equ:gradientflow}), we know $A$ will always maintain symmetric since both $A$ and $U$ are symmetric.

Thus we have
\begin{align}
\frac{d}{dt} V & =  \operatorname{Tr}((A-A_*)(-K+A^{-1}UA^{-1})) \nonumber \\
& \quad - \operatorname{Tr}((v-v_*)^T(I-A^{-1})v)\nonumber \\
& = -\operatorname{Tr}(AK) + \operatorname{Tr}(UA^{-1}) + \operatorname{Tr}(A_*K) - \operatorname{Tr}(A_*A^{-1}UA^{-1}) \nonumber \\
\label{equ:lyapunov_instability}
& \quad  - \operatorname{Tr}((v-v_*)^T(I-A^{-1})v)
\end{align}
\end{comment}

Consider the special case when 
\begin{align}
K = \begin{bmatrix} 1 & 0 \\ 0 & 0 \end{bmatrix}. 
\end{align}
We can derive the optimal solution as 
\begin{align}
A^* = \begin{bmatrix} 1 & 0 \\ 0 & 0 \end{bmatrix}, 
v^* = \begin{bmatrix} 1 \\ 0 \end{bmatrix}
\end{align}
Note that $A^*$ is not inside the feasible region of positive definite matrices, but at the boundary of that. However, one can verify that the gradient flow is $0$ for the optimal solution if at the boundary we interpret the matrix inverse as matrix pseudo-inverse. We are interested in whether the system will leave the optimal point after being slightly perturbed.
Consider $A, U$ is searching within
\begin{align}
A = \begin{bmatrix} a_{11} & a_{12}  \\ a_{21}  & a_{22} \end{bmatrix}, 
v = \begin{bmatrix} v_1 \\ v_2\end{bmatrix}
\end{align}

Note that $A$ must be symmetric, thus we also have $a_{12}=a_{21}$. Denote $x = [vec(A), v]$ as the system state vector, and $x^* = [vec(A^*), v^*]$ as the equilibrium point of system defined in equation (\ref{equ:gradientflow}). We can compute the derivative of $a_{11}$, $a_{12}$, $a_{22}$ and $v$ as follows.
\begin{align}
    &\frac{d a_{11}}{dt} = 
    \left (\frac{a_{22}v_1-a_{12}v_2}{a_{11}a_{22}-a_{12}^2}\right )^2 -1 \\
    &\frac{d a_{12}}{dt} = \frac{(a_{22}v_{1}-a_{12}v_{2})(a_{11}v_2-a_{12}v_1)}{(a_{11}a_{22}-a_{12}^2)^2} \\
    &\frac{d a_{22}}{dt} = 
    \left (\frac{a_{11}v_2-a_{12}v_1}{a_{11}a_{22}-a_{12}^2}\right )^2  \label{equ:da22}\\
     &\frac{d v_1}{dt} = 2\frac{a_{22}v_1-a_{12}v_2}{a_{11}a_{22}-a_{12}^2} -2v_1\\
    &\frac{d v_2}{dt} = 2\frac{a_{11}v_2-a_{12}v_1}{a_{11}a_{22}-a_{12}^2} -2v_2
\end{align}
Thus for any $\delta>0$, we can intialize $v=(1,0)$, $a_{11}=1, a_{12}=0, a_{22}=\delta/2$. Under this case,  all the derivatives equal $0$.
Then $\|x(0)-x^*\|_2=\delta/2<\delta$, but $\lim_{t\rightarrow \infty}\|x(t)-x^* \|_2=\delta/2\neq0$. Since $\delta$ can be arbitrarily small, it violates the definition of local asymptotic stability in Appendix $\ref{appendix.def}$. We can  conclude that the system is not locally asymptotically stable.

\subsection{Proof of Theorem \ref{thm:stability}}

We have assumed that $r = 1$. We assume without loss of generality that $K\neq 0$. We parametrize $U = b v v^T, A = \lambda v v^T, \|v\|_2 = 1, b>0, \lambda>0$. In this case, the objective function reduces to
\begin{align}
\operatorname{Tr}(K) - \lambda v^T K v + b - \frac{b}{\lambda}. 
\end{align}

Denote 
\begin{align}
f(v,b,\lambda) = \operatorname{Tr}(K) - \lambda v^T K v + b - \frac{b}{\lambda},
\end{align}
we now analyze the alternating gradient descent flow for this problem. The gradient flow differential equation is given by
\begin{align}
\frac{d}{dt} \begin{bmatrix}
v \\ b \\ \lambda
\end{bmatrix} & = \begin{bmatrix}
-\frac{\partial f}{\partial v} \\ -\frac{\partial f}{\partial b} \\ \frac{\partial f}{\partial \lambda} 
\end{bmatrix}. 
\end{align}

Note that we have constrained $\|v\|_2 = 1$, in other words, $v$ lies in the unit sphere. Suppose the unconstrained derivative of $f$ at $v$ is $g$, then the gradient of $f$ constrained on the Stiefel manifold~\cite{tagare2011notes} $\|v\|_2 = 1$ is $g - v g^T v$. Indeed, the direction $g - v g^T v$ is orthogonal to $v$. 

Concrete computation shows when $b>0, \lambda>0$,
\begin{align}
\frac{d}{dt} \begin{bmatrix}
v \\ b \\ \lambda
\end{bmatrix} & = \begin{bmatrix}
2\lambda Kv - 2\lambda (v^T Kv)v 
\\ \frac{1}{\lambda}-1 \\ -v^T Kv + \frac{b}{\lambda^2}. 
\end{bmatrix}. 
\end{align}
when $b = 0$, $\frac{db}{dt} = \left( \frac{1}{\lambda}-1\right) \vee 0$, and when $\lambda = 0$, $\frac{d\lambda}{dt} = \left( -v^T K v +\frac{b}{\lambda^2} \right) \vee 0$. 

The saddle point we aim for is 
\begin{align}
\begin{bmatrix}
v  \\ b \\ \lambda
\end{bmatrix} & = \begin{bmatrix}
\pm v_1 \\ v_1^T K v_1 \\ 1,
\end{bmatrix} =  \begin{bmatrix}
\pm v_1 \\ \lambda_1 \\ 1,
\end{bmatrix} 
\end{align}
Here $v_1$ is an eigenvector corresponding to the largest eigenvalue of $K$, and $\lambda_1$ is the largest eigenvalue of $K$, which is assumed to be positive. We also assume that at the beginning of the optimization, we initialize $v$ such that $v_1^T v \neq 0$. 
 
Hence, we construct the Lyapunov function $L(v,b,\lambda)$ as
\begin{align}
L(v,b,\lambda) & =  \left( \frac{1}{8} + \lambda_1^2 \right) \ln \left( \frac{1}{|v^T v_1|} \right)    + \frac{1}{2}(b-\lambda_1)^2 + \frac{1}{2}(\lambda-1)^2 \nonumber \\
& \quad + \frac{1}{8}\left( \frac{1}{\lambda}-1 \right)^2 + \frac{1}{8}\left( \frac{b}{\lambda^2} - v^T K v \right)^2. 
\end{align}
Note that if $\lambda \to 0^+$, $L \to \infty$, so if we are able to prove the descent property of $L$ then $\lambda$ will not hit zero in the trajectory. 

Solving for the gradient of $L$ and denoting $v(t),b(t),\lambda(t)$ as $v,b,\lambda$ when $b>0, \lambda>0$, we have
\begin{align}
& \frac{d}{dt}L(v(t), b(t), \lambda(t)) \nonumber \\
&\quad = - \left( \frac{1}{8} + \lambda_1^2 \right)  \frac{v_1^T}{ v^T v_1} (2\lambda K v - 2\lambda v^T K v v ) \nonumber \\
& \qquad + (b - \lambda_1 )\left(\frac{1}{\lambda}-1\right) \nonumber \\
& \qquad +  (\lambda-1)\left( -v^T K v + \frac{b}{\lambda^2} \right) \nonumber \\
& \qquad + \frac{1}{4} \left( \frac{1}{\lambda}-1\right) \frac{-1}{\lambda^2} \left( \frac{b}{\lambda^2} - v^T K v \right) \nonumber \\
& \qquad + \frac{1}{4}\left( \frac{b}{\lambda^2}-v^T K v \right) \frac{1}{\lambda^2} \left( \frac{1}{\lambda}-1\right) + \frac{1}{4}\left(v^T K v - \frac{b}{\lambda^2} \right) 2 v^T K \left( 2\lambda K v - 2 \lambda (v^T K v) v \right) \nonumber \\
& \qquad + \frac{1}{4}\left( \frac{b}{\lambda^2} - v^T K v \right) \frac{-2b}{\lambda^3} \left( \frac{b}{\lambda^2} - v^T K v \right). 
\end{align}
The case of $b = 0, \lambda>0$ can also be solved and it is straightforward to show that $\frac{dL}{dt}\leq 0$ for any $v$ such that $v^T v = 1, b = 0, \lambda>0$.

Taking the part corresponding to $b$ out, we have
\begin{align}
& b\left( \frac{1}{\lambda}-1 + \frac{1}{\lambda} - \frac{1}{\lambda^2} \right) \nonumber  -\frac{b}{4\lambda^2} 2 v^T K \left( 2\lambda K v - 2\lambda (v^T K v) v \right) - \frac{b}{2\lambda^3} \left( \frac{b}{\lambda^2} - v^T K v \right)^2 \\
& = -b\left(1 - \frac{1}{\lambda} \right)^2  - \frac{b}{\lambda} \left( v^T K^2 v - (v^T K v)^2\right)  - \frac{b}{2\lambda^3} \left( \frac{b}{\lambda^2} - v^T K v \right)^2 \\
& \leq 0, 
\end{align}
where equality holds if and only if $\lambda = 1, v^T K^2 v = (v^T K v)^2, b = v^T K v$, here we have used the inequality that $v^T K^2 v \geq (v^T K v)^2$.

We decompose $v$ as 
\begin{align}
v = \sum_{i = 1}^d a_i v_i,
\end{align}
where $\{v_1,v_2,\ldots,v_d\}$ are the eigenvectors of matrix $K$. Here $\sum_i a_i^2 = 1$. 

The terms that remain in the derivative of $L$ is
\begin{align}
&\quad - \left( \frac{1}{8} + \lambda_1^2 \right)  \frac{v_1^T}{ v^T v_1} (2\lambda K v - 2\lambda v^T K v v )  + \lambda_1 \left( 1- \frac{1}{\lambda} \right) \nonumber \\
& \qquad + (1-\lambda) v^T K v + \frac{1}{4}\left(v^T K v  \right) 2 v^T K \left( 2\lambda K v - 2 \lambda (v^T K v) v \right),
\end{align}
which is equivalent to
\begin{align}
& -\left( \frac{1}{8} + \lambda_1^2 \right) 2\lambda \left( \lambda_1 - \sum_i \lambda_i a_i^2 \right)  + \lambda_1 - \frac{\lambda_1}{\lambda} + \sum_i \lambda_i a_i^2 - \lambda \sum_i \lambda_i a_i^2 \nonumber \\
& \qquad + \lambda \left( \sum_i \lambda_i a_i^2 \right) \left( \sum_i \lambda_i^2 a_i^2 - (\sum_i \lambda_i a_i^2) \right). 
\end{align}
Hence, it suffices to show that
\begin{align}
& \lambda_1 + \sum_i \lambda_i a_i^2 \nonumber \\
& \quad \leq \frac{\lambda_1}{\lambda} + \lambda \left( \sum_i \lambda_i a_i^2 + 2 \left( \frac{1}{8} + \lambda_1^2 \right)(\lambda_1 - \sum_i \lambda_i a_i^2) -  \left( \sum_i \lambda_i a_i^2 \right) \left( \sum_i \lambda_i^2 a_i^2 - (\sum_i \lambda_i a_i^2) \right) \right). 
\end{align}

To simplify notation, we introduce
\begin{align}
x & = \sum_i \lambda_i a_i^2 \\
y & = \sqrt{\sum_i \lambda_i^2 a_i^2},
\end{align}
clearly we have
\begin{align}
\lambda_1 \geq y \geq x \geq 0. 
\end{align}
The inequality that we need to prove is
\begin{align}
\lambda_1 + x \leq \frac{\lambda_1}{\lambda} + \lambda \left( x + 2\left( \frac{1}{8} + \lambda_1^2 \right)(\lambda_1-x) - x(y^2-x^2) \right). 
\end{align}

We note that 
\begin{align}
& x + 2\left( \frac{1}{8} + \lambda_1^2 \right)(\lambda_1-x) - x(y^2-x^2) \nonumber \\
& \geq x+ 2\left(\frac{1}{8} + \lambda_1^2 \right) (\lambda_1-x) - x(y+x)(\lambda_1-x) \\
& \geq x + (\lambda_1-x) \left( \frac{1}{4} + 2\lambda_1^2 -x(y+x) \right) \\
& \geq x + (\lambda_1-x) \left( \frac{1}{4} + 2\lambda_1^2 - \lambda_1(2\lambda_1) \right) \\
& \geq x + \frac{1}{4}(\lambda_1-x) \\
& >0,
\end{align}
since we have assumed that $\lambda_1>0$. 

Minimizing the right hand side with respect to $\lambda$ shows that it suffices to show that
\begin{align}
(\lambda_1+x)^2 \leq 4\lambda_1 \left( x + 2\left( \frac{1}{8} + \lambda_1^2 \right)(\lambda_1-x) - x(y^2-x^2) \right),
\end{align}
which is equivalent to
\begin{align}
(\lambda_1-x)^2 & \leq 8\lambda_1 \left( \frac{1}{8} + \lambda_1^2 \right)(\lambda_1-x) - 4\lambda_1 x(y+x)(y-x). 
\end{align}
Since $\lambda_1\geq y$, it suffices to show that
\begin{align}
(\lambda_1-x)^2 & \leq 8\lambda_1 \left( \frac{1}{8} + \lambda_1^2 \right)(\lambda_1-x) - 4\lambda_1 x(y+x)(\lambda_1-x). 
\end{align}
Clearly, this inequality holds when $\lambda_1 = x$. If $\lambda_1>x$, it suffices to show that
\begin{align}
\lambda_1-x& \leq \lambda_1 (1 + 8 \lambda_1^2) -4\lambda_1 x(y+x). 
\end{align}
It is true since 
\begin{align}
\lambda_1 (1 + 8 \lambda_1^2) -4\lambda_1 x(y+x) & \geq \lambda_1 + 8 \lambda_1^3 - 4 \lambda_1 \lambda_1(\lambda_1 + \lambda_1) \\
& = \lambda_1 \\
& \geq \lambda_1-x. 
\end{align}

The above chain of inequalities holds equality if and only if $\lambda_1 = x$. 

Hence, we have showed that the derivative of $L$ is strictly negative in the regime $v^T v = 1, \lambda>0, b\geq 0$ except when $\lambda = 1, v^T K v = \lambda_1, b \in \{0, \lambda_1\}$. The largest invariant set of $(v,b,\lambda)$ satisfying the condition above is $\{(v,b,\lambda): v^T v = 1, \lambda = 1, b = \lambda_1, v^T K v = \lambda_1\}$. 

To apply the Lasalle Theorem~\cite{lasalle1960some}, it suffices to show that over the trajectory of $(v,b,\lambda)$ is constrained in a compact set, and over that set $\frac{dL}{dt}\leq 0$ and $L(v,b,\lambda)>0$ except for $\lambda = 1, b = v^T K v = \lambda_1$. Indeed, the compact set can be chosen to be
\begin{align}
\{(v,b,\lambda): v^T v = 1, b\in [0,b_c], \lambda \in [\lambda_{c_1}, \lambda_{c_2}]\}.
\end{align}
Indeed, the constraint $b_c$ comes from the fact that $L\to \infty$ as $b\to \infty$, hence the trajectory of $b$ is bounded by a constant $b_c$ which depends on the initial condition. The constraint $\lambda_{c_1}, \lambda_{c_2}$ comes from similar considerations since $L\to \infty$ whenever $\lambda \to 0^+$ or $\lambda \to \infty$. 

Furthermore, one can check that $L$ is radially unbounded, i.e. $L(v,b,\lambda)\rightarrow \infty$ as $\|(v,b,\lambda)\| \rightarrow \infty$. 
Invoking~\cite[Corollary 5.2]{bhat2003nontangency}, we know that the dynamics converge to a single point in the set $\{(v,b,\lambda): v^T v = 1, \lambda = 1, b = \lambda_1, v^T K v = \lambda_1\}$ since every point in this set is globally asymptotically stable. 

\end{document}